\icmltitlerunning{Incentivized Learning in Bandit Games}
\begin{document}

\twocolumn[
\icmltitle{Incentivized Learning in Principal-Agent Bandit Games}



\icmlsetsymbol{equal}{*}

\begin{icmlauthorlist}
\icmlauthor{Antoine Scheid}{CMAP}
\icmlauthor{Daniil Tiapkin}{CMAP,CNRSSACLAY}
\icmlauthor{Etienne Boursier}{INRIASACLAY}
\icmlauthor{Aymeric Capitaine}{CMAP}
\icmlauthor{El Mahdi El Mhamdi}{CMAP}
\icmlauthor{Eric Moulines}{CMAP}
\icmlauthor{Michael I. Jordan}{UCBERKELEY,INRIAPARIS}
\icmlauthor{Alain Durmus}{CMAP}
\end{icmlauthorlist}

\icmlaffiliation{CMAP}{Centre de Mathématiques Appliquées – CNRS – École polytechnique – Institut Polytechnique de
Paris – route de Saclay 91128 Palaiseau cedex}
\icmlaffiliation{CNRSSACLAY}{Université Paris-Saclay, CNRS, Laboratoire de mathématiques d'Orsay, 91405, Orsay, France}
\icmlaffiliation{INRIASACLAY}{INRIA, Universite Paris Saclay, LMO, Orsay, France}
\icmlaffiliation{UCBERKELEY}{University of California, Berkeley}
\icmlaffiliation{INRIAPARIS}{Inria, Ecole Normale Sup\'erieure, PSL Research University}

\icmlcorrespondingauthor{Antoine Scheid}{antoine.scheid@polytechnique.edu}

\icmlkeywords{Bandits, Incentives, Contextual Bandits, Decision-Making under Uncertainty, Mechanism Design, Principal-Agent}

\vskip 0.3in
]



\printAffiliationsAndNotice{}  

\begin{abstract}
\looseness=-1 This work considers a repeated principal-agent bandit game, where the principal can only interact with her environment through the agent. The principal and the agent have misaligned objectives and the choice of action is only left to the agent. However, the principal can influence the agent's decisions by offering incentives which add up to his rewards. The principal aims to iteratively learn an incentive policy to maximize her own total utility. This framework extends usual bandit problems and is motivated by several practical applications, such as healthcare or ecological taxation, where traditionally used mechanism design theories often overlook the learning aspect of the problem. We present nearly optimal (with respect to a horizon $T$) learning algorithms for the principal's regret in both multi-armed and linear contextual settings. Finally, we support our theoretical guarantees through numerical experiments.
\end{abstract}

\section{Introduction}

\looseness=-1 Decision-making under uncertainty is a ubiquitous feature of real-world applications of machine learning, arising in domains as diverse as recommendation systems \cite{li2010contextual}, healthcare \cite{yu2021reinforcement}, and agriculture \cite{evans2017data}.  Multi-armed bandits provide a classical point of departure for decision-making under uncertainty in these settings \cite{thompson1933likelihood,woodroofe1979one, lattimore2020bandit, slivkins2019introduction}. The basic bandit solution involves an agent who learns which decisions yield high rewards via repeated experimentation. Real-world decision-making problems, however, often present challenges that are not addressed in this simple optimization framework.  These include the challenge of scarcity when there are multiple decision-makers, issues of misaligned objectives, and problems arising from information asymmetries and signaling.  The economics literature addresses these issues through the design of game-theoretic mechanisms, including auctions and contracts \citep[see, e.g.,][]{myerson1989mechanism, laffont2009theory}, aiming to achieve favorable outcomes despite agents' self-interest and limited information set. Unfortunately, the economics literature tends to neglect the learning aspect of the problem, often assuming that preferences, or distributions on preferences, are known a priori.  Our work focuses on the blend of mechanism design and learning.  We study a principal-agent model with information asymmetry and we develop a learning framework in which the principal aims to uncover the true preferences of the agent while optimizing her own gains.

Building on the work of \citet[][]{dogan2023estimating,dogan2023repeated}, we
consider a repeated game between a principal and an agent, where, at each round, the principal proposes an incentive transfer associated with any action. The agent greedily chooses the action that maximizes the sum of his expected reward and the incentive. The goal of the principal is to learn an incentive policy which maximizes her own utility over time, taking into account both the rewards that she reaps and the costly incentives that she offers.

Our contributions are as follows:
\vspace{-0.35cm}
\begin{itemize}[itemsep=-2pt, leftmargin=8pt]
    \item \looseness=-1 We present the \texttt{Incentivized Principal-Agent Algorithm} (\texttt{IPA}) framework, which comprises two steps. First, \texttt{IPA} estimates the minimal level of incentive needed to make the agent select any desired action. Subsequently, forming an upper estimate of these incentives, $\algU$ uses a regret-minimization algorithm in a black-box fashion. The overall algorithm achieves both nearly optimal distribution-free and instance-dependent regret bounds.
    \item We extend $\algU$ to  the linear contextual bandit setting \citep[see, e.g.,][]{abe1999associative, auer2002using, dani2008stochastic}, significantly broadening its applicability in various applications. Here $\calgU$ achieves a $\cO(d \sqrt{T} \log(T))$ regret bound. We emphasize that $\calgU$ is the first known algorithm for incentivized learning in a contextual setting. Moreover it matches, up to logarithmic factors, the  minimax lower bound $\Omega(d \sqrt{T})$ for the easier problem of stochastic linear bandits~\citep{rusmevichientong2010linearly}.
\end{itemize}

\section{Related Work}

While classical work on bandit problems and reinforcement learning has predominantly focused on single-agent scenarios, many emerging applications require considering  multiple agents.  Recent literature has accordingly begun to study frameworks for learning in multi-agent multi-armed bandit settings \citep[see, e.g.,][]{boursier2022survey}. 

\citet{mansour2020bayesian} discuss how a social planner can simultaneously learn and influence self-interested agents' decisions through Bayesian-Incentive Compatible (BIC) recommendations. The rationale behind this notion is that a BIC recommendation guarantees to each agent a maximal reward given the past, at any step. The social planner objective is to design BIC recommendations that maximize the global welfare. \citet{mansour2020bayesian} propose an algorithm for solving this problem in both multi-armed and contextual bandit settings. Notably, their work turns any black-box bandit algorithm into a BIC algorithm. For this problem, \citet{sellke2021price} show that Thompson sampling can be made BIC, with a sufficient number of initial observations. \citet{hu2022incentivizing} extend this work to the combinatorial bandits problem.

Another line of work due to  \citet{banihashem2023bandit} and \citet{simchowitz2023exploration} studies how a principal can provide recommendations to agents so that they explore all reachable states in a Markov Decision Process (MDP). To this end, the principal supplies the agents with a modified history, with the modifications carefully chosen to retain the agents' trust. This line of work is closely related to the online Bayesian persuasion literature \citep[see, e.g.,][]{castiglioni2020online}, which dates to the seminal work of \citet{kamenica2011bayesian}. Online Bayesian persuasion consists of the principal sequentially influencing agents' decision with signals in her own interest.

In these works, the information asymmetry favors the principal, so that the principal can influence the agent's decision at little or no cost. In our problem, the agent instead has perfect knowledge of the problem parameters and his action can only be influenced through utility transfers.

\looseness=-1
\citet{ben2023principal} study a principal and an agent sharing a common Markov Decision Process (MDP) with different reward functions. Similarly to our setting, at each step the action is chosen by an agent with full knowledge of the game. The objective of the principal is to minimize her cumulative regret under a constraint on the incentive budget. Despite extending our setup to MDPs, \citet{ben2023principal} do not consider uncertainty in the principal's side, turning the game into an optimization problem.

\looseness=-1
Related issues arise in the study of dynamic pricing \citep{den2015dynamic, javanmard2019dynamic, mao2018contextual, golrezaei2023incentive}. Our work diverges from dynamic pricing in that in our case the principal not only faces uncertainty with respect to the agent's utility, but also with respect to her own utility.

\looseness=-1
Some works study similar principal-agent games but with a specific focus on the achievable optimality of the contract \citet{cohen2022learning} or a specific stochastic model for the agent's behavior \citet{conitzer2006learning}.

\looseness=-1 Finally, our study is inspired by the work of \citet{dogan2023repeated} to explore the principal's learning mechanism within a principal-agent setting. They propose an $\epsilon$-Greedy algorithm with suboptimal regret guarantees. In particular, it suffers an exponential dependence in the number of actions. In contrast, we provide both distribution free and instance-dependent regret bounds that nearly match the known lower bounds. Also, we extend our approach to the non-trivial contextual case. Finally, \citet{dogan2023estimating} extend \citep{dogan2023repeated}, taking into account presence of uncertainty on the agent's side. 
\section{Multi-Armed Principal-Agent Learning}\label{sec:IPA}

\paragraph{Setup.} \looseness=-1 We consider a repeated principal-agent game. A contextual version of the game is introduced in \cref{sec:contextual}. The action set for the agent (or set of arms) is fixed to be $\cA\coloneqq [K]=\{1, \ldots, K\}, K\in \N^\star$. We assume that the agent's rewards, $\s=(\s_1, \ldots, \s_K)\in \R_+^K$, are deterministic and that they are known to the agent and unknown to the principal.

For each action $a \in [K]$, the rewards of the principal are given by a random, \iid\ sequence $(X_a(t))_{t\in [T]}$, where $X_a(t) \sim \nu_a$ and $\nu_a$ is the arm distribution. The distributions $\{\nu_a\}_{a=1}^K$ are unknown to the principal and are learned as a consequence of the following principal-agent interaction.

At each round $t\in[T]$, where $T$ is the game horizon, the principal proposes an incentive $\icv(t) \in \R_+$ associated with an action $a_t \in [K]$. The agent then greedily chooses action $A_t$ maximizing his utility:
\begin{equation}
\label{eq:def_A_t}
A_t \in \argmax_{a \in [K]} \{ \s_a + \indi{a_t}(a)\icv(t)\} \eqsp,
\end{equation}
breaking ties arbitrarily.\footnote{Note that the related works \citep{ben2023principal,simchowitz2023exploration} assume a tie-breaking in favor of the principal, an assumption that we do not need here.} 
The principal then observes the arm $A_t$ selected by the agent, as well as her reward given by $X_{A_t}(t)$. The utility of the principal on the round is given by $X_{A_t}(t) -\indi{a_t}(A_t)\icv(t)$. For any $a \in [K]$, the principal's mean reward is $\theta_a \coloneqq \E\parentheseDeux{X_a(t)}$. See \Cref{table:notations} in \Cref{appendix:notations} for a summary of the main definitions used in this section.

The sequence of incentives $(a_t, \icv(t))_{t\in [T]}$ defines a sequence of actions $(A_t)_{t\in [T]}$ chosen by the agent. The goal of the principal is to maximize her total utility. On a single round, she thus aims at proposing an optimal incentive $\icv^\optimal$ on an arm $a^\optimal \in [K]$, which solves
\begin{equation}
\label{eq:def_objective_princple_0}
\begin{gathered}
 \textstyle \text{maximize} \,\,  \int x \nu_a(\rmd x)-\icv \text{ over } \icv \in \R_+,a \in [K] \\ \text{ such that } \, a \in \argmax_{a' \in [K]}\left\{ \s_{a'} + \1_{a}(a')\icv \right\} \eqsp.
\end{gathered}
\end{equation}
This is consistent with the conventional framework for utility in bandit problems, where we subtract the cost of incentives to the principal. Here, the principal's influence is exerted solely through the strategic use of incentives, carefully designed to guide the agent's behavior. We define $\mu^\star \coloneqq \int x \nu_{a^\optimal}(\rmd x) - \icv^\optimal$. Maximizing the total utility of the principal over $T$ rounds is equivalent to minimizing the expected regret, defined as
\begin{equation}
\label{equation:regret_definition_non_contextual}
\regret(T) \coloneqq T\, \mu^\star  - \sum_{t=1}^T \E\parentheseDeux{X_{A_t}(t) -\indi{a_t}(A_t) \icv(t) } \eqsp.
\end{equation}

\textbf{Remark.} In the prior work of~\citet{dogan2023repeated}, incentives were defined as a vector of size $K$, where the incentivize associated with an action $a\in [K]$ was denoted $\icv_a$. In our setting, since the goal of the principal is to make sure that the agent picks one prescribed action, it is enough to consider a restricted family of the form $\icv_a = \indi{a_t}(a) \icv(t)$, where $(a_t, \icv(t))$ are incentives in the sense defined above.

We make the following assumption.
\begin{assumption}
\label{assumption:non_contextual_bounded_reward_principal}
For any $a\in [K]$, $\s_a \in [0,1]$.
\end{assumption}
Neither the distributions $\nu_a$ nor the preferences of the agent are known to the principal. Another difficulty arises from designing the magnitude of the incentive $\icv(t)$: if it is too small, the agent might not choose the arm $a_t$ proposed by \princip\ whereas using an overly large amount leads the principal to overpay, decreasing her utility. 

This trade-off also arises in dynamic pricing, where sellers must strike a balance between attractive pricing and profitability. For discussion of the results in that literature, see the comprehensive overview by \citet{den2015dynamic}. In addition, there are links between dynamic pricing and bandit problems~\citep[see, e.g.,][]{javanmard2019dynamic, cai2023doubly}.
\paragraph{Optimal incentives.} Before introducing $\algU$, we highlight a pivotal observation. For any given round $t \geq 1$, action $a \in [K]$ and $\varepsilon > 0$, the principal can entice the agent to choose $a$ by offering an incentive, $\icvstare_a \in \rset_+$, defined as:
\begin{equation}
\label{equation:definition_optimal_incentives}
\icvstare_a = \max_{a' \in [K]} \s_{a'} - \s_a +\epsilon \eqsp.
\end{equation}
With this incentive, it holds that for any $a'\in [K], a' \ne a$:
\begin{equation*}
\s_{a'} < \s_a + \icvstare_{a} \eqsp,
\end{equation*}
which ensures that the agent chooses $A_t = a$, given that action $a$ yields a superior reward. Consequently, $\icvstar_a \coloneqq \lim_{\epsilon \to 0} \icvstare_a = \max_{a' \in [K]} \s_{a'} - \s_a$ represents the infimal incentive necessary to make arm $a$ the agent's selection. Assuming  $\s$ is known to the principal, then using $\icvstare_a$ for any $\epsilon > 0$ across all arms $a \in [K]$, will provide an expected reward of $\theta_a - \icvstar_a$ per arm, which can be found using a standard bandit algorithm. \Cref{lemma:regretdefinitionnoncontextual} allows us to define the regret in a more convenient way.

\begin{restatable}{lemma}{regretdefinitionnoncontextual}
\label{lemma:regretdefinitionnoncontextual}
    For any $T \in \N$, the regret of any algorithm on our problem instance can be written as
    \begin{align*}
    \regret(T) &= T \max_{a \in [K]} \{ \theta_a + \s_a - \max_{a' \in [K]} \s_{a'}\} \\
    &\phantom{=}- \E\parentheseDeux{\sum_{t=1}^T \{\theta_{A_t} - \indi{a_t}(A_t)\icv(t)\}} \eqsp.
\end{align*}
\end{restatable}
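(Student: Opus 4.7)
My plan is to handle the two terms of~\eqref{equation:regret_definition_non_contextual} separately: first rewrite the expected realized utility in terms of the means $\theta_a$ using the \iid\ structure, and then compute $\mu^\star$ in closed form by unfolding the constrained optimization in~\eqref{eq:def_objective_princple_0}.

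For the cumulative utility term, I would exploit that for every round $t$ the draw $X_a(t)\sim\nu_a$ is independent of the past history $((a_s,\icv(s),A_s,X_{A_s}(s)))_{s<t}$ and, by the principal's decision rule, of the current proposal $(a_t,\icv(t))$. Since $A_t$ is a deterministic function of $(a_t,\icv(t),\s)$ through~\eqref{eq:def_A_t}, it is measurable with respect to this pre-draw information, so the tower property gives $\E[X_{A_t}(t)\mid A_t]=\theta_{A_t}$ and hence
\[
\sum_{t=1}^T \E\bigl[X_{A_t}(t) - \indi{a_t}(A_t)\icv(t)\bigr] = \E\Bigl[\sum_{t=1}^T \bigl\{\theta_{A_t} - \indi{a_t}(A_t)\icv(t)\bigr\}\Bigr],
\]
which already matches the second term of the lemma.

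For $\mu^\star$, I would solve~\eqref{eq:def_objective_princple_0} by fixing the target arm $a\in[K]$ and optimizing over $\icv$. The constraint $a\in\argmax_{a'}\{\s_{a'}+\indi{a}(a')\icv\}$ is equivalent to $\s_a+\icv\ge \s_{a'}$ for every $a'\neq a$, i.e. $\icv\ge \max_{a'\neq a}\s_{a'}-\s_a$. Together with $\icv\in\R_+$, and using the elementary identity $\max(0,\max_{a'\neq a}\s_{a'}-\s_a)=\max_{a'\in[K]}\s_{a'}-\s_a$, this reduces to $\icv\ge \icvstar_a$, where $\icvstar_a$ is the infimal incentive identified in~\eqref{equation:definition_optimal_incentives}. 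Because the objective $\theta_a-\icv$ is strictly decreasing in $\icv$, the optimum for fixed $a$ is attained at $\icv=\icvstar_a$ and equals $\theta_a-\icvstar_a=\theta_a+\s_a-\max_{a'\in[K]}\s_{a'}$. Maximizing over $a\in[K]$ yields the first term of the lemma, and substituting both pieces into~\eqref{equation:regret_definition_non_contextual} concludes the argument.

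I do not anticipate a real obstacle here: the result is an algebraic reformulation of the definition of regret. The only mildly delicate point is that $\icvstar_a$ is the \emph{infimum} of strictly successful incentives, and under the arbitrary tie-breaking rule the agent need not actually pick $a$ when offered exactly $\icvstar_a$. This is harmless because~\eqref{eq:def_objective_princple_0} imposes the inclusion $a\in\argmax$ rather than a strict preference, so the maximum is attained and no $\epsilon$-limit argument is needed to evaluate $\mu^\star$.
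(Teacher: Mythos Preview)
Your proposal is correct and follows essentially the same approach as the paper: both arguments compute $\mu^\star$ by reducing the constraint in~\eqref{eq:def_objective_princple_0} to $\icv\ge\icvstar_a$ and then optimizing the decreasing objective $\theta_a-\icv$, yielding $\mu^\star=\max_{a}\{\theta_a-\icvstar_a\}$. Your treatment is in fact slightly more explicit than the paper's, since you justify $\E[X_{A_t}(t)]=\E[\theta_{A_t}]$ via the tower property and you address the tie-breaking subtlety at $\icv=\icvstar_a$, both of which the paper leaves implicit.
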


\textbf{Warm up: fixed horizon solution and regret analysis.} $\algU$ separates the problem of learning optimal incentives $\icvstar_a$ for each action $a$---a problem that can be solved efficiently via binary search (see \Cref{algorithm:binary_search})---from estimation of the principal's expected reward $(\theta_a)_{a\in [K]}$, which is achieved using a standard multi-armed bandit algorithm. With a known horizon $T$, the algorithm unfolds in two stages. First, for each action $a \in [K]$, \princip\ devotes $\bsrounds \coloneqq \lceil\log_2 T\rceil$ rounds of binary search per arm to  estimate $\icvstar_a$, maintaining lower $\licv_a(\bsrounds)$ and upper $\uicv_a(\bsrounds)$ bounds with $\licv_a(\bsrounds) \leq \icvstar_a \leq \uicv_a(\bsrounds)$. Denoting $\uicv_a \coloneqq \uicv_a(\lceil \log_2 T \rceil)$ for simplicity, we compute the estimate 
\begin{equation}
\label{equation:definition_incentive_estimate}
\hicv_a \coloneqq \uicv_a(\lceil \log_2 T \rceil) + 1/T \eqsp,
\end{equation} where  $1/T$ is added to avoid any tie-breaking situation. We show formally in \Cref{lemma:precision_incentives} that 
\begin{equation}
\textstyle
\label{eq:close_incentives}
\hicv_a - 2/T  \leq \icvstar_a < \hicv_a \eqsp.
\end{equation}

In the second phase, $\algU$ then employs an arbitrary multi-armed bandit subroutine $\Alg$ in a black-box manner to learn $\theta$.

\textbf{Bandit instance.} For any  distributions $(\tnu_a)_{a\in [K]}$ and sequence of \iid\  random variables $(Y_a(t))_{t \in \N^\star}$, $a \in [K]$, with  $Y_a(t) \sim \tnu_a$, we define the history $\cG_{t} \coloneqq (A_s, U_s, Y_{A_s}(s))_{s \leq t}$ where for any $s\in \N^\star$, $(U_s)_{s\in \N^\star}$ is a family of independent uniform random variables on $[0,1]$ allowing for randomization in the subroutine. Let $\Alg$ be a bandit algorithm, i.e., $\Alg \colon (U_t, \cG_{t-1}) \mapsto \Arec_t$. We define the expected regret of $\Alg$ as
\begin{align*}
    R_{\Alg}(T, \tnu) &\coloneqq T\max_{a \in [K]} \E_{Y \sim \tnu}\parentheseDeux{Y_a(1)
    } \\
    &\phantom{\coloneqq}- \E_{Y \sim\tnu}\left[\sum_{t =1}^T Y_{\Alg(U_t, \cG_{t-1})}(t)\right] \eqsp.
\end{align*}
After the binary search phase, for $t>K\lceil \log_2 T \rceil$, the principal \ plays $\Alg$  on her bandit instance driven by her own mean rewards $(\theta_a)_{a\in [K]}$ and the approximated incentives $(\hicv_a)_{a \in [K]}$. $\Alg$ will be fed with a shifted history, defined for any $t>K\lceil \log_2 T \rceil$ as 
\begin{equation}
\label{alg_history_definition}
\begin{gathered}
    \alghist_{t} \coloneqq (\Arec_s, U_s, X_{A_s}(s)-\hicv_{\Arec_s})_{s\in [K\lceil \log_2 T \rceil+1, t]} \eqsp,
\end{gathered}
\end{equation}
with $\Arec_t$ the action recommended by $\Alg$ at time $t$ and $A_t$ the action pulled by the agent. At time $t$, $\algU$ offers the incentive $\hicv_{\Arec_t}$ to the agent if he chooses action $\Arec_t$. \Cref{eq:close_incentives} ensures that this incentive makes $\Arec_t$ strictly preferable to any other action for the agent and so $\Arec_t$ is eventually played. As can be seen in \eqref{alg_history_definition}, the shift of each arm's mean by $\hicv_a$ is taken into account while $\Alg$ is learning. We also define the shifted distribution $\shiftr_a$ for any $a \in [K]$ as the distribution of $X_a(1) - \hicv_a$.

\begin{algorithm}[!ht]
\caption{$\algU$}\label{algorithm:iterations}
\begin{algorithmic}[1]
    \STATE {\bfseries Input:} Set of actions $\cA \coloneqq [K]$, time horizon $T$,  subroutine $\Alg$
    \STATE Compute $\alghist_{s} \coloneqq \varnothing$ for any $s \leq K\lceil \log_2 T \rceil$
    \FOR{$a \in [K]$}
        \STATE \textcolor{blue}{\# See \Cref{algorithm:binary_search}}
        \STATE $\licv_a, \uicv_a =$ \texttt{Binary Search}($a, \lceil \log_2 T\rceil, 0,1$) 
    \ENDFOR
    \STATE For any action $a \in [K]$, $\hicv_{a}=\uicv_a + 1/T$
    \FOR{$t = \lceil \log_2 T\rceil+1,\ldots,T$}
        \STATE Sample $U_t \sim \Unif(0,1)$ and get a recommended action by $\Alg$, $\Arec_{t} = \Alg(U_t, \alghist_{t -1})$
        \STATE \textbf{Offer an incentive} $\hicv_{\Arec_t}$ on action $\Arec_t$ and nothing for any other action $a' \in [K], a' \ne a$
        \STATE \textbf{Observe} $A_t, X_{A_t}(t)$ and compute history \begin{small}$\alghist_{t} = (\Arec_s, U_s, X_{A_s}(s)-\hicv_{\Arec_s})_{s\in [K \lceil \log_2 T \rceil + 1,t]} $\end{small}
    \ENDFOR
\end{algorithmic}
\end{algorithm}

\begin{theorem}\label{theorem:regret_bound}
$\algU$ run with any multi-armed bandit subroutine $\Alg$ has an overall regret $\regret(T)$ such that
\begin{align*}
\regret(T) &\leq 2 + (1+\max_{a \in [K]} \{\theta_a\} - \min_{a \in [K]} \{\theta_a\})(1+ K \log_2 T) \\
& \phantom{\leq}+ R_{\Alg}(T-K\lceil\log_2 T\rceil, \shiftr) \eqsp,
\end{align*}
where $R_{\Alg}$ stands for the regret induced by $\Alg$ on the shifted vanilla multi-armed bandit problem $\shiftr$.
\end{theorem}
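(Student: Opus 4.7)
My plan is to decompose the regret using \Cref{lemma:regretdefinitionnoncontextual} into the binary search phase and the exploitation phase, bound each separately, and then recombine.

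First I would invoke \Cref{lemma:regretdefinitionnoncontextual} to rewrite
\[
\regret(T) = T \mu^\star - \E\!\left[\sum_{t=1}^T \{\theta_{A_t} - \indi{a_t}(A_t)\icv(t)\}\right],
\]
where $\mu^\star = \max_{a\in[K]}\{\theta_a - \icvstar_a\}$ after observing that $\s_a - \max_{a'}\s_{a'} = -\icvstar_a$. I would then split the sum into $t\in[1,K\lceil\log_2 T\rceil]$ (binary search) and $t\in(K\lceil\log_2 T\rceil,T]$ (exploitation) and treat them independently.

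For the binary search phase, the per-round instantaneous regret is $\mu^\star - \theta_{A_t} + \indi{a_t}(A_t)\icv(t)$. Since the binary search of \Cref{algorithm:binary_search} is carried out over $[0,1]$ (this is valid because $\icvstar_a \leq \max_{a'}\s_{a'} \leq 1$ under \Cref{assumption:non_contextual_bounded_reward_principal}), we have $\icv(t)\leq 1$, and clearly $\mu^\star \leq \max_a \theta_a$ while $\theta_{A_t}\geq \min_a\theta_a$. Summing over the $K\lceil\log_2 T\rceil$ binary search rounds therefore yields a contribution of at most $K\lceil\log_2 T\rceil\,(1+\max_a\theta_a-\min_a\theta_a)$, which fits inside the claimed $(1+\max_a\theta_a-\min_a\theta_a)(1+K\log_2 T)$ after absorbing the ceiling.

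For the exploitation phase, the key observation is that by \eqref{eq:close_incentives} we have $\hicv_{\Arec_t} > \icvstar_{\Arec_t}$ strictly, so proposing $\hicv_{\Arec_t}$ on arm $\Arec_t$ makes this action uniquely maximize the agent's utility in \eqref{eq:def_A_t}; hence $A_t = \Arec_t$ almost surely. Consequently, the principal's observed utility in this phase is exactly $X_{\Arec_t}(t) - \hicv_{\Arec_t}$, which is a sample from $\shiftr_{\Arec_t}$. I would then compare with the optimum of the shifted instance $\mu^\sharp \coloneqq \max_{a}\{\theta_a - \hicv_a\}$: from \eqref{eq:close_incentives}, $\theta_a - \hicv_a \in (\theta_a - \icvstar_a - 2/T, \theta_a - \icvstar_a]$, so $\mu^\star - 2/T < \mu^\sharp \leq \mu^\star$. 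Adding and subtracting $\mu^\sharp$ inside the sum gives
\[
\sum_{t>K\lceil\log_2 T\rceil}\!\bigl(\mu^\star - (\theta_{\Arec_t}-\hicv_{\Arec_t})\bigr)\leq \tfrac{2}{T}(T-K\lceil\log_2 T\rceil) + R_{\Alg}(T-K\lceil\log_2 T\rceil,\shiftr),
\]
where the first term is at most $2$ and the second is the definition of the subroutine's regret on $\shiftr$, using that the history $\alghist_t$ in \eqref{alg_history_definition} is exactly the i.i.d.\ history needed to invoke the black-box guarantee. Summing the two phase bounds gives the claim.

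The main obstacle I anticipate is the bookkeeping in the exploitation phase: one must verify that the shifted observations $X_{\Arec_s}(s) - \hicv_{\Arec_s}$ feeding $\Alg$ are genuinely i.i.d.\ draws from $\shiftr_{\Arec_s}$ (so that $R_{\Alg}(\cdot,\shiftr)$ applies as a black box), and that the $2/T$ optimality gap between $\mu^\sharp$ and $\mu^\star$ accumulates to only the additive constant $2$. The binary-search bound is straightforward once one notices that incentives stay in $[0,1]$; everything else reduces to plugging in \eqref{eq:close_incentives}.
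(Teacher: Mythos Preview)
Your proposal is correct and follows essentially the same route as the paper's proof: the same two-phase decomposition via \Cref{lemma:regretdefinitionnoncontextual}, the same per-round bound $1+\max_a\theta_a-\min_a\theta_a$ during binary search (the paper invokes \Cref{lemma:bounded_incentives} for $\icv(t)\leq 1$ rather than arguing directly from \Cref{assumption:non_contextual_bounded_reward_principal}), and the same add-and-subtract of the shifted optimum in the exploitation phase to isolate $R_{\Alg}$ plus an additive $2$. Your explicit introduction of $\mu^\sharp$ is just a cleaner packaging of the paper's $\max_a\{\theta_a-\hicv_a\}$ step, and your remark about the i.i.d.\ validity of the shifted history is exactly the point the paper leaves implicit.
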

The proof is postponed to \Cref{app:proof_non_contextual}.

\begin{corollary}
\label{corollary:UCB_non_contextual}
Assume the principal's reward distribution $\nu_a$ for any action $a\in [K]$ is 1-subgaussian. Then, $\algU$ run with the bandit subroutine $\Alg =\texttt{UCB}$ has a regret bounded for any $T\in\N$ as follows:
\begin{align*}
 \regret(T) & \leq 3 + 3\sum_{a \in [K], \Delta^\star_a>0}\Delta^\star_a  \\
& \phantom{\leq} +(1+\max_{a \in [K]} \{\theta_a\} - \min_{a \in [K]} \{\theta_a\}) (1+ 9 K \log_2 T) \\
& \phantom{\leq}+ 8 \min \left\{\sqrt{T K \log T} \; ; \; \sum_{a\in [K], \dels_a > 0} \frac{4 \log T}{\dels_a} \right\} \eqsp,
\end{align*}
where $\dels_a \coloneqq \max_{a' \in [K]} \{ \theta_{a'} + \s_{a'} \} - (\theta_a + \s_a)$ are the reward gaps.
\end{corollary}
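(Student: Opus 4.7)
The starting point is to apply Theorem~\ref{theorem:regret_bound} with $\Alg = \texttt{UCB}$, which reduces the problem to bounding $R_{\texttt{UCB}}(T', \shiftr)$ for $T' := T - K\lceil \log_2 T\rceil$ on the shifted bandit instance whose arm $a$ has law $\shiftr_a$, the distribution of $X_a(1) - \hicv_a$. Since each $X_a(t)$ is $1$-subgaussian by assumption and shifting by the deterministic quantity $\hicv_a$ does not alter the subgaussian proxy, $\shiftr_a$ is still $1$-subgaussian and the usual UCB analysis is directly applicable.

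Next I translate the gaps of $\shiftr$ into the gaps $\dels_a$ appearing in the statement. Define $\tilde\Delta_a := \max_{a'}(\theta_{a'} - \hicv_{a'}) - (\theta_a - \hicv_a)$. Plugging in $\icvstar_a = \max_{a'} \s_{a'} - \s_a$ from~\eqref{equation:definition_optimal_incentives}, a short algebraic computation shows that the ``ideal'' shifted gap $\max_{a'}(\theta_{a'} - \icvstar_{a'}) - (\theta_a - \icvstar_a)$ is exactly $\dels_a$. Combining this identity with the sandwich $0 < \hicv_a - \icvstar_a \leq 2/T$ from~\eqref{eq:close_incentives} produces the key estimate $|\tilde\Delta_a - \dels_a| \leq 2/T$ for every $a \in [K]$.

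I then invoke the two standard regret bounds for UCB on a $1$-subgaussian $K$-armed instance: the distribution-free bound $R_{\texttt{UCB}}(T', \shiftr) \leq 8\sqrt{KT' \log T'} + 3\sum_a \tilde\Delta_a$ and the instance-dependent bound $R_{\texttt{UCB}}(T', \shiftr) \leq \sum_{a:\tilde\Delta_a > 0} 32 \log T' / \tilde\Delta_a + 3\sum_a \tilde\Delta_a$. Taking the minimum and using $T' \leq T$, together with $\sum_a \tilde\Delta_a \leq \sum_{a:\dels_a>0} \dels_a + 2$ (from $\tilde\Delta_a \leq \dels_a + 2/T$), already reproduces both the $3\sum_a \dels_a$ term and the $8\min\{\sqrt{TK\log T},\sum_a 4\log T/\dels_a\}$ term of the corollary up to the substitution $1/\tilde\Delta_a \to 1/\dels_a$.

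The main obstacle is precisely that substitution inside the reciprocal. My plan is to split the arms with $\dels_a > 0$ into those with $\dels_a > 4/T$, for which $\tilde\Delta_a \geq \dels_a/2$ and therefore $1/\tilde\Delta_a \leq 2/\dels_a$ transparently, and those with $\dels_a \leq 4/T$, whose instance-dependent contribution is in any case dominated by the minimax branch $8\sqrt{KT \log T}$ and is swallowed by the $\min$. Finally, the leftover constants and the small $O(K \log_2 T)$ slack from replacing $T'$ by $T$ inside the logarithm are absorbed into the $(1 + \max_a \theta_a - \min_a \theta_a)(1 + 9K\log_2 T)$ factor of the corollary (which explains the inflation from the $K\log_2 T$ in Theorem~\ref{theorem:regret_bound} to $9K\log_2 T$ here), yielding the claimed bound.
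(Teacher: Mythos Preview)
Your plan is correct and matches the paper's proof almost step for step: apply Theorem~\ref{theorem:regret_bound}, invoke the standard UCB bounds on the $1$-subgaussian shifted instance, use $|\tilde\Delta_a - \dels_a|\leq 2/T$ from~\eqref{eq:close_incentives}, and handle the reciprocal via the threshold split at $\dels_a = 4/T$. Two cosmetic remarks: the UCB instance-dependent constant from \citet[Theorem~7.1]{lattimore2020bandit} is $16\log T/\tilde\Delta_a$ (not $32$), which after $\tilde\Delta_a \geq \dels_a/2$ lands exactly on $32\log T/\dels_a = 8\cdot 4\log T/\dels_a$; and the paper dismisses the case $T\leq 9K$ as trivial rather than absorbing constants into the $9K\log_2 T$ slack as you propose.
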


Note that any black-box algorithm, not necessarily $\texttt{UCB}$, can be employed, yielding other concrete bounds in the corollary. We recover the usual multi-armed $\texttt{UCB}$ bounds (both distribution-free and instance-dependent): this is why $\algU$ achieves the bound provided in \Cref{corollary:UCB_non_contextual}. For completeness, the \texttt{UCB subroutine} is given in \Cref{appendix:algorithms}.
\section{Contextual Principal-Agent Learning}\label{sec:contextual}

In this section, we study the same interaction between a principal and an agent, but in a contextual setting \citep[see, e.g.,][]{abe1999associative, auer2002using, dani2008stochastic}. We use the simplified model of stochastic linear bandits for both the agent and the principal. Consider a set of possible actions in $\oball(0,1)$, where $\oball(0,1)$ stands for the unit closed ball in $\R^d$, and a family of zero-mean distributions indexed by $\oball(0,1)$, $(\tilde{\nu}_a)_{a \in \oball(0,1)}$ such that for any $a \in \oball(0,1), t \in [T], \eta_a(t) \sim \tilde{\nu}_a$. The principal's reward is given by the sequence $\{(X_a(t))_{a\in \oball(0,1)}\colon t \in [T]\}$ of independent random variables such that for any $t\in [T], a \in \oball(0,1)$,
\begin{equation*}
    X_a(t) \coloneqq \langle \theta^\star, a\rangle + \eta_a(t) \eqsp,
\end{equation*}
where $\theta^\star$ is unknown to the principal. The agent's reward function is defined as $a \mapsto \langle \s^\star, a\rangle$, where $\s^\star \in \R^d$ is known to the agent and unknown to the principal. With this notation, the agent and the principal observe an action set $\cA_t \subseteq \oball(0,1)$, at each round $t\geq 1$.  Note that this set is no longer stationary. The precise timeline is as follows. At each round, the principal proposes an incentive function, $\tsfr(t, \cdot) \colon \cA_t \to \R_+$, associating any action $a \in \cA_t \subseteq \oball(0,1)$ with a transfer of incentives $\tsfr(t,a)$ from the principal to the agent. The principal chooses $\tsfr(t, \cdot)$ as a function with a finite support, which makes it upper semi-continuous. The agent then greedily chooses the action $A_t$ as follows:
\begin{equation}
\label{eq:contextual_action}
    A_t \in \argmax_{a \in \cA_t} \{ \langle \s^\star, a \rangle + \tsfr(t,a) \} \eqsp,
\end{equation}
which is well-defined since $\tsfr(t, \cdot)$ is upper semi-continuous and $\cA_t$ satisfies the following assumption.
\begin{assumption}
\label{assumption:bounded_in_unit_ball}
    For any $t\geq 1$, $\cA_t$ is closed, therefore compact. Moreover, $\s^\star \in \oball(0,1)$ and $\theta^\star \in \oball(0,1)$.
\end{assumption}
The principal then observes the arm $A_t$ selected by the agent, as well as her incurred reward given by $X_{A_t}(t)$. The utility of the principal on the round is given by $X_{A_t}(t) - \tsfr(t,A_t)$. This defines, for a sequence of principal's incentive functions $\{ \tsfr(t, \cdot), t\in [T]\}$, the sequence of actions $\{A_t\colon t\in [T]\}$ chosen by the agent. The goal of the principal is to maximize her total utility. On a single round $t$, she thus aims at proposing an optimal incentive function $\tsfr(t, \cdot)$ which solves
\begin{equation}
\label{eq:def_objective_princple_0_contextual}
\begin{gathered}
 \textstyle \text{maximize} \,\,  \langle \theta^\star, a \rangle -\tsfr(t,a) \text{ over } \tsfr(t, \cdot) \colon \cA_t\to \R_+, \\
 \text{such that } \, a \in \argmax_{a' \in \cA_t}\{ \langle \s^\star, a'\rangle + \tsfr(t,a')\} \eqsp.
\end{gathered}
\end{equation}
In addition, we define the optimal average reward at $t$ as
\begin{equation}
\label{equation:definition_mu_t}
\begin{gathered}
\mu_t^\star \coloneqq \sup_{\tsfr(t, \cdot) \colon \cA_t \to \R_+} \{\langle \theta^\star, a\rangle - \tsfr(t,a)\} \\
\text{ such that } a \in \argmax_{a' \in \cA_t}\{ \langle \s^\star, a' \rangle + \tsfr(t,a') \} \eqsp.
\end{gathered}
\end{equation}
Maximizing the total utility of the principal over $T$ rounds is equivalent to minimizing the expected regret, defined as
\begin{align}
\label{contextual_regret_def}
\small    \regret(T) &= \sum_{t=1}^T \mu_t^\star - \E\parentheseDeux{\sum_{t=1}^T (X_{A_t}(t)-\tsfr(t,A_t))} \eqsp.
\end{align}
Similarly to \Cref{lemma:regretdefinitionnoncontextual}, the following result provides an alternative definition for the regret.

\begin{restatable}{lemma}{regretdefinitioncontextual}
\label{lemma:regretdefinitioncontextual}
    For any $T \in \N$, the regret of any algorithm on our contextual problem instance can be written as
    \begin{align*}
    \regret(T) &=\sum_{t=1}^T \max_{a \in \cA_t} \{ \langle \theta^\star + \s^\star, a\rangle - \max_{a' \in \cA_t} \langle \s^\star, a'\rangle \} \\
    &\phantom{=}- \E\parentheseDeux{\sum_{t=1}^T \left(\langle \theta^\star, A_t\rangle-  \tsfr(t,A_t)\right)} \eqsp.
    \end{align*}
\end{restatable}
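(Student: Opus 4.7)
The plan is to mirror the proof of \Cref{lemma:regretdefinitionnoncontextual} adapted to the contextual setting: rewrite the two terms of \eqref{contextual_regret_def} one at a time. For the expectation term I would use the law of total expectation to absorb the zero-mean noise, and for $\mu_t^\star$ I would solve the inner optimization problem in \eqref{equation:definition_mu_t} in closed form by characterizing the minimum incentive required to implement a given action.

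\textbf{Step 1 (expectation side).} Let $\cF_{t-1}$ denote the sigma-field generated by all events up to (but not including) the noise at round $t$. The incentive function $\tsfr(t,\cdot)$ is chosen by the principal from $\cF_{t-1}$, and the agent's decision $A_t$ in \eqref{eq:contextual_action} depends only on $\s^\star$ and $\tsfr(t,\cdot)$, hence $A_t$ is $\cF_{t-1}$-measurable. Since $\eta_a(t)$ is independent of $\cF_{t-1}$ with zero mean for every $a$, expanding $X_{A_t}(t) = \langle \theta^\star, A_t\rangle + \eta_{A_t}(t)$ and conditioning on $\cF_{t-1}$ gives $\E[\eta_{A_t}(t)\mid\cF_{t-1}] = \sum_{a\in\cA_t}\1_{A_t=a}\E[\eta_a(t)] = 0$, so $\E[X_{A_t}(t)] = \E[\langle \theta^\star, A_t\rangle]$. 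This turns the expected reward part of \eqref{contextual_regret_def} into exactly the $\E[\cdot]$ term appearing in the lemma.

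\textbf{Step 2 (benchmark side).} I would show that for every $t$,
\[
\mu_t^\star \;=\; \max_{a\in\cA_t}\langle \theta^\star+\s^\star, a\rangle \;-\; \max_{a'\in\cA_t}\langle\s^\star, a'\rangle .
\]
Fix a target action $a\in\cA_t$. The principal only cares about $\tsfr(t,a)$ (the incentive actually paid), and can make any other value $\tsfr(t,a')$ as small as possible (namely $0$). The constraint in \eqref{equation:definition_mu_t} then reduces to $\langle\s^\star,a\rangle+\tsfr(t,a)\geq\max_{a'\in\cA_t}\langle\s^\star,a'\rangle$, whose infimum over admissible $\tsfr(t,a)\geq 0$ is $\tsfr^\star_a \coloneqq \max_{a'\in\cA_t}\langle\s^\star,a'\rangle-\langle\s^\star,a\rangle \geq 0$. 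Taking $\tsfr(t,a) = \tsfr^\star_a + \epsilon$ yields the feasible value $\langle\theta^\star,a\rangle-\tsfr^\star_a-\epsilon$; letting $\epsilon\downarrow 0$ and optimizing over $a\in\cA_t$ (which is legitimate since $\cA_t$ is compact by \Cref{assumption:bounded_in_unit_ball} and the objective is continuous) gives the announced expression. Combining Steps~1 and~2 yields the lemma.

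\textbf{Expected main obstacle.} The only subtle point is the handling of the supremum in \eqref{equation:definition_mu_t} together with the arbitrary tie-breaking of the agent in \eqref{eq:contextual_action}: at the exact value $\tsfr(t,a) = \tsfr^\star_a$ the agent is indifferent between $a$ and the $\s^\star$-maximizer of $\cA_t$, and may break the tie unfavorably. One must therefore argue that the benchmark is attained only as a limit of admissible incentive functions (through the $\epsilon$-perturbation), which is consistent with $\mu_t^\star$ being defined as a supremum rather than a maximum and with the finite-support, upper-semicontinuous requirement on $\tsfr(t,\cdot)$. Everything else is a routine computation.
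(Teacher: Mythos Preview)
Your proposal is correct and follows essentially the same approach as the paper: both reduce the computation of $\mu_t^\star$ to finding the infimal incentive $\icvstar(t,a)=\max_{a'\in\cA_t}\langle\s^\star,a'\rangle-\langle\s^\star,a\rangle$ needed to implement action $a$, then maximize $\langle\theta^\star,a\rangle-\icvstar(t,a)$ over $a\in\cA_t$. You are in fact more explicit than the paper on two points it leaves implicit---the vanishing of $\E[\eta_{A_t}(t)]$ via measurability of $A_t$, and the $\epsilon$-perturbation handling arbitrary tie-breaking---but these are refinements of the same argument, not a different route.
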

The proof is deferred to \Cref{app:contextual_setting}.

\textbf{Design of the optimal incentives.} At any round $t\geq 1$, for the agent to necessarily choose action $a \in \cA_t$, the principal can provide the agent with the incentive function $\tsfrstare_a(t,a') \coloneqq \indi{a}(a')\icvstare(t,a)$, where for any $\epsilon >0$,
\begin{equation*}
    \icvstare(t,a) \coloneqq \max_{a_t' \in \cA_t}\{  \langle \s^\star, a_t' -a\rangle +\epsilon \} \eqsp.
\end{equation*}
\Cref{lemma:good_action_chosen_contextual} in \Cref{app:contextual_setting} guarantees that this choice of $\tsfrstare_a$ gives $A_t=a$, where $A_t$ is defined in \eqref{eq:contextual_action}. Define 
\begin{gather}
\nonumber
    a_t^\agent \coloneqq \argmax_{a' \in \cA_t} \langle \s^\star, a' \rangle \eqsp,  \\
 \label{equation:optimal_incentives_contextual}
    \icvstar(t,a) \coloneqq \langle \s^\star, a_t^\agent \rangle - \langle \s^\star, a\rangle
\end{gather}
and $\tsfrstar_a(t,a') \coloneqq \indi{a}(a')\icvstar(t,a)$. As in the non-contextual case, taking $\epsilon \to 0$ makes the incentive function $\tsfrstar_a$ the infimal function that makes the choice of $a$ strictly preferable to any other arm $a' \in \cA_t$ at time $t$.

Similarly to the multi-armed setting, we decompose the problem into two distinct components. First, we aim to estimate the agent's reward $a \mapsto \ps{\s^\star}{a}$ based on the observation of agent's selected actions given an appropriate choice of incentives. As discussed below, this can be achieved with a binary-search-like procedure. Second, once this function is accurately estimated, the principal can use a contextual bandit algorithm $\CAlg$ in a black-box manner to minimize her own regret with the estimated incentive function to determine the agent's behavior.

\textbf{Estimation of the agent's reward.} 
The approach that we propose is based on a sequence of confidence sets $\{\cS_t \}_{t \in [T]}$ that satisfy $\s^\star \in \cS_t$ for any $t \in [T]$. We construct the sequence $(\cS_t)_{t \in [T]}$ recursively such that their diameters decrease along the iterations. This is motivated by \Cref{lemma:controlvolume} which allows us to control the estimation error of $\icvstar$ and relates it to the diameter of these sets. The proof is postponed to \Cref{app:contextual_setting}.
\begin{restatable}{lemma}{controlvolume}
    \label{lemma:controlvolume}
   For any $t \in [T]$ and closed subset $\cS \subset \ball{0}{1}$ with $\s^{\star} \in \cS$, it holds, for any $a \in \cA_t$, $|\max_{\s \in \cS,a' \in \cA_t}\langle \s, a' - a \rangle - \icvstar(t,a)| \leq 2 \, \diam(\cS, \cA_t)$ where $\diam(\cS, \cA_t) \coloneqq \max_{a' \in \cA_t} \max_{\s_1, \s_2 \in \cS} |\langle \s_1 - \s_2, a' \rangle |$.
\end{restatable}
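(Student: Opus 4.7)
The plan is to show the quantity inside the absolute value is both nonnegative and bounded above by $2\,\diam(\cS, \cA_t)$, which together yield the claim.

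First I would rewrite $\icvstar(t,a)$ in the convenient form $\icvstar(t,a) = \max_{a' \in \cA_t} \langle \s^\star, a' - a\rangle$, which follows immediately from \eqref{equation:optimal_incentives_contextual} since $a_t^{\agent} = \argmax_{a' \in \cA_t}\langle \s^\star, a'\rangle$ and the term $\langle \s^\star, a\rangle$ does not depend on $a'$. The lower bound is then trivial: because $\s^\star \in \cS$, plugging $\s = \s^\star$ into the outer maximum gives
\begin{equation*}
\max_{\s \in \cS,\, a' \in \cA_t}\langle \s, a' - a\rangle \;\geq\; \max_{a' \in \cA_t}\langle \s^\star, a' - a\rangle \;=\; \icvstar(t,a).
\end{equation*}

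For the upper bound, I would pick a pair of maximizers $(\hat{\s}, \hat{a}') \in \cS \times \cA_t$ (which exists by compactness of $\cS$ and $\cA_t$) and split
\begin{equation*}
\langle \hat{\s}, \hat{a}' - a\rangle \;=\; \langle \s^\star, \hat{a}' - a\rangle + \langle \hat{\s} - \s^\star, \hat{a}'\rangle - \langle \hat{\s} - \s^\star, a\rangle.
\end{equation*}
The first term is at most $\icvstar(t,a)$ by definition. For the two remaining terms, since $\hat{a}', a \in \cA_t$ and both $\hat{\s}, \s^\star \in \cS$, each inner product is bounded in absolute value by $\diam(\cS, \cA_t)$ directly from its definition. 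Summing the two contributions gives an extra term of at most $2\,\diam(\cS, \cA_t)$, so
\begin{equation*}
\max_{\s \in \cS,\, a' \in \cA_t}\langle \s, a' - a\rangle \;\leq\; \icvstar(t,a) + 2\,\diam(\cS, \cA_t).
\end{equation*}
Combining this with the lower bound closes the proof.

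There is no real obstacle: everything reduces to the triangle-like splitting above and the definition of $\diam(\cS, \cA_t)$. The only mild subtlety is making sure one does not try to compare the two $\argmax$'s on the left and on the right directly (since they need not coincide) — the splitting above circumvents this by keeping $\hat{a}'$ fixed on both sides and using the definition of $\icvstar(t,a)$ to upper-bound the $\s^\star$-term.
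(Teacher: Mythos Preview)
Your proof is correct and follows essentially the same splitting as the paper's: for the upper bound you decompose $\langle \hat{\s}, \hat{a}' - a\rangle$ exactly as the paper does (replacing $\hat{a}'$ by their $a_t^{\s}$), and your lower bound via $\s^\star \in \cS$ is in fact a slight sharpening of the paper's, which only proves $\icvstar(t,a) - \max \leq 2\,\diam(\cS,\cA_t)$ rather than nonnegativity. No gaps.
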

In the light of \Cref{lemma:controlvolume}, we thus aim to build confidence sets $\cS_t$ with decreasing diameters such that $\s^{\star}\in\cS_t$ for any~$t$. To this end, the principal can offer an incentive function~$\tsfr(t,\cdot)$ concentrated on a \textit{single point} $a \in \cA_t$ as in the multi-armed case: $\tsfr(t,a') = \icv(t) \cdot \indi{a}(a')$ for $\icv(t) \in \R_+$. In this case, the principal receives the agent's choice as a feedback; by \eqref{eq:contextual_action}, either $A_t = a_t$ or $A_t = \argmax_{a' \in \cA_t} \langle \s^\star, a' \rangle = a_t^\agent$. In addition, $A_t=a_t$ is equivalent to the fact that
\begin{equation*}
    \langle \s^\star, a_t^\agent - a_t \rangle \leq \icv(t) \eqsp.
\end{equation*}
The information $A_t = a_t$ or $A_t = a_t^\agent$ can be used as \textit{binary search feedback} in the direction $a^{\agent}_t - a_t$, as follows. Given a current confidence set $\cS_t$ at time $t$ it can be updated either as $\cS_{t+1} = \cS_t \cap \{\s\colon \langle \s, a_t^\agent - a_t \rangle \leq \icv(t) \}$ if $A_t = a_t$ or $\cS_{t+1} = \cS_t \cap \{\s \colon \langle \s, a_t^\agent - a_t \rangle \geq \icv(t) \}$ otherwise.

However, since the action set $\cA_t$ is non-stationary, we cannot determine the action $a^{\agent}_t$ by just observing the first round. Consequently, the new set $\cS_{t+1}$ cannot be computed as previously in the non-contextual setting. This makes the \textit{single-point} incentive functions not suited for an efficient learning of $\s^{\star}$ over the iterations. Instead, at any time $t$, we seek for a form of binary-search feedback in the direction $a^1_t - a^t_2$ for any two arms $a^1_t \not= a_t^2 \in \cA_t$. As we will see, this can be achieved by considering an incentive function $\tsfr$ with support $\{a_t^1, a_t^2\} \subseteq \cA_t$.

Indeed, an important remark is that the amount of incentive needed to make the agent play any particular action is bounded under \Cref{assumption:bounded_in_unit_ball} since
\begin{equation}
\label{equation:bounded_incentives_contextual}
    \max_{a\in \cA_t} \icvstar(t,a) = \max_{a \in \cA_t} \langle \s^*, a_t^\agent - a\rangle \leq 2 \eqsp,
\end{equation}
this bound being known by the principal. For any $a \in\oball(0,1)$, this makes the incentive function $a' \mapsto 3 \cdot \indi{a}(a')$ sufficient to ensure $A_t = a$ from \eqref{eq:contextual_action}. The value $3$ in the definition of the incentive function is chosen instead of $2$ to avoid an arbitrary tie-breaking.

Consequently, under the choice $\tsfr(t,a_t^1)=3$, $\tsfr(t,a_t^2) = 3 + \icv(t)$ for $\icv(t) \geq 0$  and $\tsfr(t, a') = 0 $ for any other arm $a'$, \eqref{equation:bounded_incentives_contextual} guarantees that only $a_t^1$ and $a_t^2$ may be chosen by the agent, helping the principal to update her confidence set $\cS_t$ in a known direction. Specifically for such an incentive $\tsfr$, the choice $A_t = a_t^1$ reveals the following information on $\s^\star$:
\begin{equation*}
\langle \s^\star, a_t^1 - a_t^2 \rangle \geq \icv(t),
\end{equation*}
that permits the definition of a \textit{binary search-like feedback} in the direction $a_t^1-a_t^2$ and thus allows us to update the confidence set $\cS_t$ following $\cS_{t+1} = \cS_t \cap \{\s\colon \langle \s, a_t^1 - a_t^2 \rangle \geq \icv(t) \}$ if $A_t = a_t^1$ or $\cS_{t+1} = \cS_t \cap \{\s \colon \langle \s, a_t^1 - a_t^2 \rangle \leq \icv(t) \}$ otherwise.

\textbf{Binary search.} This update turns our estimation of the optimal incentives $\icvstar$ into a multidimensional binary search where the unknown quantity is the vector $\s^\star$. At each iteration $t$, a vector $w_t$ from the unit sphere is given. Then, the algorithm has to guess the value of $\langle \s^\star, w_t \rangle$ using its previous observations. Finally, an oracle reveals as feedback whether the guess is above or below the true value $\langle \s^\star, w_t\rangle$, and the algorithm updates its observation history. In our case, $w_t \coloneqq (a_t^1-a_t^2)/\|a_t^1-a_t^2\|$ for $a_t^1, a_t^2 \in \cA_t$ and the resulting feedback is given through the agent picking either $a_t^1$ or $a_t^2$. However, extending the binary search to the multidimensional case is non-trivial for two reasons.

\textbf{Direction of the multidimensional binary search.} In the contextual bandit setting, we cannot divide the horizon into two successive phases. Indeed, the principal cannot choose any binary search direction in $\R^d$, since $w_t$ depends on the action set $\cA_t$ available at each iteration. For instance, action sets $\cA_t$ could be restricted to a small dimensional subspace of $\R^d$ during the whole binary search procedure, so that the principal can only get a good estimate of $\s^\star$ in this subspace. After this phase, received action sets could be totally different (e.g., in the orthogonal subspace or the whole of $\R^d$) during the remainder of the game. 

We solve the issue of constraint directions for the binary search by running it in an adaptive way, depending on the available action set at each time step and on the current level of estimation on this set. More precisely, at iteration $t$, the principal's estimate of the true value $\langle \s^\star, w_t \rangle$ is $\langle \hat{\s}_t, w_t \rangle$, where $\hat{\s}_t$ is defined as the centroid of $\cS_t$:
\begin{equation*}
    \hat{\s}_t \coloneqq \frac{1}{\Vol(\cS_t)} \int_{\cS_t} x \rmd x \quad\text{with} \quad \Vol(\cS_t) = \int_{\cS_t} \rmd x \eqsp.
\end{equation*}
Whenever $|\langle \hat{\s}_t, w_t \rangle - \langle \s^\star, w_t \rangle | < 1/T$, the principal incurs a negligible cost to incentivize the agent to choose her desired action. Then, in this context, for any action $a \in \cA_t$ that the principal wants to play, she designs the incentive
\begin{align*}
    \hicv(t,a) &\coloneqq \max_{a' \in \cA_t} \langle \hat{\s}_t, a'\rangle - \langle \hat{\s}_t, a\rangle + 2/T \\
    \estsfr_a(t,a') &= \indi{a}(a')\hicv(t,a) \quad \text{for any }a'\in\cA_t\eqsp.
\end{align*}
To control the precision of the estimation $\hicv(t,a)$ of $\icvstar(t,a)$ for any $a \in \cA_t$,  \Cref{lemma:enough_incentives_contextual} shows that it is sufficient to consider  the event $\cE_t$, defined   as
\begin{equation}
\label{equation:definition_cE_t}
\cE_t \coloneqq \left\{ \max_{a_t^1\ne a_t^2 \in \cA_t} \diam \left(\cS_t, \frac{a_t^1-a_t^2}{\|a_t^1-a_t^2\|}\right) < \precision \right\} \eqsp,
\end{equation}
where we recall the definition of the projected diameter: $\diam(\cS_t, x) \coloneqq \max_{\s_1, \s_2 \in \cS_t} |\langle \s_1 - \s_2, x\rangle | \; \text{ for any } x\in \R^d$. When $\cE_t$ is false, the principal does not have a good characterization of the incentive function that she needs to provide and thus $\calgU$ runs a multidimensional binary search step, which is explained in the paragraph below. Otherwise, $\calgU$ runs a contextual bandit subroutine $\CAlg$ in a black-box manner on her bandit instance driven by the principal's own mean rewards $\langle \theta^\star, a\rangle$ and the approximated incentives $\hicv(t,a)$ for any $a \in \cA_t$. \Cref{lemma:enough_incentives_contextual} guarantees that these approximations are upper estimates of $\icvstar(t,a)$. The principal proposes an incentive function $\estsfr_{\Arec_t}$ depending on the estimate to make the agent select the action $\Arec_t$ recommended by the bandit subroutine. Again, we do not impose any assumption on the tie-breaking, which can be arbitrary.
\begin{restatable}{lemma}{enoughincentivescontextual}
\label{lemma:enough_incentives_contextual}
    Consider $t\in [T]$,  $\cA_t \subseteq \oball(0,1)$, $\cS_t \subseteq \oball(0,1)$ such that $\cE_t$ defined in \eqref{equation:definition_cE_t} is true. Then for any action $a \in \cA_t$, we have: $\icvstar(t,a) < \estsfr_a(t,a) \leq \icvstar(t,a) + 4/T$.
\end{restatable}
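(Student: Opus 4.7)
The plan is to show both inequalities by comparing $\hicv(t,a)$ and $\icvstar(t,a)$ via a telescoping that introduces the estimate's maximizer $\hat a_t \in \argmax_{a'\in \cA_t}\langle \hat\s_t, a'\rangle$, and then to control the cross term $\langle \hat\s_t - \s^\star, \cdot\rangle$ using that both $\hat\s_t$ and $\s^\star$ lie in $\cS_t$, so that projected differences are bounded by $\diam(\cS_t, \cdot)$, which under $\cE_t$ is strictly smaller than $1/T$ in every feasible direction $(a_t^1-a_t^2)/\|a_t^1-a_t^2\|$. Recall that $\estsfr_a(t,a)=\hicv(t,a)=\langle \hat\s_t, \hat a_t - a\rangle + 2/T$ and $\icvstar(t,a)=\langle \s^\star, a_t^\agent - a\rangle$.

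For the \textbf{upper bound}, I use the optimality of $a_t^\agent$ for $\s^\star$ to write
$\icvstar(t,a) = \langle \s^\star, a_t^\agent - a\rangle \geq \langle \s^\star, \hat a_t - a\rangle$, which yields
\[
\hicv(t,a) - \icvstar(t,a) \leq \langle \hat\s_t - \s^\star, \hat a_t - a\rangle + 2/T.
\]
If $\hat a_t = a$ this inner product vanishes and the bound $2/T \leq 4/T$ is immediate; otherwise, writing $w = (\hat a_t - a)/\|\hat a_t-a\|$, one has $\langle \hat\s_t - \s^\star, \hat a_t - a\rangle \leq \|\hat a_t - a\|\cdot \diam(\cS_t, w) < 2 \cdot 1/T$ since $\|\hat a_t - a\| \leq 2$ by \cref{assumption:bounded_in_unit_ball} and $\cE_t$ is in force. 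Combining gives $\hicv(t,a) - \icvstar(t,a) < 4/T$.

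For the \textbf{strict lower bound}, I use the symmetric optimality of $\hat a_t$ for $\hat\s_t$:
$\langle \hat\s_t, \hat a_t - a\rangle \geq \langle \hat\s_t, a_t^\agent - a\rangle$, so that
\[
\hicv(t,a) - \icvstar(t,a) \geq \langle \hat\s_t - \s^\star, a_t^\agent - a\rangle + 2/T.
\]
When $a_t^\agent = a$ the inner product vanishes and we get $\hicv(t,a) - \icvstar(t,a) \geq 2/T > 0$. Otherwise, as above, the cross term satisfies $|\langle \hat\s_t - \s^\star, a_t^\agent - a\rangle| < 2/T$ under $\cE_t$, hence $\hicv(t,a) - \icvstar(t,a) > 0$.

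The main obstacles are essentially bookkeeping: one must justify that $\hat\s_t \in \cS_t$ (which follows from convexity of $\cS_t$ as an intersection of half-spaces with $\oball(0,1)$, making the centroid a valid element), and one must handle the boundary cases $\hat a_t = a$ and $a_t^\agent = a$ separately because the direction $w$ used in $\cE_t$ is undefined there; both cases are however trivial since the corresponding cross term vanishes. The rest is a direct application of the definition of $\cE_t$ together with the Cauchy–Schwarz–type identity $\langle u, v\rangle = \|v\|\,\langle u, v/\|v\|\rangle$.
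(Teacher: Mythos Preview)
Your proof is correct and follows essentially the same approach as the paper's own proof: both arguments introduce the estimate's maximizer $\hat a_t^{\agent}$, use the respective optimality of $a_t^{\agent}$ for $\s^\star$ and of $\hat a_t^{\agent}$ for $\hat\s_t$ to reduce each inequality to a cross term $\langle \hat\s_t-\s^\star, \cdot\rangle$, and then bound this cross term by $\|\cdot\|\cdot\diam(\cS_t,\cdot/\|\cdot\|)<2/T$ under $\cE_t$. Your treatment is in fact slightly more careful than the paper's, since you explicitly handle the degenerate cases $\hat a_t=a$ and $a_t^{\agent}=a$ where the normalized direction is undefined, whereas the paper's displayed bound silently divides by a possibly vanishing norm.
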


A corollary of \Cref{lemma:enough_incentives_contextual} is that running $\calgU$, under $\cE_t$, $A_t = \Arec_t$.

\textbf{Issue of the diameter reduction.} We illustrate the challenge of the multidimensional constrained binary search on a very simple problem. At time $t$, we can only run a binary search step in one of the directions $w_t=a - a'$ for $a, a' \in \cA_t$. Suppose that we have two directions of interest, $v_1, v_2$ in $\R^d$, such that we aim to decrease the diameter of $\cS_t$ in the direction of $v_1$ or $v_2$. Even if we divide the diameter of $\cS_t$ in two in a direction $w_t$, which is always possible, this does not necessarily imply that the diameter of $\cS_t$ would reduce along any direction $v_i$, as illustrated on \Cref{figure:binary_search_is_difficult}.

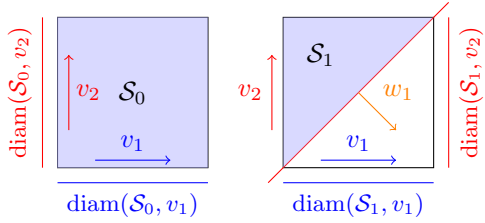
\begin{figure}[ht]
\centering
\begin{tikzpicture}
  \draw (0,0) -- (2,0) -- (2,2) -- (0,2) -- cycle;
  \fill[blue!30,opacity=0.5] (0,0) -- (2,0) -- (2,2) -- (0,2) -- cycle;
    \node[align=center] at (1,1) {$\cS_0$};
  \draw[blue] (0,-0.2) -- (2,-0.2);
  \node[align=center, text=blue] at (1,-0.5) {\begin{small}$\diam(\cS_0,v_1)$\end{small}};
  \draw[red] (-0.2,0) -- (-0.2,2);
  \node[rotate=90, text=red] at (-0.5,1) {\begin{small}$\diam(\cS_0,v_2)$\end{small}};
  \draw (3,0) -- (5,0) -- (5,2) -- (3,2) -- cycle;
  \draw[red] (2.8,-0.2) -- (5.2,2.2);
  \draw[blue] (3,-0.2) -- (5,-0.2);
  \node[align=center, text=blue] at (4,-0.5) {\begin{small}$\diam(\cS_1,v_1)$\end{small}};
  \node[above, text=blue] at (1,0.1) {$v_1$};
  \node[left, text=red] at (0.7,1) {$v_2$};
  \node[left, text=red] at (2.85,1) {$v_2$};
  \draw[->,blue] (0.5, 0.1) -- (1.5, 0.1);
  \draw[->,blue] (3.5, 0.1) -- (4.5, 0.1);
  \draw[->,red] (0.15, 0.5) -- (0.15, 1.5);
  \draw[->,red] (2.85, 0.5) -- (2.85, 1.5);
  \draw[->,orange] (4, 1) -- (4.5, 0.5);
  \node[right, text=orange] at (4.2,1) {$w_1$};
  \draw[red] (5.2,0) -- (5.2,2);
  \node[rotate=90, text=red] at (5.5,1) {\begin{small}$\diam(\cS_1,v_2)$\end{small}};
  \fill[blue!30,opacity=0.5] (3,0) -- (3,2) -- (5,2) -- cycle;
  \node[align=center] at (3.5,1.5) {$\cS_1$};
  \node[above, text=blue] at (4,0.1) {$v_1$};
\end{tikzpicture}
\caption{\label{figure:binary_search_is_difficult}Illustration of a case where the volume $\cS_0$ is cut along a direction $w_1$ to give a new confidence set $\cS_1$; while the diameter is not reduced along the directions $v_1$ nor $v_2$.
}
\end{figure}

An early attempt to tackle this multidimensional binary search problem with adversarial directions was presented by \citet{cohen2020feature}, who used ellipsoid methods. Here, we use the recent strategy proposed by \citet{lobel2018multidimensional} and their $\pvaU$ subroutine, which is described further in \Cref{appendix:algorithms}.

\textbf{Non-stationarity of the reward shift.} For any rewards $\{(Y_a(t))_{a\in \cA_t}\colon t \in [T]\}$, we define the history as $\cG_t \coloneqq (\cA_s, A_s, U_s, Y_{A_s}(s))_{s\leq t}$ where $(U_s)_{s \in \N}$ is a family of independent uniform random variables on $[0,1]$ to allow randomization in the decision making. Let $\CAlg$ be a linear contextual bandit algorithm, i.e., $\CAlg \colon (U_t, \cG_{t-1}, \cA_t) \mapsto \Arec_t\in\cA_t$. When the principal is not running a binary search step, i.e., when $\cE_t$ is true, she plays the $\CAlg$ subroutine on her bandit instance. We define a subset $I_t$ of all the iterations during which $\CAlg$ is run and a shifted history $\alghist_t$ available at time $t$ as 
\begin{align}
    \label{equation:definition_I_t}
    & I_t \coloneqq \{s \in [t] \text{ such that } \cE_s \text{ is true}\} \eqsp, \\
    \nonumber
    \text{ and } \; & \alghist_t \coloneqq (\cA_s, A_s, U_s, X_{\Arec_s}(s) -\estsfr_{\Arec_t}(s,A_s) )_{s\in I_t}.
\end{align}
In our setup, $\CAlg$ will be fed in a black-box manner with this shifted history to issue recommendations $\Arec_t$, $\CAlg$ $\colon (U_t, \alghist_{t-1}, \cA_t) \mapsto \Arec_t$. At time $t$, for an action $\Arec_t$ recommended by $\CAlg$, our meta-algorithm $\calgU$ proposes an incentive designed so that the agent eventually picks $\Arec_t$ (\Cref{lemma:enough_incentives_contextual}): $A_t = \Arec_t$.

However, the last difference between the non-contextual and contextual cases is that the shift between $\hicv(t,a_t)$ and $\icvstar(t,a_t)$ is not constant anymore on bandit steps $t \in I_T$. 
This shift of rewards is interpreted as adversarial corruption \citep{bubeck2012best,lykouris2018stochastic}.

At each round, taking into account this shift, the optimal average utility associated with action $a \in \cA_t$ for the principal is $r^\star(t,a) := \langle \theta^\star, a \rangle - \icvstar(t,a)$, while the principal can only estimate a \textit{non-stationary} expected reward\footnote{Even if we were to feed the stochastic observations $(X_{A_s}(s)-\hat{\pi}(t,A_s))_{s\leq t}$ at time $t$, past algorithmic decisions would depend on different observation distributions, making the direct use of classical regret bounds of the bandit subroutine impossible.} $r^\star(t,a) + \epscorruption_t$ with the \textit{corruption level} $\epscorruption_t$ defined as
\begin{equation}
\label{equation:definition_espcorruption}
\epscorruption_t := \icvstar(t,A_t) - \hicv(t,A_t) \,
\end{equation}
and $\epscorruption_{I_t} \coloneqq (\epscorruption_s)_{s \in I_t}$. In this setup, we can define a corrupted regret as follows
\begin{multline}
    \label{equation:Rcorrupt_definition}
    R^{\text{corrupt}}_{\CAlg}(I_T, \epscorruption_{I_T}) \\= 
    \E\parentheseDeux{\sum_{t \in I_T} \max_{a \in \cA_t} r^\star(t,a) - r^\star(t, \Arec_t)} \eqsp,
\end{multline}
where $\Arec_t = \CAlg(U_t, \alghist_{t-1}(\epscorruption_{I_{t-1}}), \cA_t)$ and $\alghist_{t}(\epscorruption_{I_t}) = (\cA_s, A_s, U_s, r^\star(s,\Arec_s) + \eta_{\Arec_s}(s) + \epscorruption_s)_{s \in I_t}$. Then, we aim to minimize the corrupted regret with $\CAlg$, which is not possible using a naive linear contextual bandit algorithm.

\begin{algorithm}[!ht]
\caption{$\calgU$}\label{algorithm:contextual_algorithm}
\begin{algorithmic}[1]
    \STATE {\bfseries Input:} horizon $T$, subroutine $\CAlg$, $\pvacst = 1/16T^2 d (d+1)^2$
    \STATE {\bfseries Initialize:} $\alghist_0 = V_0 = \varnothing$, $\cS_0 = \{s\in \mathbb{R}^d:\|s\|\leq 1\}$ 
    \FOR{$t = 1, \ldots, T$}
        \STATE \textbf{Observe} available action set $\cA_t$
        \IF{$\cE_t$ is \texttt{FALSE}} \STATE \textcolor{blue}{\# Where $\cE_t$ is defined in \eqref{equation:definition_cE_t}}
            \STATE $\cS_{t+1}, \VV_{t+1}$
            \STATE $= \pvaU(T, \pvacst, \cS_t, \VV_t, a_t^1, a_t^2)$
        \ELSE{}
            \STATE Compute $\hat{\s}_t$ as the centroid of $\cS_t$
            \STATE Sample $U_t \sim  \Unif(0,1)$
            \STATE Get $\Arec_{t} = \CAlg(U_t, \alghist_t, \cA_t)$
            \STATE Let $\hicv(t, \Arec_t) = \max\limits_{a' \in \cA_t} \langle \hat{\s}_t, a'\rangle - \langle \hat{\s}_t, \Arec_t \rangle +\frac{2}{T}$
            \STATE \textbf{Propose} incentive function $\estsfr_{\Arec_t}(t,a) = \indi{\Arec_t}(a) \cdot \hicv(t,\Arec_t)$
            \STATE \textbf{Observe} $A_t$ as defined in \eqref{eq:contextual_action}, $X_{A_t}$
            \STATE Update $\alghist_t$ with $ (\cA_t, A_t, U_t, X_{A_t} - \tsfr(t,A_t))$
        \ENDIF
    \ENDFOR
\end{algorithmic}
\end{algorithm}
\textbf{Regret analysis.} We split the regret into three components, each of them being bounded separately. One of these components comes from the bias in the estimation of the optimal incentives.
Secondly, the principal incurs a cost due to the iterations of $\CAlg$ on the corrupted bandit instance. We use the results from \citet{he2022nearly} with a known corruption level to bound this term.
Finally, the last term follows from the multidimensional binary search steps used to estimate $\s^\star$. \Cref{lemma:boundedE} allows us to bound the number of such steps; see \Cref{app:contextual_setting} for a proof which builds on the work of  \citet{lobel2018multidimensional}.
\begin{restatable}{lemma}{llboundedE}
\label{lemma:boundedE}
Consider $\cE_t$ defined by \eqref{equation:definition_cE_t} with $(\cS_t)_{t \in [T]}$ defined by $\calgU$. Then it holds almost surely that
\begin{equation*}
    \sum_{t\geq1} \1_{\cE_t^\complementary}
    \leq 192 \cdot d\log (dT) \eqsp,
\end{equation*}
where $\cE_t$ is defined by \eqref{equation:definition_cE_t}.
\end{restatable}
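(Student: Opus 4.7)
The strategy is to invoke and adapt the convergence guarantee of the Projected Volume Algorithm of \citet{lobel2018multidimensional}, which is the engine behind the $\pvaU$ subroutine. The key observation is that whenever $\cE_t$ fails, the pair $(a_t^1, a_t^2)$ is chosen so that the projected diameter of $\cS_t$ along $w_t := (a_t^1 - a_t^2)/\|a_t^1 - a_t^2\|$ exceeds $4/T$, and the agent's choice between $a_t^1$ and $a_t^2$ realizes a single binary-search query in direction $w_t$, exactly as in the oracle model of \citet{lobel2018multidimensional}. Since the other rounds (where $\cE_t$ holds and $\CAlg$ is played) modify neither $\cS_t$ nor $\VV_t$, the subsequence of rounds with $\cE_t^\complementary$ can be identified with a consecutive sequence of adversarial queries to $\pvaU$, and the lemma reduces to bounding the total number of such queries before the projected diameter of $\cS_t$ falls everywhere below $4/T$.

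The core of the argument is a potential function $\Phi_t := \log \Vol(\pi_{\VV_t^\perp}(\cS_t))$, where $\pi_{\VV_t^\perp}$ denotes orthogonal projection onto the complement of the cylindrified subspace stored in $\VV_t$. Following Lobel et al., the three facts I would verify along the PVA dynamics are: (i) the dimension of $\VV_t$ increases at most $d$ times (cylindrification events); (ii) on every non-cylindrification round, $\Phi_t$ drops by at least a fixed absolute constant, via a Grünbaum-type inequality exploiting that $\hat{\s}_t$ is the centroid of $\cS_t$ and that the cut is taken along the direction of largest projected diameter; (iii) by construction of $\pvaU$, every direction still active in $\VV_t^\perp$ retains projected width at least of order $\pvacst$, which combined with $\pi_{\VV_t^\perp}(\cS_t) \subset \oball(0,1)$ provides a lower bound $\Phi_t \geq -c\, d \log(1/\pvacst)$ for an absolute constant $c>0$.

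Combining (i)--(iii) with the trivial upper bound $\Phi_0 = O(d)$ (log-volume of the unit ball) yields $O(d \log(1/\pvacst))$ on the number of non-cylindrification rounds, hence the same asymptotic bound on the total number of $\cE_t^\complementary$ rounds. Plugging in the algorithmic choice $\pvacst = 1/(16 T^2 d(d+1)^2)$, which gives $\log(1/\pvacst) = O(\log(dT))$, and tracking the per-step decay factor and the cylindrification contribution from Lobel et al.'s analysis yields the explicit prefactor~$192$.

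The main obstacle is fact (ii): rigorously showing that the query rule used in $\calgU$---the centroid $\hat{\s}_t$ paired with the threshold $\icv(t)$ along the worst-case direction $w_t$---induces a constant fractional loss of projected volume. This is precisely the content of the Grünbaum-type lemma at the heart of the PVA analysis, so the remaining technical task is to verify that our instantiation, in which the binary-search direction is externally dictated by the context $\cA_t$ rather than chosen freely by the algorithm, still falls within the hypotheses of the relevant theorem of \citet{lobel2018multidimensional}. Once this correspondence is in place, the rest of the proof is a deterministic counting argument that is unaffected by the bandit rounds interleaved between the binary-search steps.
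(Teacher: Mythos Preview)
Your proposal is correct and follows essentially the same route as the paper: the potential $\log \Vol(\Pi_{\VV_t^\perp}(\cS_t))$, the projected Gr\"unbaum lemma of \citet{lobel2018multidimensional} for the per-step constant drop, the at-most-$d$ cylindrification events with their volume blow-up factor $d(d+1)^2/\pvacst$, and the lower bound on the projected volume coming from the fact that all remaining directions in $\VV_t^\perp$ have width at least $\pvacst$. Two small corrections that do not affect the structure: the threshold defining $\cE_t$ is $1/T$ (not $4/T$), and $\hat{\s}_t$ is the centroid of the cylindrification $\Cyl(\cS_t,\VV_t)$ rather than of $\cS_t$ itself---this is exactly what is needed for the \emph{projected} Gr\"unbaum lemma (Lemma~7.1 of Lobel et al.) to yield the constant decay in your item~(ii).
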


\begin{restatable}{theorem}{regc}
\label{theorem:regret_bound_contextual}
If $\calgU$ is run with any linear contextual subroutine $\CAlg$, then with the same constant $192$ as in \Cref{lemma:boundedE}, the regret of $\calgU$ is bounded as
\begin{equation*}
    \regret(T) \leq 2 + R_\CAlg^{\text{corrupt}}(I_T, \epscorruption_{I_T}) + 1344 \, d \log dT \eqsp.
\end{equation*}
\end{restatable}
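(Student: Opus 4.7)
The plan is to begin from Lemma~\ref{lemma:regretdefinitioncontextual}, noting via~\eqref{equation:optimal_incentives_contextual} that the inner quantity $\langle \theta^\star + \s^\star, a\rangle - \max_{a'\in\cA_t}\langle \s^\star, a'\rangle$ equals $r^\star(t,a) := \langle \theta^\star, a\rangle - \icvstar(t,a)$, so that
\begin{equation*}
\regret(T) = \E\left[\sum_{t=1}^T \max_{a \in \cA_t} r^\star(t, a) - \bigl(\langle \theta^\star, A_t\rangle - \tsfr(t, A_t)\bigr)\right].
\end{equation*}
I would then partition the sum over $t \in [T]$ into the bandit rounds $I_T$ (on which $\cE_t$ holds and $\calgU$ calls $\CAlg$) and the binary-search rounds $I_T^\complementary$ (on which $\calgU$ calls $\pvaU$), and bound each piece separately.

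For the binary-search rounds, I would argue that each contributes only a constant to the regret. On such a round, the incentive function is supported on two arms with values bounded by a constant: the principal plays $\tsfr(t, a_t^1) = 3$ and $\tsfr(t, a_t^2) = 3 + \icv(t)$ where $\icv(t)$ is bounded by the diameter of $\oball(0,1)$ in direction $w_t$, i.e. $\icv(t) \leq 2$, using the global bound~\eqref{equation:bounded_incentives_contextual} and Assumption~\ref{assumption:bounded_in_unit_ball}. Combined with $|\langle \theta^\star, a\rangle| \leq 1$ and $\max_{a\in\cA_t} r^\star(t,a) \leq 1$, this caps the per-round regret at $7$. Lemma~\ref{lemma:boundedE} then yields the contribution $7 \cdot 192\, d\log(dT) = 1344\, d\log(dT)$.

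For the bandit rounds, Lemma~\ref{lemma:enough_incentives_contextual} forces $A_t = \Arec_t$ and $\tsfr(t, A_t) = \hicv(t, \Arec_t)$, and I would decompose
\begin{equation*}
\max_{a\in\cA_t} r^\star(t,a) - \bigl(\langle \theta^\star, \Arec_t\rangle - \hicv(t, \Arec_t)\bigr) = \bigl(\max_{a\in\cA_t} r^\star(t,a) - r^\star(t, \Arec_t)\bigr) + \bigl(\hicv(t, \Arec_t) - \icvstar(t, \Arec_t)\bigr).
\end{equation*}
Summing the first bracket over $t \in I_T$ gives, in expectation, exactly $R^{\text{corrupt}}_\CAlg(I_T, \epscorruption_{I_T})$ from~\eqref{equation:Rcorrupt_definition}, because the shifted history fed into $\CAlg$ satisfies $X_{\Arec_t} - \hicv(t, \Arec_t) = r^\star(t, \Arec_t) + \eta_{\Arec_t}(t) + \epscorruption_t$ with $\epscorruption_t$ as in~\eqref{equation:definition_espcorruption}. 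The second bracket lies in $[0, 4/T]$ by Lemma~\ref{lemma:enough_incentives_contextual}, and telescoping over $|I_T| \leq T$ yields the small additive constant in the theorem.

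The main obstacle I anticipate is the bookkeeping needed to match the adversarial history $\alghist_t$ fed into $\CAlg$ with the abstract corrupted-regret object~\eqref{equation:Rcorrupt_definition}: specifically, one has to check that $\CAlg$'s recommendations under $\alghist_t$ indeed coincide with the actions played by the agent on the rounds $t \in I_T$, and that the corruption process has the form $\epscorruption_t = \icvstar(t, A_t) - \hicv(t, A_t)$ used in~\eqref{equation:Rcorrupt_definition}. Tightly controlling $\icv(t)$ inside $\pvaU$ to preserve the per-round bound of $7$ in the binary-search phase also requires some care.
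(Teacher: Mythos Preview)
Your proposal is correct and follows essentially the same route as the paper: split the rounds according to whether $\cE_t$ holds, bound the binary-search rounds by a per-round cost of $7$ times the count from Lemma~\ref{lemma:boundedE}, and on the bandit rounds recover the corrupted regret plus an $O(1)$ term coming from $\hicv(t,\Arec_t)-\icvstar(t,\Arec_t)\in(0,4/T]$. Your decomposition on the bandit rounds,
\[
\max_{a\in\cA_t} r^\star(t,a) - \bigl(\langle \theta^\star, \Arec_t\rangle - \hicv(t,\Arec_t)\bigr)
= \bigl(\max_{a\in\cA_t} r^\star(t,a) - r^\star(t,\Arec_t)\bigr) + \bigl(\hicv(t,\Arec_t)-\icvstar(t,\Arec_t)\bigr),
\]
is in fact cleaner than the paper's argument, which reaches the same conclusion via a longer chain of adding and subtracting $\langle \s^\star,\cdot\rangle$ and $\langle \hat\s_t,\cdot\rangle$ terms before identifying $R^{\text{corrupt}}_{\CAlg}$. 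One small remark: your bound on the second bracket gives an additive $4$ (as does the paper's own proof), not the $2$ appearing in the theorem statement; this is a harmless constant discrepancy already present in the paper.
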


We emphasize that our results still hold with any contextual linear bandit algorithm and that the overall regret is mostly driven by the term $R_\CAlg^{\text{corrupt}}$: although the principal has to solve simultaneously a \textit{pricing-like} problem and a \textit{stochastic bandit} problem, she almost achieves linear bandit state-of-the art regret.

The main difference between traditional and corrupted rewards in the bandit setting lies in the fact that, in the former case, rewards are typically assumed to be i.i.d., whereas in the latter case, they may be chosen adversarially. An algorithm is robust to corruptions if it yields regret guarantees for any possible reward corruption within a specific budget. This kind of problem was first considered by \citet{bubeck2012best} and has been extensively studied since \citep[see, e.g.,][]{kapoor2019corruption}. In our setting, a bound for the corruption budget is available, thanks to \Cref{lemma:shifted_reward_bound} below. 
\begin{restatable}{lemma}{shiftedrewardbound}
\label{lemma:shifted_reward_bound}
    Consider $(\cE_t)_{t \in [T]}$ defined by \eqref{equation:definition_cE_t} with $(\cS_t)_{t \in [T]}$ defined by $\calgU$. Let $I_T$ and $(\epscorruption_t)_{t \in [T]}$ as defined in \eqref{equation:definition_I_t} and \eqref{equation:definition_espcorruption} and $t \in [T]$ such that $\cE_t$ is true. Then
    $|\epscorruption_t| \leq 4/T \; \text{ and }\; \sum_{t \in I_T} |\epscorruption_t| \coloneqq \corruption \leq 4$.
\end{restatable}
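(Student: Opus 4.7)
The plan is to deduce both claims directly from Lemma~\ref{lemma:enough_incentives_contextual}, which already contains essentially all the quantitative work. The first claim is a pointwise bound on the per-round corruption, and the second is an immediate consequence of $|I_T| \leq T$.

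For the pointwise bound, I would fix $t \in I_T$, so that $\cE_t$ holds by the definition \eqref{equation:definition_I_t}. Under $\cE_t$, $\calgU$ proposes the incentive function $\estsfr_{\Arec_t}$ with $\estsfr_{\Arec_t}(t,\Arec_t) = \hicv(t,\Arec_t)$, and by the corollary of Lemma~\ref{lemma:enough_incentives_contextual} noted just after its statement, the agent's greedy choice coincides with the subroutine's recommendation, namely $A_t = \Arec_t$. Substituting this identity into the definition \eqref{equation:definition_espcorruption} of $\epscorruption_t$ gives
\begin{equation*}
\epscorruption_t = \icvstar(t,\Arec_t) - \hicv(t,\Arec_t).
\end{equation*}
The sandwich inequality $\icvstar(t,\Arec_t) < \hicv(t,\Arec_t) \leq \icvstar(t,\Arec_t) + 4/T$ from Lemma~\ref{lemma:enough_incentives_contextual} then forces $\epscorruption_t \in [-4/T, 0)$, which in particular yields $|\epscorruption_t| \leq 4/T$.

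For the budget bound, I would simply sum the previous estimate over $t \in I_T$. Since $I_T \subseteq [T]$ one has $|I_T| \leq T$, so
\begin{equation*}
\sum_{t \in I_T} |\epscorruption_t| \leq \frac{4}{T} \cdot |I_T| \leq \frac{4}{T} \cdot T = 4.
\end{equation*}

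No serious obstacle arises: the only delicate point is to make sure that on steps in $I_T$ the agent indeed selects $\Arec_t$ (so that the corruption is evaluated at the same action the bandit subroutine would see) and that the additive-$2/T$ slack baked into $\hicv(t,\cdot)$ combined with the diameter bound $1/T$ guaranteed by $\cE_t$ yields the factor $4/T$ rather than a larger constant. Both facts are already handled inside Lemma~\ref{lemma:enough_incentives_contextual}, so the present lemma reduces to invoking that result and using $|I_T|\leq T$.
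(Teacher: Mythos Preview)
Your proposal is correct and follows essentially the same approach as the paper's own proof: invoke Lemma~\ref{lemma:enough_incentives_contextual} on a round $t\in I_T$ to obtain $|\epscorruption_t|\leq 4/T$, then sum using $|I_T|\leq T$. Your write-up is in fact slightly more detailed than the paper's (you make explicit that $A_t=\Arec_t$ and that $\epscorruption_t\in[-4/T,0)$), but the argument is the same.
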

 With standard bandit assumptions, we can then consider a corruption robust algorithms, such as $\Corralg$ from \citet{he2022nearly}.
\begin{assumption}
\label{assumption:contextual_subgaussian}
    At each round $t \geq 1$, for any action $a \in \cA_t$, the principal's reward $X_a(t)$ is $\alghist_t$-conditionally 1-subgaussian, i.e., for any $\lambda \in \R$, we have $\E\parentheseDeux{\rme^{\lambda (X_a(t) - \E\parentheseDeux{X_a(t)})}|\cH_{t-1}} \leq \rme^{\lambda^2/2}$.
\end{assumption}

\begin{corollary}\label{corollary:regret_bound_contextual}
Suppose that \Cref{assumption:contextual_subgaussian} is true. If \Cref{algorithm:contextual_algorithm} is run with the subroutine $\CAlg \coloneqq \Corralg$ proposed by \citet{he2022nearly}, the regularization parameter $\lambda=1$ and a confidence level $\probreg = 1/T$, the following bound holds
\begin{align*}
    \regret(T) \leq 11 + 1344 \, d \log (dT) + C_{\CAlg} d \sqrt{T} \log T ,
\end{align*}
with $C_{\CAlg}$ being an universal constant.
\end{corollary}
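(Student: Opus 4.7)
The plan is to take \Cref{theorem:regret_bound_contextual} as the starting point and specialize the term $R_{\CAlg}^{\text{corrupt}}(I_T, \epscorruption_{I_T})$ to the $\Corralg$ subroutine of \citet{he2022nearly}. Since the two additive terms $2 + 1344\,d\log(dT)$ already appear in the target bound, the work reduces to controlling $R_{\Corralg}^{\text{corrupt}}(I_T, \epscorruption_{I_T})$ by $C_{\CAlg}\,d\sqrt{T}\log T$, up to an $O(1)$ slack that will be absorbed into the leading constant $11$.

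First I would verify that the preconditions of the $\Corralg$ regret bound are met on the instance fed to it. Under \Cref{assumption:bounded_in_unit_ball}, the action sets satisfy $\cA_t \subseteq \oball(0,1)$ and $\theta^\star \in \oball(0,1)$; \Cref{assumption:contextual_subgaussian} guarantees that the stochastic part of the shifted reward $X_{\Arec_s}(s) - \estsfr_{\Arec_s}(s,A_s)$ is conditionally $1$-subgaussian, so the linear-contextual stochastic-plus-adversarial-corruption model of \citet{he2022nearly} applies on the rounds $t \in I_T$. By \Cref{lemma:shifted_reward_bound}, the total corruption budget on those rounds is almost surely bounded by $\corruption \leq 4$, and this bound is known to the principal a priori, so $\Corralg$ can be instantiated with $C = 4$.

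Next I would invoke the high-probability regret bound of $\Corralg$ with regularization $\lambda = 1$ and confidence level $\probreg = 1/T$: on an event of probability at least $1-\probreg$, for any $n \leq T$ rounds of play, the regret is at most $\tilde C_1\, d\sqrt{n}\log(n/\probreg) + \tilde C_2\, d C \log(n/\probreg)$ for universal constants $\tilde C_1, \tilde C_2$. Applying this with $n = |I_T|\leq T$ and $C \leq 4$ gives a high-probability bound of order $d\sqrt{T}\log T$, the corruption term $dC\log(T/\probreg) = O(d\log^2 T)$ being absorbed. To convert to an expected regret, I would use the boundedness of the per-round regret: under \Cref{assumption:bounded_in_unit_ball} the clean reward $\langle \theta^\star, a\rangle$ lies in $[-1,1]$, and by \eqref{equation:bounded_incentives_contextual} together with \Cref{lemma:enough_incentives_contextual} the incentive $\hicv(t,a)$ is uniformly bounded (by $3 + 2/T$), so the failure event of probability $\probreg = 1/T$ contributes at most $O(1)$ to the expectation and is absorbed into the constant $11$. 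Collecting all constants into $C_{\CAlg}$ and plugging back into \Cref{theorem:regret_bound_contextual} yields the stated inequality.

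The main obstacle is matching the precise form of the high-probability regret guarantee of $\Corralg$ with the feedback model appearing in \eqref{equation:Rcorrupt_definition}: one has to verify that our definition of corruption as an additive, per-round adversarial shift $\epscorruption_t$ applied only on rounds in $I_T$ coincides with the setting analyzed by \citet{he2022nearly}, and that $\Corralg$ receives only the data indexed by $I_T$ without its guarantee being affected by the binary-search rounds interleaved with it. A secondary subtlety is bookkeeping of logarithmic factors so that the $d\sqrt{T}\log T$ term emerges cleanly rather than with an extra $\log\log$ or $\log^{3/2}$ factor.
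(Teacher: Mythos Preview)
Your proposal is correct and follows essentially the same route as the paper: start from \Cref{theorem:regret_bound_contextual}, use \Cref{lemma:shifted_reward_bound} to certify a known corruption budget $\corruption\leq 4$, invoke the high-probability guarantee of $\Corralg$ with $\lambda=1$ and $\probreg=1/T$, and convert to expectation via a uniform per-round regret bound (the paper uses the value $7$ established in the proof of \Cref{theorem:regret_bound_contextual}, yielding the $7T\cdot(1/T)=7$ slack that combines with the $+4$ from that proof to give the leading $11$). Your worry about an extra $\log^{3/2}$ factor is unfounded: the lower-order term in the $\Corralg$ bound is $O(d(\log T)^{3/2})$, which is absorbed into $C_{\CAlg}\,d\sqrt{T}\log T$.
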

As in the multi-armed setting, the obtained regret bounds are comparable to the achievable best performance in the standard bandit settings, where the principal does not need to estimate the agent's parameters $s^{\star}$.

\section{Experiments}

\looseness=-1 We illustrate our theoretical findings with experiments on a toy example and compare $\algU$ with the \texttt{Principal's $\varepsilon$-Greedy algorithm} of \citet{dogan2023repeated}. We also compare with a \texttt{UCB Oracle} baseline that runs \texttt{UCB} on the shifted bandit instance with arm means $\mu_a \coloneqq \theta_a - \icvstar_a$ and no principal-agent consideration. This baseline corresponds to the case where the principal knows the agent's reward vector and therefore only has to consider a bandit algorithm. Experimental details can be found in \Cref{appendix:experiments}. We observe in \Cref{figure:regret_comparison_non_contextual} that the Principal $\epsilon$-Greedy Algorithm from \citet{dogan2023repeated} exhibits suboptimal performance. Additionally, another issue arises from its computational complexity, requiring an optimization step at every round. In comparison, $\algU$ yields a regret nearly equal to the one of \texttt{Oracle UCB}, illustrating that the cost of estimating the agent's preferences, obtained from binary search, is negligible for $\algU$.
\begin{figure}[!htb]\centering
    \includegraphics[width=0.42\textwidth,trim=1cm 0cm 2cm 1.5cm, clip]{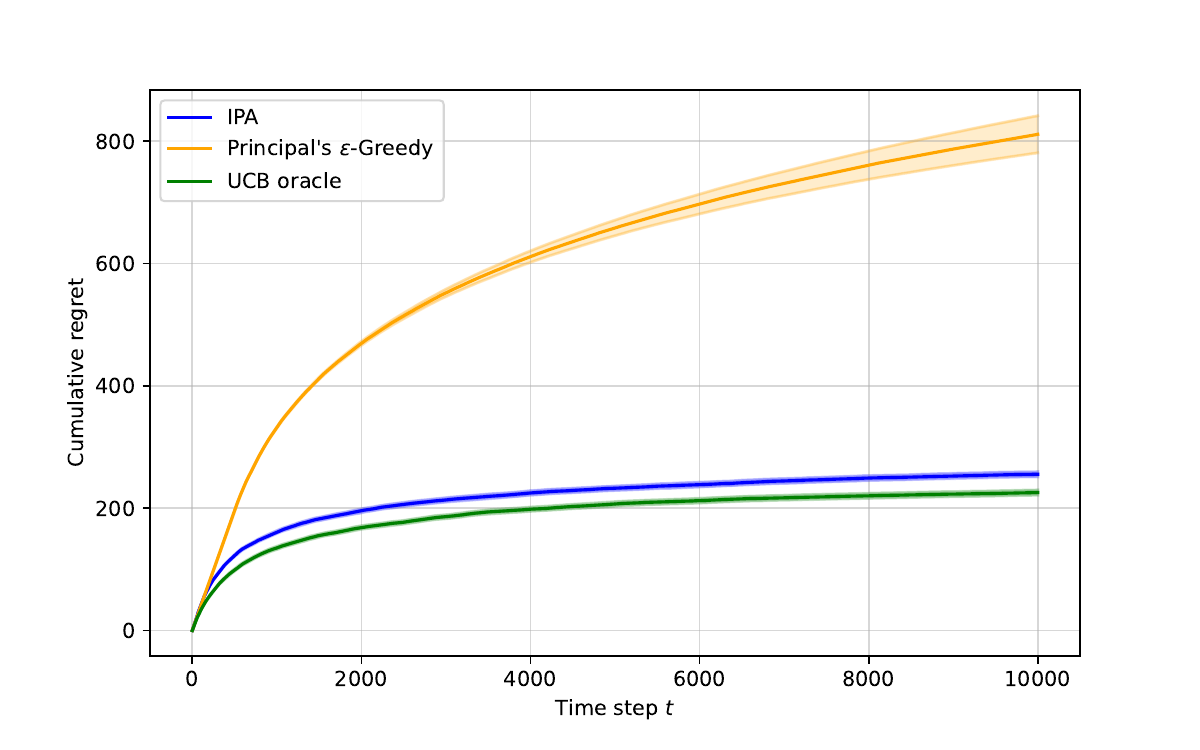}
    \caption{Cumulative regret for different algorithms on a $5$ arms instance.}
    \label{figure:regret_comparison_non_contextual}
\end{figure}

\section{Lower Bounds} For the sake of clarity, we stick to the multi-armed case of \Cref{sec:IPA} in this section. A simple observation yields that  
\begin{equation*}
    \regret(T) \geq \E\parentheseDeux{\sum_{t=1}^T \mu^\star - \mu_{A_t} }\eqsp, \label{eq:lowerbound}
\end{equation*}
where $\mu_a = \theta_a - \pi^{\star}_a$ and $\mu^{\star}=\max_{a\in[K]}\mu_a$. 
Even if the principal was to know the optimal incentives $(\icvstar_a)_{a \in [K]}$, she would still face a bandit instance with arm means $\mu_a$. From there, we can directly extend standard lower bounds from the bandit literature to our setting \citep{lai1985asymptotically,burnetas1996optimal}.
\begin{proposition}
\label{proposition:lower_bound}
Let $\cD$ be a class of distributions. Consider the multi-armed case of \Cref{sec:IPA} and a policy satisfying for any instance $\nu \in \cD^K$ and $\alpha>0$, $\regret(T) = o(T^{\alpha})$. Then, for any $\nu\in\cD^K$,
\begin{equation*}
\liminf_{T\to\infty} \frac{\regret(T)}{\log T}\geq \sum_{a,\mu_a<\mu^{\star}} \frac{\mu^{\star}-\mu_a}{\mathrm{KL}_{\inf}(\nu_a-\pi^{\star}_a,\mu^{\star},\cD)} \eqsp,
\end{equation*}
where denoting by $\mathrm{KL}$ the Kullback-Leibler divergence,
\begin{multline*}
        \mathrm{KL}_{\inf}(\rho,\mu^{\star},\cD)\\
    \coloneqq \!\inf\limits \{\mathrm{KL}(\rho,\rho') \colon\! \rho' \in \cD\eqsp, \int x \rho'(\rmd x) > \mu^\star \} \eqsp,
\end{multline*}
\end{proposition}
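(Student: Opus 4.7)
The plan is to reduce the principal--agent regret to the regret of a standard stochastic bandit and then invoke the classical Lai--Robbins / Burnetas--Katehakis change-of-measure argument.

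The reduction is the inequality $\regret(T) \geq \E[\sum_{t=1}^T(\mu^\star - \mu_{A_t})]$ sketched just above the proposition; I would justify it starting from \Cref{lemma:regretdefinitionnoncontextual} and the identity $\mu_a = \theta_a + \s_a - \max_{a'} \s_{a'}$ by checking that $\theta_{A_t} - \1_{a_t}(A_t)\icv(t) \leq \mu_{A_t}$ on every round. Two cases arise from the agent's greedy rule \eqref{eq:def_A_t}: if $A_t = a_t$, then $\icv(t) \geq \max_{a'} \s_{a'} - \s_{A_t} = \icvstar_{A_t}$ and the utility is at most $\theta_{A_t} - \icvstar_{A_t} = \mu_{A_t}$; if $A_t \neq a_t$, then $A_t$ already maximizes $\s$, so $\icvstar_{A_t} = 0$ and the utility equals $\theta_{A_t} = \mu_{A_t}$. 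This exhibits the principal--agent regret as an upper bound on the regret of the induced action process $(A_t)$ on a stochastic bandit with distributions $(\nu_a - \icvstar_a)_{a\in[K]}$ and means $(\mu_a)_{a\in[K]}$.

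Next I would run the standard change-of-measure argument on this induced bandit. Fix a suboptimal arm $a$ with $\mu_a < \mu^\star$, pick any $\rho' \in \cD$ with $\int x \rho'(\rmd x) > \mu^\star$, and define the alternative instance $\nu'$ by replacing $\nu_a$ with the translate of $\rho'$ by $\icvstar_a$, keeping all other $\nu_b$ and the agent's reward vector $\s$ unchanged; hence $\icvstar$ is common to $\nu$ and $\nu'$, and arm $a$ becomes the unique best arm of the shifted bandit under $\nu'$. Since the map $(a_t,\icv(t)) \mapsto A_t$ is common to both instances and only $\nu_a$ changes, the divergence decomposition lemma (Lemma~15.1 in \citet{lattimore2020bandit}) applies to the principal--agent protocol and yields
\begin{equation*}
    \mathrm{KL}(\mathbb{P}_\nu, \mathbb{P}_{\nu'}) = \E_\nu[N_a(T)] \cdot \mathrm{KL}(\nu_a, \nu'_a) = \E_\nu[N_a(T)] \cdot \mathrm{KL}(\nu_a - \icvstar_a, \rho'),
\end{equation*}
with $N_a(T) = \sum_{t=1}^T \1_{A_t = a}$ and using translation-invariance of $\mathrm{KL}$. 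The Bretagnolle--Huber inequality applied to $\{N_a(T) \geq T/2\}$, combined with Markov's inequality and the consistency hypothesis (used under both $\nu$ and $\nu'$ via the first-paragraph reduction to bound $\E_\nu[N_a(T)]/T$ and $1 - \E_{\nu'}[N_a(T)]/T$), upgrades this identity to $\E_\nu[N_a(T)] \cdot \mathrm{KL}(\nu_a - \icvstar_a, \rho') \geq (1-o(1))\log T$. Substituting into $\regret(T) \geq \sum_{b:\mu_b < \mu^\star}(\mu^\star - \mu_b)\E_\nu[N_b(T)]$, dividing by $\log T$, and taking the infimum over admissible $\rho'$ yields the stated $\mathrm{KL}_{\inf}$ bound.

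The main subtlety I anticipate is the embedding step defining $\nu'_a$: the change of measure requires $\nu'_a \in \cD$, which implicitly assumes $\cD$ is closed under the relevant scalar translations (so that the translate of any $\rho' \in \cD$ remains in $\cD$). Beyond this, the proof is essentially a verbatim instance of the Burnetas--Katehakis lower bound; the only principal--agent-specific content lies in the reduction of the second paragraph, showing that the principal's regret dominates the regret of the induced shifted bandit.
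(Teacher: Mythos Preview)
Your proposal is correct and follows essentially the same approach as the paper: establish the reduction $\regret(T)\geq \E[\sum_t(\mu^\star-\mu_{A_t})]$ and then apply the Burnetas--Katehakis lower bound to the shifted bandit $(\nu_a-\icvstar_a)_{a\in[K]}$. The paper simply cites \citet{burnetas1996optimal} (via \citet[Theorem~16.2]{lattimore2020bandit}) for the second step, whereas you unfold the change-of-measure argument explicitly; your case analysis for the reduction and your remark that $\cD$ must be closed under translation (so that $\nu'_a=\rho'+\icvstar_a\in\cD$) make explicit an assumption the paper's proof also relies on but leaves implicit.
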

\vspace{-0.4cm}
The complete proof is postponed to \Cref{appendix:lower_bound}. \Cref{proposition:lower_bound} states that $\algU$ yields a nearly optimal regret. Similar arguments can be made in the contextual setting.
\section{Conclusion and Possible Extensions}

This paper presents two novel algorithms called $\algU$ and $\calgU$, tackling generalizations of both multi-armed and contextual bandits that account for principal-agent interactions.
By decoupling the learning of the agent and the estimation of the principal's parameters, we are able to obtain a nearly optimal algorithm, improving over the previous work of \citet{dogan2023repeated}. 
Overall, we obtain an efficient principal-agent bandit framework that allows us to take into account an interaction between a principal and an agent with misaligned interests in a bandit environment. There are various possible extensions of our work, among which considering strategic behaviors for repeated interactions with a single agent or uncertainty on the agent's side. 




\textbf{Information rent.} Again, we consider the multi-armed case for the sake of clarity. We assumed that the agent is always greedy and therefore chooses at time $t$ an action following
\begin{equation}
\label{equation:action_extension}
    A_t = \argmax_{a \in \cA_t} \{ s_a + \indi{a_t}(a) \icv(t) \} \eqsp.
\end{equation}
However, nothing prevents the agent from lying and choosing another action instead of $A_t$. The maximal total welfare that can be extracted at each round is $\max_{a \in \cA} \{ s_a + \theta_a\}$. In our setting, with a trustful agent, this reward was shared between the two actors with an average reward $\max_{a' \in \cA} s_{a'}$ for the agent and $\max_{a \in \cA} \{s_a + \theta_a\} - \max_{a' \in \cA} s_{a'}$ for the principal. However, as it is exposed by \citet[][Section 4]{dogan2023repeated}, the agent could play with a malicious policy and choose $A_t$ as if he had different $(s_a)_{a \in [K]}$. In that case, he can extract an individual reward $\max_{a \in \cA} \{s_a + \theta_a\} - \min_{a' \in \cA} \theta_{a'}$, while letting a reward $\min_{a' \in [K]} \theta_{a'}$ to the principal. In that case, the agent exloits his \textit{information rent} to increase his profits. Against such adversarial and powerful agents, the principal cannot do more than play and learn with the $(s_a)_{a \in [K]}$ \textit{announced} by the agent. However, this situation is not an issue with myopic agents who act greedily since each of them tries to maximize his own instantaneous reward and eventually select $A_t$ from \eqref{equation:action_extension}. This situation is encountered in many applications, where each agent has a single round interaction with the principal for the whole game.

\textbf{Learning agents.} A possible extension would be to incorporate uncertainty on the agent's side and consider learning agents \citep[see, e.g.,][]{dogan2023estimating}. However again, when considering single round interactions with the agents, each agent myopically maximizes his reward \textit{a priori}. Consequently, the agent policy is stationary and would be driven by $s_a$, the expected \textit{beliefs} on the action rewards. The single interaction model is already well suited for numerous real world applications. In the case of repeated interactions between the principal and a single learning agent,  it becomes much more complex, as this agent can both learn his true rewards on the run while trying to influence future actions of the principal with his own choices. Restricting the agent's policy to a specific set might then be necessary, as done by \citet{dogan2023estimating} with Agent's $\epsilon$-Greedy strategy. The major learning difficulty from the principal side would then come from the non-stationarity of the agents decisions and could be handled using non-stationary bandits algorithms \citep[see, e.g.,][]{gittins1979bandit, lattimore2020bandit}.

\section*{Acknowledgements}
Funded by the European Union (ERC, Ocean, 101071601). Views and opinions expressed are however those of the author(s) only and do not necessarily reflect those of the European Union or the European Research Council Executive Agency. Neither the European Union nor the granting authority can be held responsible for them. The work of DT has been supported by the Paris Île-de-France Région in the framework of DIM AI4IDF.

\bibliography{sample}

\begin{thebibliography}{44}
\providecommand{\natexlab}[1]{#1}
\providecommand{\url}[1]{\texttt{#1}}
\expandafter\ifx\csname urlstyle\endcsname\relax
  \providecommand{\doi}[1]{doi: #1}\else
  \providecommand{\doi}{doi: \begingroup \urlstyle{rm}\Url}\fi

\bibitem[Abe \& Long(1999)Abe and Long]{abe1999associative}
Abe, N. and Long, P.~M.
\newblock Associative reinforcement learning using linear probabilistic concepts.
\newblock In \emph{ICML}, pp.\  3--11, 1999.

\bibitem[Auer(2002)]{auer2002using}
Auer, P.
\newblock Using confidence bounds for exploitation-exploration trade-offs.
\newblock \emph{Journal of Machine Learning Research}, 3\penalty0 (Nov):\penalty0 397--422, 2002.

\bibitem[Banihashem et~al.(2023)Banihashem, Hajiaghayi, Shin, and Slivkins]{banihashem2023bandit}
Banihashem, K., Hajiaghayi, M., Shin, S., and Slivkins, A.
\newblock Bandit social learning: Exploration under myopic behavior.
\newblock \emph{arXiv preprint arXiv:2302.07425}, 2023.

\bibitem[Ben-Porat et~al.(2023)Ben-Porat, Mansour, Moshkovitz, and Taitler]{ben2023principal}
Ben-Porat, O., Mansour, Y., Moshkovitz, M., and Taitler, B.
\newblock Principal-agent reward shaping in mdps.
\newblock \emph{arXiv preprint arXiv:2401.00298}, 2023.

\bibitem[Bertsimas \& Vempala(2004)Bertsimas and Vempala]{bertsimas2004solving}
Bertsimas, D. and Vempala, S.
\newblock Solving convex programs by random walks.
\newblock \emph{Journal of the ACM (JACM)}, 51\penalty0 (4):\penalty0 540--556, 2004.

\bibitem[Boursier \& Perchet(2022)Boursier and Perchet]{boursier2022survey}
Boursier, E. and Perchet, V.
\newblock A survey on multi-player bandits.
\newblock \emph{arXiv preprint arXiv:2211.16275}, 2022.

\bibitem[Bubeck \& Slivkins(2012)Bubeck and Slivkins]{bubeck2012best}
Bubeck, S. and Slivkins, A.
\newblock The best of both worlds: Stochastic and adversarial bandits.
\newblock In \emph{Conference on Learning Theory}, pp.\  42--1. JMLR Workshop and Conference Proceedings, 2012.

\bibitem[Burnetas \& Katehakis(1996)Burnetas and Katehakis]{burnetas1996optimal}
Burnetas, A.~N. and Katehakis, M.~N.
\newblock Optimal adaptive policies for sequential allocation problems.
\newblock \emph{Advances in Applied Mathematics}, 17\penalty0 (2):\penalty0 122--142, 1996.

\bibitem[Cai et~al.(2023)Cai, Chen, Wainwright, and Zhao]{cai2023doubly}
Cai, J., Chen, R., Wainwright, M.~J., and Zhao, L.
\newblock Doubly high-dimensional contextual bandits: An interpretable model for joint assortment-pricing.
\newblock \emph{arXiv preprint arXiv:2309.08634}, 2023.

\bibitem[Castiglioni et~al.(2020)Castiglioni, Celli, Marchesi, and Gatti]{castiglioni2020online}
Castiglioni, M., Celli, A., Marchesi, A., and Gatti, N.
\newblock Online bayesian persuasion.
\newblock \emph{Advances in Neural Information Processing Systems}, 33:\penalty0 16188--16198, 2020.

\bibitem[Cohen et~al.(2022)Cohen, Deligkas, and Koren]{cohen2022learning}
Cohen, A., Deligkas, A., and Koren, M.
\newblock Learning approximately optimal contracts.
\newblock In \emph{International Symposium on Algorithmic Game Theory}, pp.\  331--346. Springer, 2022.

\bibitem[Cohen et~al.(2020)Cohen, Lobel, and Paes~Leme]{cohen2020feature}
Cohen, M.~C., Lobel, I., and Paes~Leme, R.
\newblock Feature-based dynamic pricing.
\newblock \emph{Management Science}, 66\penalty0 (11):\penalty0 4921--4943, 2020.

\bibitem[Conitzer \& Garera(2006)Conitzer and Garera]{conitzer2006learning}
Conitzer, V. and Garera, N.
\newblock Learning algorithms for online principal-agent problems (and selling goods online).
\newblock In \emph{Proceedings of the 23rd international conference on Machine learning}, pp.\  209--216, 2006.

\bibitem[Dani et~al.(2008)Dani, Hayes, and Kakade]{dani2008stochastic}
Dani, V., Hayes, T.~P., and Kakade, S.~M.
\newblock Stochastic linear optimization under bandit feedback.
\newblock 2008.

\bibitem[Den~Boer(2015)]{den2015dynamic}
Den~Boer, A.~V.
\newblock Dynamic pricing and learning: historical origins, current research, and new directions.
\newblock \emph{Surveys in operations research and management science}, 20\penalty0 (1):\penalty0 1--18, 2015.

\bibitem[Dogan et~al.(2023{\natexlab{a}})Dogan, Shen, and Aswani]{dogan2023estimating}
Dogan, I., Shen, Z.-J.~M., and Aswani, A.
\newblock Estimating and incentivizing imperfect-knowledge agents with hidden rewards.
\newblock \emph{arXiv preprint arXiv:2308.06717}, 2023{\natexlab{a}}.

\bibitem[Dogan et~al.(2023{\natexlab{b}})Dogan, Shen, and Aswani]{dogan2023repeated}
Dogan, I., Shen, Z.-J.~M., and Aswani, A.
\newblock Repeated principal-agent games with unobserved agent rewards and perfect-knowledge agents.
\newblock \emph{arXiv preprint arXiv:2304.07407}, 2023{\natexlab{b}}.

\bibitem[Evans et~al.(2017)Evans, Terhorst, and Kang]{evans2017data}
Evans, K.~J., Terhorst, A., and Kang, B.~H.
\newblock From data to decisions: helping crop producers build their actionable knowledge.
\newblock \emph{Critical reviews in plant sciences}, 36\penalty0 (2):\penalty0 71--88, 2017.

\bibitem[Gittins(1979)]{gittins1979bandit}
Gittins, J.~C.
\newblock Bandit processes and dynamic allocation indices.
\newblock \emph{Journal of the Royal Statistical Society Series B: Statistical Methodology}, 41\penalty0 (2):\penalty0 148--164, 1979.

\bibitem[Golrezaei et~al.(2023)Golrezaei, Jaillet, and Liang]{golrezaei2023incentive}
Golrezaei, N., Jaillet, P., and Liang, J. C.~N.
\newblock Incentive-aware contextual pricing with non-parametric market noise.
\newblock In \emph{International Conference on Artificial Intelligence and Statistics}, pp.\  9331--9361. PMLR, 2023.

\bibitem[Gr{\"o}tschel et~al.(2012)Gr{\"o}tschel, Lov{\'a}sz, and Schrijver]{grotschel2012geometric}
Gr{\"o}tschel, M., Lov{\'a}sz, L., and Schrijver, A.
\newblock \emph{Geometric algorithms and combinatorial optimization}, volume~2.
\newblock Springer Science \& Business Media, 2012.

\bibitem[He et~al.(2022)He, Zhou, Zhang, and Gu]{he2022nearly}
He, J., Zhou, D., Zhang, T., and Gu, Q.
\newblock Nearly optimal algorithms for linear contextual bandits with adversarial corruptions.
\newblock \emph{Advances in Neural Information Processing Systems}, 35:\penalty0 34614--34625, 2022.

\bibitem[Hu et~al.(2022)Hu, Ngo, Slivkins, and Wu]{hu2022incentivizing}
Hu, X., Ngo, D., Slivkins, A., and Wu, S.~Z.
\newblock Incentivizing combinatorial bandit exploration.
\newblock \emph{Advances in Neural Information Processing Systems}, 35:\penalty0 37173--37183, 2022.

\bibitem[Javanmard \& Nazerzadeh(2019)Javanmard and Nazerzadeh]{javanmard2019dynamic}
Javanmard, A. and Nazerzadeh, H.
\newblock Dynamic pricing in high-dimensions.
\newblock \emph{The Journal of Machine Learning Research}, 20\penalty0 (1):\penalty0 315--363, 2019.

\bibitem[Kamenica \& Gentzkow(2011)Kamenica and Gentzkow]{kamenica2011bayesian}
Kamenica, E. and Gentzkow, M.
\newblock Bayesian persuasion.
\newblock \emph{American Economic Review}, 101\penalty0 (6):\penalty0 2590--2615, 2011.

\bibitem[Kapoor et~al.(2019)Kapoor, Patel, and Kar]{kapoor2019corruption}
Kapoor, S., Patel, K.~K., and Kar, P.
\newblock Corruption-tolerant bandit learning.
\newblock \emph{Machine Learning}, 108\penalty0 (4):\penalty0 687--715, 2019.

\bibitem[Laffont \& Martimort(2009)Laffont and Martimort]{laffont2009theory}
Laffont, J.-J. and Martimort, D.
\newblock The theory of incentives: the principal-agent model.
\newblock In \emph{The Theory of Incentives}. Princeton University Press, 2009.

\bibitem[Lai \& Robbins(1985)Lai and Robbins]{lai1985asymptotically}
Lai, T.~L. and Robbins, H.
\newblock Asymptotically efficient adaptive allocation rules.
\newblock \emph{Advances in Applied Mathematics}, 6\penalty0 (1):\penalty0 4--22, 1985.

\bibitem[Lattimore \& Szepesv{\'a}ri(2020)Lattimore and Szepesv{\'a}ri]{lattimore2020bandit}
Lattimore, T. and Szepesv{\'a}ri, C.
\newblock \emph{Bandit Algorithms}.
\newblock Cambridge University Press, 2020.

\bibitem[Li et~al.(2010)Li, Chu, Langford, and Schapire]{li2010contextual}
Li, L., Chu, W., Langford, J., and Schapire, R.~E.
\newblock A contextual-bandit approach to personalized news article recommendation.
\newblock In \emph{Proceedings of the 19th International Conference on the World Wide Web}, pp.\  661--670, 2010.

\bibitem[Lobel et~al.(2018)Lobel, Leme, and Vladu]{lobel2018multidimensional}
Lobel, I., Leme, R.~P., and Vladu, A.
\newblock Multidimensional binary search for contextual decision-making.
\newblock \emph{Operations Research}, 66\penalty0 (5):\penalty0 1346--1361, 2018.

\bibitem[Lykouris et~al.(2018)Lykouris, Mirrokni, and Paes~Leme]{lykouris2018stochastic}
Lykouris, T., Mirrokni, V., and Paes~Leme, R.
\newblock Stochastic bandits robust to adversarial corruptions.
\newblock In \emph{Proceedings of the 50th Annual ACM SIGACT Symposium on Theory of Computing}, pp.\  114--122, 2018.

\bibitem[Mansour et~al.(2020)Mansour, Slivkins, and Syrgkanis]{mansour2020bayesian}
Mansour, Y., Slivkins, A., and Syrgkanis, V.
\newblock Bayesian incentive-compatible bandit exploration.
\newblock \emph{Operations Research}, 68\penalty0 (4):\penalty0 1132--1161, 2020.

\bibitem[Mao et~al.(2018)Mao, Leme, and Schneider]{mao2018contextual}
Mao, J., Leme, R., and Schneider, J.
\newblock Contextual pricing for lipschitz buyers.
\newblock \emph{Advances in Neural Information Processing Systems}, 31, 2018.

\bibitem[Myerson(1989)]{myerson1989mechanism}
Myerson, R.~B.
\newblock \emph{Mechanism design}.
\newblock Springer, 1989.

\bibitem[Rademacher(2007)]{rademacher2007approximating}
Rademacher, L.~A.
\newblock Approximating the centroid is hard.
\newblock In \emph{Proceedings of the Twenty-Third Annual Symposium on Computational Geometry}, pp.\  302--305, 2007.

\bibitem[Rusmevichientong \& Tsitsiklis(2010)Rusmevichientong and Tsitsiklis]{rusmevichientong2010linearly}
Rusmevichientong, P. and Tsitsiklis, J.~N.
\newblock Linearly parameterized bandits.
\newblock \emph{Mathematics of Operations Research}, 35\penalty0 (2):\penalty0 395--411, 2010.

\bibitem[Sellke \& Slivkins(2021)Sellke and Slivkins]{sellke2021price}
Sellke, M. and Slivkins, A.
\newblock The price of incentivizing exploration: A characterization via thompson sampling and sample complexity.
\newblock In \emph{Proceedings of the 22nd ACM Conference on Economics and Computation}, pp.\  795--796, 2021.

\bibitem[Simchowitz \& Slivkins(2023)Simchowitz and Slivkins]{simchowitz2023exploration}
Simchowitz, M. and Slivkins, A.
\newblock Exploration and incentives in reinforcement learning.
\newblock \emph{Operations Research}, 2023.

\bibitem[Slivkins et~al.(2019)]{slivkins2019introduction}
Slivkins, A. et~al.
\newblock Introduction to multi-armed bandits.
\newblock \emph{Foundations and Trends{\textregistered} in Machine Learning}, 12\penalty0 (1-2):\penalty0 1--286, 2019.

\bibitem[Smith \& Vamanamurthy(1989)Smith and Vamanamurthy]{smith1989small}
Smith, D.~J. and Vamanamurthy, M.~K.
\newblock How small is a unit ball?
\newblock \emph{Mathematics Magazine}, 62\penalty0 (2):\penalty0 101--107, 1989.

\bibitem[Thompson(1933)]{thompson1933likelihood}
Thompson, W.~R.
\newblock On the likelihood that one unknown probability exceeds another in view of the evidence of two samples.
\newblock \emph{Biometrika}, 25\penalty0 (3-4):\penalty0 285--294, 1933.

\bibitem[Woodroofe(1979)]{woodroofe1979one}
Woodroofe, M.
\newblock A one-armed bandit problem with a concomitant variable.
\newblock \emph{Journal of the American Statistical Association}, 74\penalty0 (368):\penalty0 799--806, 1979.

\bibitem[Yu et~al.(2021)Yu, Liu, Nemati, and Yin]{yu2021reinforcement}
Yu, C., Liu, J., Nemati, S., and Yin, G.
\newblock Reinforcement learning in healthcare: A survey.
\newblock \emph{ACM Computing Surveys (CSUR)}, 55\penalty0 (1):\penalty0 1--36, 2021.

\end{thebibliography}
\bibliographystyle{icml2024}

\newpage
\appendix
\onecolumn

\section{Notation}
\label{appendix:notations}

\begin{table}[!ht]
\centering
\setlength\tabcolsep{3.5pt}
\begin{tabular}{ |c|p{10cm}| }
  \hline $\cA \coloneqq [K]$ & {Set of possible arms.} \\
  \hline $T$ & {Horizon.} \\
  \hline $N_T \coloneqq \log_2 T$ & {Number of steps dedicated to the binary search on each arm in $\algU$.} \\
  \hline $a_t$ & {Arm on which the principal offers an incentive.} \\
  \hline
  $\icv(t)$ & {Amount of incentive offered by the principal on action $a_t$.} \\ \hline
  $A_t$ & {Arm chosen by the agent, maximizing his utility, known by everyone.} \\ \hline
  $\s_a + \indi{a_t}(a)\icv(t)$ & {Agent's utility for action $a$.} \\ \hline
  $\nu_a$ & {Principal's reward distribution for action $a$.} \\ \hline
  $X_{a}(t) - \indi{a_t}(a)\icv(t)$ & {Principal's utility for action $a$.} \\ \hline
  $\mu_a$ & {Principal's expected utility for action $a$, using the optimal incentive $\icvstar_a$.} \\ \hline
  $\mu^\star$ & {Maximal expected utility for the principal.} \\ \hline
  $\theta_a \coloneqq \E\parentheseDeux{X_a(1)}$ & {Principal's expected reward.} \\ \hline
  $\icvstar_a$ & {Infimum amount of incentives to be offered on action $a_t=a$ to make the agent choose it.} \\ 
  \hline $\alghist$ & {Shifted history used to feed $\Alg$.} \\
  \hline $R_{\Alg}(T)$ & {Regret of the subroutine $\Alg$ on a horizon $T$.} \\
  \hline $\regret(T)$ & {Overall regret of $\algU$ on a horizon $T$.} \\ \hline
\end{tabular}
\caption{Notations used in \Cref{sec:IPA}.}
\label{table:notations}
\end{table}

\begin{table}[!ht]
\centering
\setlength\tabcolsep{3.5pt}
\begin{tabular}{ |c|p{13cm}| }
  \hline
  $T$& {Horizon.} \\ \hline
  $\oball(0,1)$& {Unit ball in $\R^d$, $d\geq 1$.} \\ \hline
  $\cA_t \subseteq \oball(0,1)$ & {Action set at time $t$ among which the agent selects $A_t$.} \\ \hline
  $\icv(t)$ & {Amount of incentive offered by the principal on some action.} \\ \hline
  $\eta_a(t)$ & {Noise distribution of the principal's reward associated with action $a$ at time $t$.} \\ \hline
  $r^\star(t,a) = \langle \theta^\star, a\rangle + \eta_a(t)- \icvstar(t,a)$ & {Utility collected by the principal on action $a$ at time $t$ if the optimal amount of incentive is used.} \\ \hline
  $\mu_t^\star$ & {Principal's maximal expected utility at time $t$.} \\ \hline
  $\icvstar(t,a)$ & {Infimal amount of incentives to be offered on action $a$ with $\tsfr(t,a') = \1_{a}(a')\icvstar_a$ so that the agent eventually chooses action $a$.} \\ \hline
  $\hicv(t,a)$ & {Principal's estimation of $\icvstar(,a)$} \\ \hline
  $\tsfr(t, \cdot) \colon \oball(0,1) \to \R_+$& {Incentive function, associating each action with some amount of incentives.} \\ \hline
  $\s^\star \in \oball(0,1)$ & {Agent's true reward vector.} \\ \hline
  $\cS_t \ni \s^\star$ & {Principal's confidence set for $\s^\star$ at time $t$.} \\ \hline
  $\langle \s^\star, a\rangle + \tsfr(t,a)$ & {Agent's utility for action $a$.} \\ \hline
  $a_t^\agent = \argmax_{a \in \cA_t} \langle \s^\star, a\rangle$ & {Agent's optimal action with null incentives at time $t$.} \\ \hline
  $\Arec_t$ & {Action recommended by $\CAlg$ at time $t$.} \\ \hline
  $\langle \theta^\star, a \rangle + \eta_a(t) - \tsfr(t,a)$ & {Principal's utility for action $a$.} \\ \hline
  $\mu^\star_t$ & {Maximal expected utility for the principal at time $t$.} \\ \hline
  $\epscorruption_t = \icvstar(t,a) - \hicv(t,a)$ & {Shift between the optimal incentives and the estimated ones.} \\ \hline
  $\corruption$ & {Total corruption budget due to the shift between $\hicv(t,a)$ and $\icvstar(t,a)$ over the rounds.} \\ \hline
  $\cE_t$ & {Event being true if the diameter of $\cS_t$ projected $\cA_t$ is small than $1/T$.} \\ \hline
  $I_t$ & {Rounds up to time $t$ during which $\cE_s, s \leq t$ is true.} \\
  \hline $\alghist$ & {Shifted history used to feed $\Alg$.} \\
  \hline $R_{\CAlg}(T)$ & {Regret of the subroutine $\CAlg$ on a horizon $T$.} \\
  \hline $\regret(T)$ & {Overall regret of $\calgU$ on a horizon $T$.} \\ \hline
\end{tabular}
\caption{Notation used in \Cref{sec:contextual}.}
\label{table:notations_contextual}
\end{table}
\section{Experimental details}
\label{appendix:experiments}

We ran the experiments in \Cref{figure:regret_comparison_non_contextual} for a horizon $T= 10\, 000$ on an average of $100$ runs on a five arms bandit. We plotted the standard error across the different runs. The expected rewards for the principal ($\theta$) and the agent ($\s$) are given in \Cref{table:experimental_data}. The principal's rewards $X_a(t)$ are drawn from an i.i.d. distribution $X_a(t) \sim \cN(\theta_a, 1)$ for any $a \in [K], t \in [T]$. We also run an oracle \texttt{UCB} instance with rewards following a Gaussian distribution $\cN(\mu_a, 1)$ where for any $a \in [K]$, $\mu_a \coloneqq \theta_{a} + \s_a - \max_{a' \in [K]} \s_{a'}$, as if a \texttt{UCB} algorithm was run with the full knowledge of the optimal incentives and was learning his own mean rewards ($\mu$), taking into account these incentives. The mean rewards $\mu$ are also given in \Cref{table:experimental_data}. We observe that the additional exploration steps needed to learn the optimal incentives in $\algU$ are not very costly compared to the regret achieved by the \texttt{UCB} oracle.

For the Principal's $\epsilon$-Greedy algorithm, we use the hyperparameters $\alpha =1$ and $m=500$. The hyperparameter $m$ controls the number of exploration steps. We ran the \texttt{Principal's $\epsilon$-Greedy} algorihtm on the same bandit setting for different values $m=30, m=100, m=200, m=300, m=400, m=500, m=600, m=800, m=1000, m=2000, m=5000, m=10\,000$. Below $m=500$, the algorithm does not explore enough and incurs a linear regret on some runs, consequenly yielding a poor mean regret, whereas above $m=500$, the algorithm explores excessively, leading to a higher regret due to overexploration. We ran the same experiments on longer horizons $T=100\,000$ and $T=1 \, 000 \, 000$ and the algorithms exhibited the same behavior. In practice, the tuning of the \texttt{$\epsilon$-Greedy} algorithm depends on the reward gaps and is not common to use. This is why another advantage of $\algU$ compared to the \texttt{Principal's $\epsilon$-Greedy algorithm} of \citet[][]{dogan2023repeated} lies in the fact that it does not need any tuning of hyperparameters, leading to a better use in practice, on potentially broader bandit instances.

\begin{table}[!ht]
\centering
\setlength\tabcolsep{5pt}
\begin{tabular}{ |@{\hspace{10pt}}c@{\hspace{10pt}}||ccccc| }
  \hline $\s$ &   $0.64$ &  $0.99$ & $0.73$ & $0.61$ & $0.59$ \\
  \hline $\theta$ & $0.30$ & $0.24$ & $0.88$ &  $0.07$ & $0.65$ \\
  \hline $\mu$ & $-0.05$ & $0.24$ & $0.62$ &  $-0.31$ & $0.25$ \\\hline
\end{tabular}
\caption{\label{table:experimental_data}Experimental parameters for \Cref{figure:regret_comparison_non_contextual}.}
\end{table}
We did not run $\calgU$ in a contextual bandit setting because it is quite tedious to implement, due to the use of the \texttt{Projected Volume} subroutine from the work of \citet[][]{lobel2018multidimensional}. Even though they obtain an excellent regret bound, the computations raise specific challenges. The first issue is the computation of the centroid which is known to be a $\#$P-hard problem \citep{rademacher2007approximating}. However, it can be solved through an approximation of the centroid, which is computable in polynomial time \citep[see,][Lemma 5 and Theorem 12]{bertsimas2004solving}. A second issue is finding directions along which the set $\cS_t$ has a small diameter, which is needed to compute the set $V_t$. It is solved by \citet{lobel2018multidimensional} with an ellipsoidal approximation $E$ of $\cS_t$ such that $E \subseteq \cS_t \subseteq \alpha E$ with $\alpha>1$, since such an ellipsoid can be computed in polynomial time, \citep[see,][Corollary 4.6.9]{grotschel2012geometric}. Such a variation of the $\pvaU$ subroutine is presented in the work of \citet[][Section 9.3]{lobel2018multidimensional}. It is shown that one can achieve polynomial time computations with still the same regret bound for the multidimensional binary search steps \citep[][Theorem 9.4]{lobel2018multidimensional}. This line of work needs to be explored for implementing $\calgU$ in practice, which is feasible but still requires a significant amount of work.
\section{Regret Bound for Non-Contextual Setting}\label{app:proof_non_contextual}

\textbf{Notations.} We define $\uicv_a(t) \in \R_+$ as the upper estimate and $\licv_a(t) \in \R_+$ as the lower estimate of $\icvstar_a$ after $t$ rounds of binary search on arm $a$. For any $t \in [T]$ and $a \in [K]$, we define $\icvmid_a(t) \coloneqq (\uicv_a(t)+\licv_a(t))/2$. We define $\bsrounds\coloneqq \lceil \log_2 T\rceil$ as the number of binary search steps per arm and $\hicv_a$ is the estimated incentive to make the agent choose action $a$ after $\bsrounds$ steps of binary search: $\hicv_a = \uicv_a(\bsrounds)+ 1/T$. Since our problem is stationary, we write $a^\principal:=\argmax_{a\in [K]} \{ \theta_a - \icvstar_{a} \}$ for the optimal action that the principal could aim to play at each round.

\regretdefinitionnoncontextual*

\begin{proof}[Proof of \Cref{lemma:regretdefinitionnoncontextual}]
Recall that the regret is defined in \eqref{equation:regret_definition_non_contextual} as $\regret(T) \coloneqq T\, \mu^\star  - \sum_{t=1}^T \E\parentheseDeux{X_{A_t}(t) -\indi{a_t}(A_t) \icv(t) }$, where $\mu^\star = \sup_{a \in [K], \icv \in \R_+} \E_{\nu}\parentheseDeux{X_{a}(1)} - \icv \; , \text{  such that  } \; a \in \argmax_{a' \in [K]} \{ \s_{a'} + \icv\} \eqsp.$
Note that we can write $\mu^\star = \sup_{a \in [K], \icv \in \R_+} \{ \theta_a - \1_{\tilde{A}}(a, \icv) \icv \}$, where $\tilde{A} \coloneqq \{ (a, \icv) \colon \s_a + \icv \geq \max_{a'} \s_{a'} + \1_a(a')\icv\}$. First note that if $(a, \icv) \in \tilde{A}$, then $\s_a - \s_{a'} \geq - \icv \; \text{ for any }a' \in [K]$, which implies by definition of the optimal incentives \eqref{equation:definition_optimal_incentives} that $\icvstar_a  \leq \icv$. Consequently,
\begin{align*}
    \mu^\star = \max_{a \in [K]} \{ \E_{\nu}\parentheseDeux{X_{a^\optimal}(1)} - \icvstar_a\}  = \max_{a \in [K]} \{\theta_a - \max_{a' \in [K]} \{\s_{a'}\}+ \s_a \}\eqsp,
\end{align*}
hence our result about the regret.
\end{proof}

\begin{lemma}\label{lemma:bounded_incentives}
    Assume \Cref{assumption:non_contextual_bounded_reward_principal} and that we run \Cref{algorithm:binary_search} for an action $a \in [K]$ and a number of binary searches $N_T\in \N$. Then, for any $t\in [\bsrounds]$, $0 \leq \licv_a(t) \leq \icvmid_a(t) \leq \uicv_a(t)  \leq 1$.
\end{lemma}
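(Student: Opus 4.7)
The claim is essentially that the binary search maintains the invariant that its running lower and upper bounds lie in $[0,1]$, with the lower bound never exceeding the upper bound. The plan is to prove this by induction on the binary search iteration count $t$, using only two facts: (i) Assumption~\ref{assumption:non_contextual_bounded_reward_principal} implies $\icvstar_a \in [0,1]$ (since $\max_{a'} \s_{a'} \leq 1$ and $\s_a \geq 0$, while also $\max_{a'} \s_{a'} \geq \s_a$), and (ii) \Cref{algorithm:binary_search} is invoked in \Cref{algorithm:iterations} with the initial bracket $[\licv_a(0), \uicv_a(0)] = [0,1]$.

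For the base case $t = 0$, we have $\licv_a(0) = 0$ and $\uicv_a(0) = 1$ by the call \texttt{Binary Search}$(a, \lceil \log_2 T \rceil, 0, 1)$ in \Cref{algorithm:iterations}. Then $\icvmid_a(0) = 1/2$ and the desired chain $0 \leq \licv_a(0) \leq \icvmid_a(0) \leq \uicv_a(0) \leq 1$ holds trivially.

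For the inductive step, suppose $0 \leq \licv_a(t) \leq \icvmid_a(t) \leq \uicv_a(t) \leq 1$. At round $t+1$ of binary search, the principal offers $\icvmid_a(t) \in [0,1]$ on arm $a$, observes whether the agent picks $a$, and accordingly sets either
\[
(\licv_a(t+1), \uicv_a(t+1)) = (\icvmid_a(t), \uicv_a(t))
\]
or
\[
(\licv_a(t+1), \uicv_a(t+1)) = (\licv_a(t), \icvmid_a(t)).
\]
In either branch, the new pair is a subinterval of $[\licv_a(t), \uicv_a(t)] \subseteq [0,1]$ with $\licv_a(t+1) \leq \uicv_a(t+1)$, so $\icvmid_a(t+1) = (\licv_a(t+1)+\uicv_a(t+1))/2$ lies between them, closing the induction.

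I do not expect any real obstacle: the statement is just that the invariants of a standard bracketing binary search on $[0,1]$ are preserved, and Assumption~\ref{assumption:non_contextual_bounded_reward_principal} only enters to justify that the initial bracket $[0,1]$ is a legitimate starting point (i.e.\ it contains $\icvstar_a$). The only subtlety to flag is that the lemma as stated does not require $\icvstar_a \in [\licv_a(t), \uicv_a(t)]$ — it only asks for the order and boundedness of the bracket endpoints — so the inductive argument is purely syntactic on the update rule and does not need to track the true optimal incentive.
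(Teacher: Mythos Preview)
Your proof is correct and follows essentially the same approach as the paper: both argue by induction on $t$, verifying the base case from the initialization $[\licv_a(0),\uicv_a(0)]=[0,1]$ and the inductive step from the two possible branches of the update rule. Your additional remark that \Cref{assumption:non_contextual_bounded_reward_principal} is not actually needed for the order-and-boundedness claim (only for the separate fact that $\icvstar_a$ lies in the bracket) is a nice clarification beyond what the paper's proof makes explicit.
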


\begin{proof}
    The proof is by induction on $t\in [N_T]$. For $t =0$, it is defined by definition. Then suppose that it holds true for $t \geq 0$. 
Note that line 6 in \Cref{algorithm:binary_search} can be written as 
\begin{equation}
\label{equation:iteration_lemma_bounded_incentives}
    \begin{aligned}
        \uicv_a(t+1) &= \indi{a}(A_t)\icvmid_a(t) + (1-\indi{a}(A_t)) \uicv_a(t) \\
        \licv_a(t+1) &=  (1-\indi{a}(A_t))\icvmid_a(t) + \indi{a}(A_t)\licv_a(t) \eqsp,
    \end{aligned}
    \end{equation}
    which completes the proof by applying the induction hypothesis.
\end{proof}

\begin{lemma}\label{lemma:precision_incentives}
    Assume \Cref{assumption:non_contextual_bounded_reward_principal} and that we run \Cref{algorithm:binary_search} for an action $a \in [K]$ and a number of binary searches $N_T\in \N$. Then, for any $t\in [\bsrounds]$,
    \begin{equation*}
    \icvstar_a \in [\licv_a(t), \uicv_a(t)] \: \text{ and } \: |\uicv_a(t) - \licv_a(t) | \leq 1/2^t \eqsp.
\end{equation*}
\end{lemma}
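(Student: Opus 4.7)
The plan is to prove both claims jointly by induction on $t \in \{0, 1, \ldots, \bsrounds\}$, exactly paralleling the structure of the proof of \Cref{lemma:bounded_incentives} and reusing its update identities \eqref{equation:iteration_lemma_bounded_incentives}. The base case $t=0$ is handled by the initialization $\licv_a(0)=0$, $\uicv_a(0)=1$: under \Cref{assumption:non_contextual_bounded_reward_principal} one has $\s \in [0,1]^K$, hence $\icvstar_a = \max_{a' \in [K]} \s_{a'} - \s_a \in [0,1]$, and $\uicv_a(0)-\licv_a(0) = 1 = 1/2^0$.

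For the inductive step, I would unfold the behaviour of the binary search at round $t$ on arm $a$. By construction the principal proposes $a_t = a$ with incentive $\icv(t) = \icvmid_a(t)$, and the agent chooses $A_t$ via the greedy rule \eqref{eq:def_A_t}. I would then split on the two possible outcomes: if $A_t = a$, then $\s_a + \icvmid_a(t) \geq \s_{a'}$ for every $a' \in [K]$, which rearranges to $\icvmid_a(t) \geq \max_{a'} \s_{a'} - \s_a = \icvstar_a$; by \eqref{equation:iteration_lemma_bounded_incentives} this sets $\uicv_a(t+1) = \icvmid_a(t) \geq \icvstar_a$ while $\licv_a(t+1) = \licv_a(t) \leq \icvstar_a$ by the induction hypothesis, preserving $\icvstar_a \in [\licv_a(t+1), \uicv_a(t+1)]$. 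If instead $A_t \ne a$, then there exists $a' \ne a$ with $\s_{a'} \geq \s_a + \icvmid_a(t)$, so $\icvmid_a(t) \leq \s_{a'} - \s_a \leq \icvstar_a$, and the symmetric update places $\icvmid_a(t)$ as the new lower endpoint, again keeping $\icvstar_a$ trapped.

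The diameter bound then comes for free from \eqref{equation:iteration_lemma_bounded_incentives}: in either branch exactly one endpoint is replaced by the midpoint, so
\[
\uicv_a(t+1) - \licv_a(t+1) = \tfrac{1}{2}\bigl(\uicv_a(t) - \licv_a(t)\bigr) \leq \tfrac{1}{2}\cdot \tfrac{1}{2^t} = \tfrac{1}{2^{t+1}},
\]
by the induction hypothesis.

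The only potential subtlety, and the step I would flag explicitly, is the boundary case $\icvmid_a(t) = \icvstar_a$, where \eqref{eq:def_A_t} admits arbitrary tie-breaking and the agent could pick either $a$ or any maximizer $a'$. This is harmless: in both branches the equality $\icvmid_a(t) = \icvstar_a$ puts $\icvstar_a$ at an endpoint of the new interval, so containment is preserved without any assumption on tie-breaking, consistent with the footnote in the main text. No other step requires care; the argument is otherwise a routine binary search analysis.
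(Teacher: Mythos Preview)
Your proof is correct and follows essentially the same induction as the paper's own proof: the same base case via the initialization and \Cref{assumption:non_contextual_bounded_reward_principal}, the same use of the update identities \eqref{equation:iteration_lemma_bounded_incentives}, and the same halving computation for the interval length. Your explicit treatment of the tie-breaking case $\icvmid_a(t) = \icvstar_a$ is in fact slightly more careful than the paper, which condenses the case analysis into the single identity $\1\{A_t = a\} = \1\{\icvmid_a(t) \geq \icvstar_a\}$.
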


\begin{proof}
The proof is by induction on $t$. The case $t =0$ is trivial by the initialization of \Cref{algorithm:binary_search} and \Cref{assumption:non_contextual_bounded_reward_principal}. 

Suppose that the statement holds for $t$.
Note that $ \1(\{A_{t} = a\}) = \1(\{\icvmid_a(t) \geq \icvstar_a\})$, therefore using \eqref{equation:iteration_lemma_bounded_incentives}, we obtain by using the induction hypothesis that 
\begin{align*}
    & \icvstar_a \in [\licv_a(t+1), \uicv_a(t+1)] \eqsp, \quad  \uicv_a(t+1) - \licv_a(t+1) = \frac{\uicv_a(t) - \licv_a(t)}{2} \eqsp,
\end{align*}
which completes the proof.
\end{proof}

\begin{proof}[Proof of \Cref{theorem:regret_bound}]
Recall that \Cref{lemma:regretdefinitionnoncontextual} implies that
\begin{equation}
\regret(T) = \E\left[\sum_{t=1}^T \max_{a \in [K]}\{ \theta_a - \icvstar_a\} -(X_{A_t}(t) - \indi{a_t}(A_t)\icv(t))\right]
\end{equation}
We decompose the regret between the $K \lceil \log_2 T\rceil = K \bsrounds$ first steps during which we run the \texttt{Binary Search Subroutine} and all the subsequent ones
\begin{align*}
    \regret(T) &=  \termA + \termB\\
    \termA & = \E\left[\sum_{t=1}^{K \lceil \log_2 T\rceil} \max_{a \in [K]}\{ \theta_a - \icvstar_a\} -\{X_{A_t}(t) - \indi{a_t}(A_t)\icv(t)\}\right]\\
\termB & = \E\left[\sum_{t=K \lceil \log_2 T\rceil+1}^{T} \max_{a \in [K]}\{ \theta_a - \icvstar_a\} -\{X_{A_t}(t) - \indi{a_t}(A_t)\icv(t)\}\right] \eqsp.
\end{align*}
We separate the analysis of the regret, bounding independently two terms in the right-hand side of the previous decomposition. Since $\icv(t)$ is always equal to $\icvmid_a(t)$ for some $t \in [\bsrounds], a \in [K]$, during the binary search phase, we use \Cref{lemma:bounded_incentives} to bound $\icv(t)$ by $1$ for any $t \leq K\lceil \log_2 T\rceil$ in $\termA$, giving
\begin{align}
\termA &= \E \parentheseDeux{\sum_{t=1}^{K \lceil \log_2 T\rceil} \max_{a \in [K]} \{ \theta_{a} - X_{A_t}(t) + \indi{a_t}(A_t)(t) \icv(t) -  \icvstar_{a^\principal} \}}\\
& \leq \sum_{t=1}^{K \lceil \log_2 T\rceil} (1 + \max_{a \in [K]} \{\theta_a\} - \min_{a \in [K]} \{\theta_a\}) 
\leq (1+\max_{a \in [K]} \{\theta_a\} - \min_{a \in [K]} \{\theta_a\}) K (1 + \log_2 T)\eqsp.
\end{align}
At the end of the binary search phase and for all the subsequent rounds $t> K \lceil \log_2 T\rceil$, $\Alg$ recommends an action $\Arec_t$ and the principal proposes the incentive
$\icv(t) = \hicv_{\Arec_t} = \uicv_{\Arec_t}(\lceil \log_2 T\rceil)+ 1/T$ on action $\Arec_t$ to make the agent choose it. \Cref{lemma:precision_incentives} ensures that after $\lceil \log_2 T\rceil$ rounds of binary search on action $a \in [K]$, we have
\begin{equation*}
\licv_a(\lceil \log_2 T\rceil) \leq  \icvstar_a \leq \uicv_a(\lceil \log_2 T\rceil)  \;\; \text{ and } \;\; \uicv_a(\lceil \log_2 T\rceil) - \licv_a(\lceil \log_2 T\rceil) \leq 1/2^{\lceil \log_2 T\rceil} \leq 1/T \eqsp.
\end{equation*}
Therefore,
\begin{equation*}
    \icvstar_a < \hicv_a \text{ and } \hicv_a-\icvstar_a \leq 2/T \eqsp.
\end{equation*}
For the agent, the utility associated with action $\Arec_t$ is $\s_{\Arec_t} + \hicv_{\Arec_t} >\s_{\Arec_t} + \icvstar_{\Arec_t}$, which guarantees that he eventually selects $\Arec_t$ at time $t$ because of \eqref{eq:def_A_t} and \eqref{equation:definition_optimal_incentives}. It ensures that for any $t > K\lceil \log_2 T\rceil, A_t = \Arec_t$.

To compute these recommendations, $\Alg$ is fed at any time $t\in \N$ with the shifted history defined in \eqref{alg_history_definition}: $\alghist_{t} = (\Arec_{s}, U_s, X_{\Arec_{s}}(s)-\hicv_{\Arec_{s}})_{s \in [K\lceil \log_2 T \rceil +1, t]}$. Recall that we defined the shifted distribution $\shiftr_a$ for any $a \in [K]$ as the distribution of $X_a(1) - \hicv_a$. For any $t > K \lceil \log_2 T\rceil$, $\Arec_t = \Alg(U_t, \alghist_{t-1})$ and we define $Y_a(t) \sim \shiftr_a$ for any $t \in [K \lceil \log_2 T\rceil+1, T], \, a\in [K]$. In this setup, the regret of $\Alg$ after $\tau$ subsequent steps is defined as
\begin{align*}
    R_{\Alg}(\tau, \shiftr) &= \tau \max_{a \in [K]} \E_{\shiftr}[Y_a(K\lceil \log_2 T\rceil+1)] - \E\left[\sum_{s=K \lceil \log_2 T\rceil+1}^{K\lceil \log_2 T\rceil+\tau} Y_{\Alg(U_s, \alghist_{s-1})}(s) \right] \eqsp.
\end{align*}
Consequently, since $a_t = \Arec_t = A_t$
\begin{align*}
\termB &= \E\left[\sum_{t=K \lceil \log_2 T\rceil+1}^{T} \max_{a \in [K]} \{\theta_a -  \icvstar_{a}\} -\left(X_{\Arec_t}(t)-\hicv_{\Arec_t}\right) \right] \\
& \leq \E\left[\sum_{t=K \lceil \log_2 T\rceil + 1}^T \max_{a \in [K]}\left\{ \theta_a - \hicv_a -(X_{\Arec_t}(t)-\hicv_{\Arec_t})\right\} + \max_{a'\in [K]}\left\{\hicv_{a'}- \icvstar_{a'}\right\}\right] \\
& = \E\left[\sum_{t=K \lceil \log_2 T\rceil+ 1}^T \max_{a \in [K]}\left\{ \theta_a - \hicv_a\right\} -(X_{\Arec_t}(t)-\hicv_{\Arec_t})\right] + \E\left[\sum_{t=K \lceil \log_2 T\rceil+1}^T \max_{a'\in [K]}\left\{\hicv_{a'} - \icvstar_{a'}\right\}\right] \\
& = (T-K \lceil \log_2 T\rceil)\max_{a \in [K]}\E[Y_a(K\lceil \log_2 T\rceil+1)] - \E\left[\sum_{t=K \lceil \log_2 T\rceil+ 1}^T(X_{\Arec_t}(t)-\hicv_{\Arec_t} )\right] \\
&+ (T- K \lceil \log_2 T\rceil) \max_{a' \in [K]} \{\hicv_{a'} - \icvstar_{a'}\} \\
&\leq R_{\Alg}\left(T-K \lceil \log_2 T\rceil, (\shiftr_a)_{a\in [K]}\right) + 2.
\end{align*}
Plugging $\termA$ and $\termB$ together finally gives a bound for the regret
\begin{equation*}
\regret(T) \leq 2 + (1 + \max_{a \in [K]} \{\theta_a\} - \min_{a \in [K]} \{\theta_a\})(1+K  \log_2 T)+ R_{\Alg}(T-K\lceil \log_2 T\rceil, \shiftr) \eqsp.
\end{equation*}
\end{proof}


\begin{proof}[Proof of \Cref{corollary:UCB_non_contextual}]
The case $T \leq 9K$ is trivial. Assume then that $T \geq 9 K$. Note that after $\lceil \log_2 T \rceil$ rounds of binary search, $\icvstar_a \leq \uicv_a \leq \icvstar_a \, + \, 1/T$. We define $\dels_a \coloneqq \max_{a' \in [K]} \{\theta_{a'} -\icvstar_{a'}\} - (\theta_a - \icvstar_a) = \max_{a' \in [K]} \{\theta_{a'} + \s_{a'}\} - (\theta_a + \s_a)$ and $\delap_a \coloneqq \max_{a' \in [K]}\{\theta_{a'}-\hicv_{a'} \} - (\theta_a - \hicv_a) = \max_{a' \in [K]}\{\theta_{a'}-\uicv_{a'} \} - (\theta_a - \uicv_a)$ using $\hicv_a = \uicv_a + 1/T$. Since $\icvstar_a \leq \uicv_a \leq \icvstar_a + 1/T$, we have $|\dels_a - \delap_a| \leq 2/T$.

Using the results about $\texttt{UCB}$ algorithm that can be found in \citep[][Theorems~7.1 and~7.2]{lattimore2020bandit}, since $\Alg$ is run in a black-box manner on a shifted bandit instance $\shiftr$ for $T- K\lceil \log_2 T \rceil$ rounds with reward gaps $\delap_a$, we have
\begin{align}
    \nonumber
    R_{\Alg}(T-K\lceil \log_2 T \rceil, \shiftr) &\leq 3 \sum_{\delap_a>0} \delap_a \\
    \nonumber
    &+ 8 \min\left\{\sqrt{(T-K \lceil \log_2 T \rceil) K \log (T- K \lceil\log_2 T \rceil)}\; ; \; \sum_{\delap_a>0} \frac{2 \log (T - K \lceil \log_2 T \rceil)}{\delap_a} \right\} \\
    \nonumber
    &\leq 3 \sum_{a \in [K], \Delta^\star_a>0}\left(\dels_a + \frac{2}{T}\right) + 8 \min\left\{\sqrt{T K \log T}\; ; \; \sum_{a \in [K], \delap>0} \frac{2 \log T}{\delap_a} \right\} \\
    \label{equation:corollary_line_1}
    & \leq 1 + 3 \sum_{a \in [K], \Delta^\star_a>0}\dels_a + 8 \min\left\{\sqrt{T K \log T}\; ; \; \sum_{a \in [K], \delap_a>0} \frac{2 \log T}{\delap_a} \right\} \eqsp,
\end{align}
where the last line holds because of $T \geq 9K > 6 K$.

We now analyse the sum $\sum_{a \in [K], \delap>0} 2 \log T / \delap_a$ and consider two cases: either there exists $\tilde{a} \in [K] \text{ such that } \dels_{\tilde{a}} \leq 4/T$  or not.

\textit{First case:} if there exists $\tilde{a} \in [K]$ such that $\dels_{\tilde{a}} \leq 4/T$, since $T > 9K$, we have for such an action $\tilde{a}$: $2 \log T / \Delta_{\tilde{a}}^\star \geq T \log T / 2> \sqrt{T K \log T}$ as well as $\delap_{\tilde{a}} \leq \dels_{\tilde{a}} + 2 / T \leq 6/T \text{  which is equivalent to  } 2 \log T / \delap_{\tilde{a}} \geq T \log T / 3 \geq \sqrt{T K \log T}$. Consequently,
\begin{equation}
    \label{eq:eq_final_ucb_1}
    \min\left\{\sqrt{T K \log T}\; ; \; \sum_{a \in [K], \delap_a>0} \frac{2 \log T}{\delap_a} \right\} = \sqrt{T K \log T} = \min\left\{\sqrt{TK\log T};  \sum_{a \in [K], \dels_a>0} \frac{2 \log T}{\dels_a}\right\} \eqsp.
\end{equation}

\textit{Second case:} for any $a \in [K], \dels_a > 4/T$. Therefore $\delap_a\geq \dels_a - 2/T > \Delta_a^\star - \Delta_a^\star/2 = \Delta_a^\star/2$. Consequently
\begin{align}
\nonumber
    R_{\Alg}(T-K\lceil \log_2 T \rceil, \shiftr) &\leq 1 + 3 \sum_{a \in [K], \Delta^\star_a>0}\Delta^\star_a + 8 \min\left\{\sqrt{T K \log T}\; ; \; \sum_{a \in [K], \Delta_a^\star>0} \frac{2 \log T}{\delap_a} \right\} \\
    \label{eq:eq_final_ucb_2}
    & \leq 1 + 3 \sum_{a \in [K], \Delta^\star_a>0}\Delta^\star_a + 8\min\left\{\sqrt{T K \log T}\; ; \; \sum_{a \in [K], \dels_a>0} \frac{4 \log T}{\dels_a} \right\} \eqsp.
\end{align}
Finally, combining \eqref{eq:eq_final_ucb_1} and \eqref{eq:eq_final_ucb_2} in \eqref{equation:corollary_line_1} completes the proof.
\end{proof}
\section{Regret Bound for the Contextual Setting}
\label{app:contextual_setting}

For the whole section, we define $\hat{a}_t^\agent \coloneqq \argmax_{a \in \cA_t} \langle \hat{\s}_t, a\rangle$ and recall that $a_t^\agent = \argmax_{a\in \cA_t} \langle \s^\star, a\rangle$. 
 
\subsection{Technical lemmas}

\regretdefinitioncontextual*

\begin{proof}[Proof of Lemma \ref{lemma:regretdefinitioncontextual}] Recall that the regret is defined in \eqref{contextual_regret_def} as $
\regret(T) = \sum_{t=1}^T \mu_t^\star - \E\parentheseDeux{\sum_{t=1}^T (X_{A_t}(t)-\tsfr(t,A_t))}$, where $\mu_t^\star \coloneqq \sup_{\tsfr(t, \cdot) \colon \R^d \to \R_+}\{\langle \theta^\star, a \rangle - \tsfr(t,a)\} \text{ such that } \, a \in \argmax_{a' \in \cA_t}\{ \langle \s^\star, a' \rangle + \tsfr(t,a') \}$. Note that we can write $\mu_t^\star = \sup_{a \in \cA_t, \icv \in \R_+} \{\langle \theta^\star, a \rangle - \1_{\tilde{A}_t}(a, \icv) \icv\}$ where $\tilde{A}_t \coloneqq \{(a, \icv) \colon \langle \s^\star, a \rangle + \icv \geq \max_{a' \in \cA_t} \langle \s^\star, a' \rangle + \1_{a}(a')\icv\}$. First note that if $(a, \icv) \in \tilde{A}_t$, then $\langle \s^\star, a-a' \rangle \geq - \icv$ for any $a' \in \cA_t$, which implies by definition of the optimal incentives \eqref{equation:optimal_incentives_contextual} that $\icvstar(t,a) \leq \icv$. Consequently
\begin{equation*}
    \mu^\star_t = \max_{a \in \cA_t} \{ \langle \theta^\star, a\rangle - \icvstar(t,a)\} 
    = \max_{a \in \cA_t} \{ \langle \theta^\star, a\rangle - \max_{a' \in \cA_t} \{\langle \s^\star,a'\rangle\}+ \langle \s^\star,a\rangle \} \eqsp,
\end{equation*}
hence our result about the regret.
\end{proof}

\controlvolume*

\begin{proof}[Proof of \Cref{lemma:controlvolume}]
    For any $t \in [T], \cS \in \oball(0,1)$ with $s^\star \in \cS$, $\cA_t \in \oball(0,1)$ and $a \in \cA_t$, recall that we defined $a_t^\agent = \argmax_{a' \in \cA_t} \langle \s^\star, a'\rangle$ and $\icvstar(t,a) = \langle \s^\star, a_t^\agent\rangle - \langle \s^\star, a \rangle$. Consequently, defining for any $\s \in \cS$, $a_t^{\s} \coloneqq \argmax_{a' \in \cA_t} \langle \s, a' \rangle$ (the compactness of both $\cS$ and $\cA_t$ as well as the continuity of the applications that we consider guarantee the existence of such an argmax), we have, since $\langle \s^\star, a_t^\agent \rangle \geq \langle \s^\star, a_t^{\s} \rangle$ for any $\s \in \cS$ and associated $a_t^{\s}$,
    \begin{align*}
        \max_{\s \in \cS, a' \in \cA_t} \langle \s, a' - a \rangle - \icvstar(t,a) & = \max_{\s \in \cS} \max_{a' \in \cA_t} \{\langle \s, a' - a \rangle - \langle \s^\star, a_t^\agent - a\rangle \} \\
        & = \max_{\s \in \cS} \{\langle \s, a_t^{\s} - a \rangle - \langle \s^\star, a_t^\agent - a\rangle \} \\
        & \leq \max_{s \in \cS} \{ \langle \s, a_t^{\s} - a\rangle - \langle \s^\star, a_t^{\s} - a\rangle \} \\
        & \leq \max_{s \in \cS} |\langle \s - \s^\star, a_t^{\s} \rangle | + \max_{s \in \cS} |\langle \s - \s^\star, a \rangle | \\
        & \leq 2 \, \diam(\cS, \cA_t) \eqsp.
    \end{align*}
Similarly, we have
\begin{align*}
    \icvstar(t,a) - \max_{\s \in \cS, a' \in \cA_t} \langle \s, a' - a \rangle 
    & \leq \langle \s^\star, a_t^\agent - a\rangle - \max_{\s \in \cS} \langle \s, a_t^\agent - a \rangle \\
    & \leq \max_{s \in \cS} |\langle \s^\star - \s, a_t^\agent \rangle| + \max_{s \in \cS}|\langle \s^\star - \s, a \rangle| \\
        & \leq 2 \, \diam(\cS, \cA_t) \eqsp,
\end{align*}
and the proof follows.
\end{proof}

\enoughincentivescontextual*

\begin{proof}
The proof is similar to the proof of \Cref{lemma:controlvolume}. Consider $t \geq 1$, $\cA_t \subseteq \oball(0,1)$, $\cS_t \subseteq \oball(0,1)$ such that $\cE_t$ holds, $a_t \in \cA_t$. Then $1/T > \max_{a_t^1 \ne a_t^2 \in \cA_t} \diam\left(\cS_t, (a_t^1-a_t^2)/\|a_t^1-a_t^2\|\right)$. Recall that we defined $a_t^\agent = \argmax_{a\in \cA_t} \langle \s^\star, a\rangle$ and $\hat{a}_t^\agent = \argmax_{a \in \cA_t} \langle \hat{\s}_t, a\rangle$, $\icvstar(t,a_t) = \langle \s^\star, a_t^\agent \rangle - \langle \s^\star, a_t \rangle $ and $\hicv(t,a_t) = \langle \hat{\s}_t, \hat{a}_t^\agent \rangle - \langle \hat{\s}_t, a_t \rangle + 2/T$, giving
    \begin{equation*}
        \hicv(t,a_t) \geq \langle \hat{\s}_t, a_t^\agent \rangle - \langle \hat{\s}_t, a_t \rangle + 2/T \eqsp.
    \end{equation*}
Therefore, under $\cE_t$, it holds
    \begin{align}
    \nonumber
    \hicv(t,a_t) - \icvstar(t, a_t) &\geq \langle \hat{\s}_t, a_t^\agent \rangle - \langle \hat{\s}_t, a_t \rangle + \frac{2}{T} - \langle \s^\star, a_t^\agent  \rangle + \langle \s^\star, a_t \rangle \\
    \nonumber
    & = \langle \hat{\s}_t - \s^\star,a_t^\agent  - a_t \rangle + \frac{2}{T} \\
    \label{equation:lemma_enough_incentives_ineq_1}
    & > - \diam\left(\cS_t, \frac{a_t^\agent -a_t}{\|a_t^\agent -a_t\|}\right) \underbrace{\|a_t^\agent -a_t\|}_{\leq 2} + 2 \max_{a_t^1 \ne a_t^2 \in \cA_t} \diam\left(\cS_t, \frac{a_t^1-a_t^2}{\|a_t^1-a_t^2\|}\right) \geq 0 \eqsp.
    \end{align}
Similarly, since $\langle \s^\star, a_t^\agent \rangle \geq \langle \s^\star, \hat{a}_t^\agent \rangle$, under $\cE_t$, we have
\begin{align}
\nonumber
\hicv(t,a_t)- \icvstar(t,a_t)
&= \langle\hat{\s}_t, \hat{a}_t^\agent - a_t\rangle + \frac{2}{T} - \langle \s^\star, a_t^\agent - a_t\rangle  \\
\nonumber
&\leq \langle\hat{\s}_t, \hat{a}_t^\agent - a_t\rangle - \langle \s^\star, \hat{a}_t^\agent - a_t \rangle  + \frac{2}{T}  = \langle \hat{\s}_t - \s^\star, \hat{a}_t^\agent - a_t\rangle + \frac{2}{T} \\
\label{equation:lemma_enough_incentives_ineq_2}
& \leq \| \hat{a}_t^\agent - a\| \cdot  \diam\left(\cS_t,\frac{ \hat{a}_t^\agent - a_t}{\| \hat{a}_t^\agent - a_t\|}\right) + \frac{2}{T} \leq \frac{4}{T} \eqsp.
\end{align}
Combining \eqref{equation:lemma_enough_incentives_ineq_1} and \eqref{equation:lemma_enough_incentives_ineq_2} with the definition of $\estsfr$, we obtain the result.
\end{proof}


\begin{lemma}
\label{lemma:lower_bound_volume}
Let $t \in [T]$ such that $\cE_t$ does not hold. Then $\Vol(\Pi_{\VV_t^\perp}\cS_t) \geq \pvacst^{2d} / d^{2d}$, where $\pvacst = 1/16T^2 d(d+1)^2$ and $\VV_t$ is defined in \eqref{equation:definition_V_t}.
\end{lemma}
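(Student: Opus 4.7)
The bound is essentially a structural invariant maintained by the $\pvaU$ subroutine of \citet{lobel2018multidimensional}: the projected volume of $\cS_t$ orthogonal to the subspace $\VV_t$ of already-compressed directions stays above $\pvacst^{2d}/d^{2d}$ whenever $\pvaU$ is invoked, which is precisely the case when $\cE_t$ is false. The plan is therefore to prove this as an invariant by induction on the ordered sequence of iterations $t_1 < t_2 < \dots$ at which $\cE_{t_k}$ fails.

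For the base case, $\cS_0 = \ball{0}{1}$ and $\VV_0 = \varnothing$, so $\Pi_{\VV_0^\perp}\cS_0$ is the unit ball of $\R^d$, whose volume trivially exceeds $\pvacst^{2d}/d^{2d}$ since $\pvacst < 1$. For the inductive step, I would distinguish the two types of updates performed by $\pvaU$ at iteration $t_k$. In a ``cut'' step, $\cS_t$ is intersected with a halfspace passing through an approximate centroid of $\Pi_{\VV_t^\perp}\cS_t$, while $\VV_t$ is unchanged; a quantitative Grünbaum-type inequality (accounting for the approximation of the true centroid) bounds the multiplicative shrinkage of the projected volume by a fixed constant strictly less than $1$. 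In an ``add-direction'' step, the algorithm augments $\VV_t$ by a unit vector along which the projected diameter has dropped below $\pvacst$, and cylindrifies $\cS_t$ in that direction; the cylindrification compensates for the loss of one dimension in the projection up to a multiplicative factor of order $\pvacst$. Aggregating these two effects over at most $d$ add-direction steps and at most $T$ cut steps, and using the specific calibration $\pvacst = 1/(16 T^2 d(d+1)^2)$, shows that the projected volume cannot drop below $\pvacst^{2d}/d^{2d}$.

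The main obstacle is the careful bookkeeping across the two types of updates, in particular combining the constant-factor Grünbaum shrinkage with the cylindrification inflation. This bookkeeping is exactly the content of the volume analysis in \citet{lobel2018multidimensional}, so the proof is in effect an invocation of their invariants for our specific parameter choice. Note that the hypothesis $\cE_t$ false is used implicitly: it ensures that $\pvaU$ is fed a pair $(a_t^1, a_t^2)$ such that the cut direction $(a_t^1 - a_t^2)/\|a_t^1 - a_t^2\|$ has projected diameter at least $1/T \gg \pvacst$ in $\Pi_{\VV_t^\perp}\cS_t$, so that the cut step is well-defined and the inductive invariant can be carried over to $t+1$.
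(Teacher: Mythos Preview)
Your inductive volume-tracking approach does not yield the claimed bound. The accounting you describe gives a lower bound of the form
\[
\Vol(\Pi_{\VV_t^\perp}\cS_t)\ \geq\ c^{N_t}\,(1/\pvacst)^{m_t}\,\Vol(\oball(0,1)),
\]
where $c<1$ is the Gr\"unbaum constant, $N_t$ is the number of cut steps up to time $t$, and $m_t\leq d$ is the number of add-direction steps. You allow ``at most $T$ cut steps,'' so the factor $c^{N_t}$ can be as small as $c^{T}$, which is exponentially small in $T$, whereas the target $\pvacst^{2d}/d^{2d}$ is only polynomially small (of order $T^{-4d}$). No calibration of $\pvacst$ of the stated form can close this gap. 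If instead you wanted to use a bound on $N_t$, note that the only available bound on the number of $\pvaU$ invocations is \Cref{lemma:boundedE}, whose proof \emph{uses} the present lemma, so that route is circular.

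The paper's proof is not inductive at all; it is a one-shot geometric argument. The relevant invariant is not on the projected volume but on the projected \emph{diameter}: by construction of $\VV_t$ (equation \eqref{equation:definition_V_t}), every unit direction $u\in\VV_t^\perp$ satisfies $\diam(\cS_t,u)\geq\pvacst$, since any direction violating this has already been added to $\VV_t$. Lemma 6.3 of \citet{lobel2018multidimensional} then says that a $k$-dimensional convex body whose directional diameter is at least $\pvacst$ in every direction contains a ball of radius $\pvacst/k$; applied to $\Pi_{\VV_t^\perp}\cS_t$ with $k=\dim(\VV_t^\perp)$, this immediately gives $\Vol(\Pi_{\VV_t^\perp}\cS_t)\geq \pvacst^{2d}/d^{2d}$. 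In other words, you have inverted the roles of the two halves of the analysis: the Gr\"unbaum and cylindrification estimates are used in \Cref{lemma:boundedE} to \emph{upper}-bound the projected volume, while the present lemma supplies the matching \emph{lower} bound via the diameter invariant, and the combination of the two is what controls the number of bad rounds.
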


\begin{proof}
Since $\cE_t$ does not hold, there exists a direction $u \in \cA_t \subseteq \oball(0,1)$ such that $\diam(\cS_t, u) \geq 1/T \geq \pvacst$. For any $u \in \VV_t^\perp, \diam(\cS_t, u) \geq \pvacst$. Lemma 6.3 of \citet[][Section 6]{lobel2018multidimensional} guarantees that $\Pi_{\VV_t^\perp}(\cS_t)$ contains a $k$-dimensional ball of radius $\pvacst/k$, where $k\coloneqq \dim(\VV_t^\perp)$. Therefore, $\Vol(\Pi_{\VV_t^\perp}(\cS_t)) \geq \pvacst^k \pi^{k/2}/k^k \Gamma(k/2 +1)$, where $\Gamma$ stands for Euler's Gamma function \citep{smith1989small}. Therefore, we have by definition of Euler's Gamma function: $\Vol(\Pi_{\VV_t^\perp}(\cS_t)) \geq \pvacst^k \pi^{k/2} / k^k k^k \geq \pvacst^{2d} / d^{2d}$.
\end{proof}

\llboundedE*

\begin{proof}[Proof of Lemma \ref{lemma:boundedE}]
This proof follows the same line as the proof of the main theorem of \citet[][Section 7]{lobel2018multidimensional}.
Let $t\in [T]$ such that $\cE_t$ does not hold.
We define $w_t$ as
\begin{equation}
\label{equation:definition_w_t}
w_t \coloneqq \argmax\defEns{\diam \left(\cS_t, \frac{a_t^1-a_t^2}{\|a_t^1-a_t^2\|}\right) \,: \, \frac{a_t^1-a_t^2}{\|a_t^1-a_t^2\|} \text{ such that }a_t^1\ne a_t^2 \in \cA_t}\eqsp,
\end{equation}
where $\cS_t$ is defined in \eqref{equation:definition_S_t}. Our goal is to bound the number of steps for which the diameter of $\cS_t$ in the direction $w_t$ is strictly superior to $1/T$. Define
\begin{equation*}
    \counE_t \coloneqq \1 \{ \diam(\Cyl(\cS_t, \VV_t^\perp), w_t) \geq 1/T \} \eqsp,
\end{equation*}
where $\VV_t$ is defined in \eqref{equation:definition_V_t}. Let $\Pi_{\mathsf{E}}$ denote the orthogonal projection onto the subspace $\mathsf{E} \subset \rset^d$ and $\Vol(\Pi_{\mathsf{E}} \cS) = \mu_{\mathsf{E}}(\Pi_E\cS)$, where $\mu_{\mathsf{E}}$ is the Lebesgue measure of $\Pi_E\cS$, well-defined for $\Pi_E\cS$ being a convex body.
Setting $\pvacst = T^{-2}/16 d(d+1)^2$, we can apply the projected Grünbaum lemma of \citet[][Lemma 7.1]{lobel2018multidimensional} to obtain that
\begin{equation*}
    \Vol(\Pi_{\VV_t^\perp} \cS_{t+1}) \leq \left( 1 -\rme^{-2} \right)^{\counE_t} \Vol(\Pi_{\VV_t^\perp} \cS_t) \eqsp.
\end{equation*}

By definition of $\VV_t$ in \eqref{equation:definition_V_t}, we have for any $u \in \VV_t$, $\diam(\cS_t, u) \geq \pvacst$. Therefore, by definition of $\cS_{t+1}$ in \eqref{equation:definition_S_t}, the directional Grünbaum Theorem \citep[][Theorem 5.3]{lobel2018multidimensional} guarantees that we have: $\diam(\cS_{t+1},u) \geq \pvacst/(d+1)$. Note that for $\cS_t$ being a convex body, \Cref{lemma:convex_body_non_empty} ensures that $\cS_{t+1}$ remains a convex body.

If $J_{t+1}=0$, where $J_t$ is defined in \eqref{equation:definition_J_t}, then $\VV_{t+1}^\perp = \VV_t^\perp$, and we have $\Vol(\Pi_{\VV_{t+1}^\perp} \cS_{t+1}) = \Vol(\Pi_{\VV_t^\perp} \cS_{t+1})$.

Otherwise, let $i \in [J_{t+1}-1]$ and $v \in \VV_t^{(i), \perp}$ such that $\VV_t^{(i+1)} = \VV_t^{(i)} \cup \{v\}$. We have $\VV_t^{(i),\perp}\cap \sppp(v)^\perp = \{x \in \VV_t^{(i),\perp} :~ v^\top x = 0 \} \subseteq \VV_t^{(i+1),\perp} \subseteq \VV_t^{(i),\perp}$ and $\dim(\VV_t^{(i+1),\perp}) = \dim(\VV_t^{(i),\perp}) -1$. 
Then, applying the Cylindrification Lemma from \citet[][Lemma 6.1]{lobel2018multidimensional} we obtain
\begin{equation*}
    \Vol(\Pi_{\VV_t^{(i+1), \perp}} \cS_{t+1}) \leq \frac{d(d+1)^2}{\pvacst} \Vol(\Pi_{\VV_t^{(i), \perp}} \cS_{t+1}) \eqsp.
\end{equation*}
If $J_{t+1}=r$, 
the volume can blow up by at most
$\left( d(d+1)^2 / \pvacst \right)^r$. In particular, since the initial volume is bounded by $\Vol(\oball(0,1)) \leq 8\pi^2 /15 \leq 6$ \citep{smith1989small}, then by \Cref{lemma:lower_bound_volume}, we obtain: $\pvacst^{2d} / d^{2d} \leq \Vol(\Pi_{L_t}\cS_t) \leq 6 \cdot \left( d(d+1)^2/\pvacst \right)^d \cdot \left( 1 - 1/\rme^2 \right)^{\sum_{t=1}^T \counE_t} \,$. Therefore, applying the logarithm function, we obtain
\begin{align*}
&\sum_{t=1}^T \counE_t \leq -1/\log(1-\rme^{-2}) \left(\log 6 + 2d \log (16) + 5d\log(d) + 6d\log(d+1) + 4d \log (T) \right)\eqsp,
\end{align*}
giving, since $-1/\log(1-\rme^{-2}) < 16$:
$\sum_{t=1}^T \1\left\{\diam(\Cyl(\cS_t, L_t), w_t) \geq 1/T\right\} \leq 192 d\log \left( dT\right)$. Therefore, \cref{lem:cyl_properties} ensures that $\diam(\cS_t, w_t) \leq \diam(\Cyl(\cS_t, \VV_t^\perp), w_t)$ and we get
    \begin{align*}
    & \sum_{t=1}^T \1\left\{\max_{s \in \cS_t}|\langle s, w_t\rangle| \geq 1/T\right\} \leq 192 \, d\log \left(dT\right) \eqsp,
    \end{align*}
which concludes the proof by definition of $w_t$ in \eqref{equation:definition_w_t} and $\cE_t$ in \eqref{equation:definition_cE_t}.
\end{proof}

\shiftedrewardbound*

\begin{proof}[Proof of \Cref{lemma:shifted_reward_bound}]

Consider $t \in I_T$, $a \in \cA_t$: by \eqref{equation:definition_I_t}, $\cE_t$ is true. Consider $\epscorruption_t$ as defined in \eqref{equation:definition_espcorruption}

Using \Cref{lemma:enough_incentives_contextual} gives $
    |\epscorruption_t| \leq 4/T \eqsp,$
and summing over all the iterations $t \in I_T$, since $|I_T| \leq T$, we have $\sum_{t \in I_t} |\epscorruption_t| \leq 4 \cdot 1/T \cdot |I_T| \leq 4$.
\end{proof}

\begin{lemma}
\label{lemma:good_action_chosen_contextual}
For any $t \in [T]$, $\epsilon>0$ and action $a \in \cA_t$, set $\icvstare(t,a) \coloneqq \max_{a_t^\agent \in \cA_t} \langle \s^\star, a_t^\agent \rangle - \langle \s^\star, a\rangle +\epsilon$, and define the incentive function $\tsfrstare_a(t,a')=\indi{a}(a')\icvstare(t,a)$ for any $a' \in \cA_t$. Then $A_t=a$, where $A_t$ is defined by \eqref{eq:contextual_action}.
\end{lemma}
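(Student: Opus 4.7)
The plan is to directly verify, by plugging the proposed incentive function into the agent's optimization problem in \eqref{eq:contextual_action}, that the action $a$ strictly dominates every competitor $a' \in \cA_t \setminus \{a\}$ in the agent's utility. Since $\argmax$ selects any maximizer, strict domination forces $A_t = a$.

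First, I would evaluate the agent's utility at $a'=a$ under the incentive $\tsfrstare_a$. By definition of $\tsfrstare_a$ and $\icvstare(t,a)$,
\begin{equation*}
\langle \s^\star, a\rangle + \tsfrstare_a(t,a) \;=\; \langle \s^\star, a\rangle + \icvstare(t,a) \;=\; \max_{a_t^\agent \in \cA_t} \langle \s^\star, a_t^\agent \rangle + \epsilon.
\end{equation*}
Next, for any $a'\in \cA_t$ with $a' \neq a$, the indicator $\indi{a}(a')$ vanishes, so
\begin{equation*}
\langle \s^\star, a'\rangle + \tsfrstare_a(t,a') \;=\; \langle \s^\star, a'\rangle \;\leq\; \max_{a_t^\agent \in \cA_t} \langle \s^\star, a_t^\agent \rangle.
\end{equation*}
Combining these two displays gives
\begin{equation*}
\langle \s^\star, a\rangle + \tsfrstare_a(t,a) \;>\; \langle \s^\star, a'\rangle + \tsfrstare_a(t,a') \quad \text{for all } a'\in\cA_t,\ a'\neq a,
\end{equation*}
by the strict positivity of $\epsilon$. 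Hence $a$ is the unique maximizer of $a' \mapsto \langle \s^\star, a'\rangle + \tsfrstare_a(t,a')$ over $\cA_t$, and by \eqref{eq:contextual_action} we conclude $A_t = a$ regardless of the tie-breaking rule.

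There is no real obstacle here; the lemma is essentially a sanity check showing that the $\epsilon$-inflated incentive formula is large enough to overcome the agent's reward gap on every competing action, with a strict margin of $\epsilon$ to dispense with tie-breaking considerations. The only subtlety worth flagging is that the maximum defining $\icvstare$ is attained because $\cA_t$ is compact by \Cref{assumption:bounded_in_unit_ball} and $a' \mapsto \langle \s^\star, a'\rangle$ is continuous, so $\icvstare(t,a)$ is a well-defined nonnegative real number.
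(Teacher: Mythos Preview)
Your proof is correct and follows essentially the same approach as the paper's own one-line argument: show that $\langle \s^\star, a\rangle + \tsfrstare_a(t,a) > \langle \s^\star, a'\rangle + \tsfrstare_a(t,a')$ for every $a' \neq a$, whence $A_t = a$. You simply spell out the two evaluations more explicitly and add the remark on compactness of $\cA_t$ ensuring the maximum is attained, which the paper takes for granted.
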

\begin{proof}[Proof of \Cref{lemma:good_action_chosen_contextual}.]
Note that for any $a' \in \cA_t, a' \ne a$, $\langle \s^\star, a'\rangle + \tsfrstare_a(t,a') < \langle \s^\star, a\rangle + \tsfrstare_a(t,a)$ and, as a result, $A_t = a$.
\end{proof}

\subsection{Proof of \Cref{theorem:regret_bound_contextual}}

\begin{proof}[Proof of \Cref{theorem:regret_bound_contextual}.]
Denote for any $t \in [T]$, $a_t^\principal \coloneqq \argmax_{a \in \cA_t} \{ \langle \theta^\star, a \rangle - \icvstar(t,a) \} = \argmax_{a\in \cA_t} \langle \theta^\star + \s^\star, a\rangle, a_t^\agent \coloneqq \argmax_{a\in \cA_t} \langle \s^\star, a\rangle$ and $\hat{a}_t^\agent = \argmax_{a\in \cA_t} \langle \hat{\s}_t, a\rangle$.
Note that if $\cE_t$ holds, for any $a_t^1, a_t^2 \in \cA_t, a_t^1 \ne a_t^2$, \Cref{assumption:bounded_in_unit_ball} gives that if
\begin{align*}
\diam\left(\cS_t, \frac{a_t^1-a_t^2}{\|a_t^1-a_t^2\|}\right) < \precision
\; , \text{ then } \; & \max_{s\in \cS_t} \langle s, a_t^1 - a_t^2 \rangle = \max_{s\in \cS_t} \ps{s}{\frac{a_t^1 - a_t^2}{\|a_t^1 - a_t^2\|}} \underbrace{\|a_t^1 - a_t^2\|}_{\leq 2} < 2 \cdot \precision \eqsp,
\end{align*}
and therefore $\diam\left(\cS_t, a_t^1-a_t^2\right) < 2/T$.

If $\cE_t$ does not hold, the incentive function proposed in \cref{algorithm:projected_volume} is given by $\tsfr(t, a)= 3\cdot \indi{a_t^1}(a) +(3+\langle \hat{\s}_t,a_t^1-a_t^2\rangle) \cdot  \indi{a_t^2}(a)$. Therefore: $\tsfr(t, a_t^1) = 3, \tsfr(t, a_t^2) = 3 + \langle \hat{\s}_t,a_t^1-a_t^2\rangle \leq 5$. When $\cE_t$ holds, the  incentive function proposed in $\calgU$ is defined as $\tsfr(t,a) = \1_{\Arec_t}(a)\hicv(t,\Arec_t) \leq 2$.

For any $t \in [T]$, we define the instantaneous regret reg$_t$ at $t$ as
\begin{equation*}
    \text{reg}_t \coloneqq \mu_t^\star - (X_{A_t}(t) - \tsfr(t,A_t)) \eqsp,
\end{equation*}
where $\mu_t^\star$ is defined in \eqref{equation:definition_mu_t} and we decompose the regret into two terms, making use of the Cauchy-Schwarz inequality as well as \Cref{assumption:bounded_in_unit_ball} to obtain
\begin{align*}
\regret(T) &= \E \parentheseDeux{\sum_{t=1}^T \1_{\{\cE_t\}}\reg_t + \sum_{t=1}^T \1_{\{\cE^\complementary_t\}}\reg_t} \\
& \leq \E \parentheseDeux{\sum_{t=1}^T \1_{\{\cE_t\}}\reg_t} + \E \parentheseDeux{ \sum_{t=1}^T \1_{\{\cE^\complementary_t\}}(\max_{a \in \cA_t} \langle \theta^\star + \s^\star, a\rangle - \max_{a' \in \cA_t} \langle \s^\star, a'\rangle - \langle \theta^\star, A_t\rangle + \tsfr(t,A_t))} \\
& \leq \E \parentheseDeux{\sum_{t=1}^T \1_{\{\cE_t\}}\reg_t} + \E \parentheseDeux{ \sum_{t=1}^T \1_{\{\cE_t^\complementary\}}(\max_{a \in \cA_t} \{\underbrace{\langle \theta^\star,a\rangle}_{\leq 1}+ \underbrace{\langle \s^\star, a-a_t^\agent \rangle\}}_{\leq 0} \underbrace{- \langle \theta^\star, A_t\rangle }_{\leq 1}+ \underbrace{\tsfr(t,A_t))}_{\leq 5}} \\
&\leq \E \parentheseDeux{\sum_{t=1}^T \1_{\{\cE_t\}}\reg_t} + 7 \, \E \parentheseDeux{\sum_{t=1}^T \1_{\{\cE^\complementary_t\}}} \eqsp.
\end{align*}
Using \Cref{lemma:boundedE}, we can bound the second term
\begin{equation*}
\E \parentheseDeux{\sum_{t=1}^T \1_{\{\cE_t^\complementary\}}} \leq 192 d\log (d T) \eqsp.
\end{equation*}
Now we bound the first term. Working on steps $t$ such that $\cE_t$ is true with incentive function $\estsfr_{\Arec_t}(t,\cdot) = \1_{\Arec_t}(\cdot)\hicv(t, \Arec_t)$, \Cref{lemma:enough_incentives_contextual} guarantees that $A_t = \Arec_t$, and we have
\begin{align*}
&\E \parentheseDeux{\sum_{t=1}^T \1_{\{\cE_t\}}\reg_t}
\\
&= \E\parentheseDeux{\sum_{t=1}^T\1_{\{\cE_t\}}\left(\max_{a \in \cA_t} \{\langle \theta^\star + \s^\star, a\rangle - \max_{a' \in \cA_t} \langle \s^\star, a'\rangle\} - \left\{\langle \theta^\star, \Arec_t \rangle -\left( \langle \hat{\s}_t, \hat{a}_t^{\agent}\rangle - \langle \hat{\s}_t, \Arec_t \rangle + \frac{2}{T}\right)\right\}\right)} \\
& = \E\parentheseDeux{\sum_{t=1}^T \1_{\{\cE_t\}} \left(\langle \theta^\star , a_t^\principal \rangle + \langle \s^\star , a_t^\principal\rangle - \langle \s^\star, a_t^\agent \rangle - \langle \theta^\star , \Arec_t \rangle + \langle \hat{\s}_t , \hat{a}_t^\agent \rangle - \langle \hat{\s}_t , \Arec_t \rangle \right) + \sum_{t=1}^T \1_{\{\cE_t\}} \frac{2}{T} }  \\
&\leq \E\parentheseDeux{\sum_{t=1}^T \1_{\{\cE_t\}}\left(\langle \theta^\star , a_t^\principal \rangle + \langle \s^\star , a_t^\principal\rangle - \langle \s^\star, a_t^\agent \rangle - \langle \theta^\star , \Arec_t \rangle \right)} +2T \precision \\
& \quad + \E\parentheseDeux{\sum_{t=1}^T \1_{\{\cE_t\}} \left( - \langle \s^\star, \Arec_t\rangle + \langle \s^\star, a_t^\agent \rangle  + \langle \hat{\s}_t , \hat{a}_t^\agent \rangle - \langle \hat{\s}_t , \Arec_t \rangle + \langle \s^\star, \Arec_t\rangle - \langle \s^\star, a_t^\agent \rangle \right)} \\
& = \E\parentheseDeux{\sum_{t=1}^T \1_{\{\cE_t\}} \left(\langle \theta^\star , a_t^\principal \rangle + \langle \s^\star , a_t^\principal\rangle - \langle \s^\star, a_t^\agent \rangle - \langle \theta^\star , \Arec_t \rangle - \langle \s^\star, \Arec_t\rangle + \langle \s^\star, a_t^\agent \rangle \right)} \\
& \quad + \E\parentheseDeux{\sum_{t=1}^T \1_{\{\cE_t\}} \left(\langle \hat{\s}_t , \hat{a}_t^\agent \rangle + \langle \s^\star - \hat{\s}_t, \Arec_t\rangle - \langle \s^\star, a_t^\agent \rangle \right)} + 2 \eqsp.
\end{align*}
Since $\langle \s^\star, a_t^\agent \rangle \geq \langle \s^\star, \hat{a}_t^\agent \rangle$, we have $- \langle \s^\star, a_t^\agent \rangle \leq - \langle \s^\star, \hat{a}_t^\agent \rangle$. Therefore
\begin{align*}
&\E\parentheseDeux{\sum_{t=1}^T \1_{\{\cE_t\}} \left(\langle \hat{\s}_t , \hat{a}_t^\agent \rangle + \langle \s^\star - \hat{\s}_t, \Arec_t\rangle - \langle \s^\star, a_t^\agent \rangle \right)}
\\
&\leq \E\parentheseDeux{\sum_{t=1}^T \1_{\{\cE_t\}} \left(\langle \hat{\s}_t , \hat{a}_t^\agent \rangle + \langle \s^\star - \hat{\s}_t, \Arec_t\rangle - \langle \s^\star, \hat{a}_t^\agent \rangle \right)} \\
& = \E\parentheseDeux{\sum_{t=1}^T \1_{\{\cE_t\}} \left( \langle \s^\star - \hat{\s}_t, \Arec_t\rangle - \langle \s^\star - \hat{\s}_t, \hat{a}_t^\agent \rangle \right)} \\
& = \E\parentheseDeux{\sum_{t=1}^T \1_{\{\cE_t\}} \left( \langle \s^\star - \hat{\s}_t, \Arec_t - \hat{a}_t^\agent \rangle \right)} \\
& \leq \sum_{t=1}^T \1_{\{\cE_t\}} \1\{\Arec_t \ne \hat{a}_t^\agent\} \underbrace{\|\Arec_t - \hat{a}_t^\agent \|}_{\leq 2} \diam\left(\cS_t, \frac{\Arec_t - \hat{a}_t^\agent}{\|\Arec_t - \hat{a}_t^\agent\|}\right)  < 2 T \cdot \precision  = 2 \eqsp,
\end{align*}
and plugging this inequality gives
\begin{align*}
\E \parentheseDeux{\sum_{t=1}^T \1_{\{\cE_t\}}\reg_t}
& \leq \E\parentheseDeux{\sum_{t=1}^T \1_{\{\cE_t\}} \left(\langle \theta^\star + \s^\star , a_t^\principal\rangle - \langle \s^\star, a_t^\agent \rangle - \langle \theta^\star + \s^\star, \Arec_t\rangle + \langle \s^\star, a_t^\agent \rangle \right)} +4 \\
& = \E\parentheseDeux{\sum_{t \in I_T} \max_{a\in \cA_t}\{ \langle \theta^\star + \s^\star, a\rangle - \max_{a' \in \cA_t} \langle \s^\star, a'\rangle \} } - \E\parentheseDeux{\sum_{t \in I_T}\langle \theta^\star+ \s^\star, \Arec_t\rangle - \max_{a\in \cA_t} \langle \s^\star, a\rangle} +4 \\
& = R^{\text{corrupt}}_{\CAlg}(I_T, \epscorruption_{I_T}) + 4 \eqsp,
\end{align*}
where $R^{\text{corrupt}}_{\calgU}$ is defined in \eqref{equation:Rcorrupt_definition} with $\icvstar(t,a) = \max_{a' \in \cA_t} \ps{\s^\star}{a'} - \ps{\s}{a}$. Plugging all the terms together gives the following upper-bound for the regret:
\begin{equation*}
\regret(T) \leq 1344 \, d\log(dT) + 4 + R^{\text{corrupt}}_{\CAlg}(I_T, \epscorruption_{I_T})  \eqsp.
\end{equation*}
\end{proof}

\begin{proof}[Proof of \Cref{corollary:regret_bound_contextual}]
    Here, \Cref{lemma:shifted_reward_bound} guarantees that $\corruption = 4$. With the setup of \citet{he2022nearly}, we take $L=1, S=1, R=1$, and $\alpha = \sqrt{d}/4$.
    
    Choose $\lambda = 1$ as a regularization parameter and $\probreg = 1/T$ as a confidence level. Using $\Corralg$ proposed in \citet{he2022nearly} as a subroutine robust to corruption in the stochastic linear case, since our subroutine is fed with the same reward $r'$ as in their model while $R^{\text{corrupt}}_{\CAlg}(I_T, \epscorruption_{I_T})$ is defined compared to the true reward $r$, we can use the result provided in \citet[][Theorem 4.2]{he2022nearly} to bound $R^{\text{corrupt}}_{\CAlg}(I_T, \epscorruption_{I_T})$ with probability $1 - 1/T$ and use the fact that our instantaneous regret is always bounded by $7$ as it is shown in the proof of \Cref{theorem:regret_bound_contextual} to get in expectation for some universal constant $B>0$ 
    \begin{align*}
        R^{\text{corrupt}}_{\CAlg}(I_T, \epscorruption_{I_T})  &\leq (1- \probreg)B \left(2d\sqrt{T}\log\left(\frac{1+T}{\probreg}\right)+ \sqrt{d\lambda T} \sqrt{\log \left(1+ T \right)} + 4 d \sqrt{\log\left(\frac{1+T}{\probreg}\right)^3}\right) + 7T \probreg \\
        & \leq \left(1-\frac{1}{T}\right)B \left(2d\sqrt{T}\log\left(\frac{1+T}{\frac{1}{T}}\right)+ \sqrt{d T} \sqrt{\log \left(\frac{1+ T}{\frac{1}{T}} \right)} +4d \sqrt{\log\left(\frac{1+T}{\frac{1}{T}}\right)^3}\right) +7 T \frac{1}{T}\\
        & \leq \left(1-\frac{1}{T}\right) B \left(2 d\sqrt{T}\log \left(T+T^2 \right) + \sqrt{dT}\sqrt{\log \left(T+T^2 \right)} + 4 d \left(\log \left(T+T^2 \right)\right)^{\frac{3}{2}} \right) +7 \eqsp,
    \end{align*}
    therefore, since $\corruption \leq 4$, there exists a constant $C_{\CAlg}$ such that
    \begin{equation*}
    R^{\text{corrupt}}_{\CAlg}(I_T, \epscorruption_{I_T})  \leq 7 + C_{\CAlg} d \sqrt{T} \log T \eqsp.
    \end{equation*}
    Finally plugging this term in the bound from \Cref{theorem:regret_bound_contextual} and integrating the $3$ factor in constant $B$ gives the result
    \begin{equation*}
    \regret(T) \leq 11 + 1344 \, d \log (dT) + C_{\CAlg} d \sqrt{T} \log T \eqsp.
    \end{equation*}
\end{proof}

\section{Algorithms}
\label{appendix:algorithms}
\subsection{\texttt{UCB} subroutine}

We present the \texttt{Binary search} subroutine and the $\texttt{UCB}$ algorithm that we use as a subroutine in $\algU$ as formulated in \citet[][Algorithm~3]{lattimore2020bandit}.

\begin{algorithm}[!ht]
\caption{\texttt{Binary Search Subroutine}}\label{algorithm:binary_search}
\begin{algorithmic}[1]
    \STATE {\bfseries Input:} action $a, \bsrounds$ 
    \STATE {\bfseries Initialize:} $\licv_a(0), \uicv_a(0)=0,1$
    \FOR{$t = 1, \ldots,\bsrounds$}
        \STATE $\icvmid_a(t-1) = \frac{\uicv_a(t-1) + \licv_a(t-1)}{2}$
        \STATE Propose incentive $\icvmid_a(t-1)$ on arm $a$
        \STATE \textbf{If} $A_{t-1}=a$ \textbf{then} $\uicv_a(t) = \icvmid_a(t-1)$ and $\licv_a(t) = \licv_a(t-1)$ \textbf{else} $\: \licv_a(t) = \icvmid_a(t)$ and $\uicv_a(t) = \uicv_a(t-1)$
    \ENDFOR
    \STATE {\bfseries Return} $\licv_a(\bsrounds), \uicv_a(\bsrounds)$
\end{algorithmic}
\end{algorithm}

\begin{algorithm}[!ht]
\caption{\texttt{UCB Subroutine}}\label{algorithm:ucb}
\begin{algorithmic}[1]
    \STATE {\bfseries Input:} Set of arms $K$, horizon $T$
    \STATE {\bfseries Initialize:} For any arm $a \in [K]$, set $\emprew_a \coloneqq 0, \, T_a \coloneqq 0$
    \FOR{$1 \leq t \leq K$:}
        \STATE Pull arm $a = t$
        \STATE Update $\emprew_a = X_a(t), \, T_a(t)= 1$
    \ENDFOR
    \FOR{$t \geq K+1$}
        \STATE Pull arm $a_{\text{max}} \in \argmax_{a \in [K]} \left\{\emprew_a(t-1) + 2\sqrt{\frac{\log T}{T_a(t-1)}} \right\}$
        \STATE Update $T_a(t)=T_a(t-1)+1, \, \emprew_a(t) = \frac{1}{T_a(t)}(T_a(t-1)\emprew_a(t-1) + X_a(t))$
    \ENDFOR
\end{algorithmic}
\end{algorithm}

\subsection{\texttt{Projected volume algorithm}}

We present the \texttt{Projected volume algorithm} from \citet{lobel2018multidimensional} that we use as a subroutine in $\calgU$. For any horizon $T$ and $0 < \pvacst <T^{-2}/(16d(d+1)^2)$, this algorithm defines recursively a sequence $(\cS_t, \VV_t)_{t \in [T]}$ such that $(\cS_t)_{t \in [T]}$ is a sequence of decreasing subsets (for the inclusion) of $\oball(0,1)$ including $\s^\star$ and $(\VV_t)_{t \in [T]}$ is a sequence of increasing sets of $\rset^d$ containing orthogonal directions $\{v_i\}_{i \in [n]}$ along which the principal has a good knowledge of $\langle \s^\star, v_i\rangle$.

The main ingredient in \citet{lobel2018multidimensional} allowing low regret is \textit{cylindrification}. Given a compact convex set $\cS \subseteq \R^d$, $\VV = \{v_1, \ldots, v_n\}$, and $\Pi_{\VV^\perp}(\cS)$ being the orthogonal projection of $\cS$ onto $\VV^\perp$, we define the cylindrification of $S$ on $\VV$ as
\begin{align*}
\Cyl(S, \VV) &\coloneqq \Pi_L(\cS) + \Pi_{\text{span}(v_1)}(\cS) + \ldots + \Pi_{\text{span}(v_n)}(\cS) \\
&=\left\{ x + \sum_{i=1}^n y_i v_i \colon x \in \Pi_L(\cS),\min_{s \in \cS}\langle s, v_i \rangle \leq y_i \leq \max_{s \in \cS}\langle s, v_i \rangle \right\} \eqsp.
\end{align*}
At iteration $t$, given $(\cS_t, \VV_t)$, we define the estimate $\hat{\s}_t$ of $\s^\star$ as the centroid of $\Cyl(\cS_t, \VV_t)$:
\begin{equation}
\label{equation:centroid_definition}
    \hat{\s}_t \coloneqq \frac{1}{\Vol(\Cyl(\cS_t, \VV_t))} \int_{\Cyl(\cS_t, \VV_t)} x \rmd x 
    \eqsp,
\end{equation}
Note that the iterative construction of $\cS_t$ described below together with \Cref{lemma:convex_body_non_empty} guarantees that $\cS_t$ is always a convex body, making $\Vol(\cS_t)$ well-defined as the Lebesgue measure of $\cS_t$ in dimension $d$ and $\Vol(\cS_t)>0$. Combined with \Cref{lem:cyl_properties}, it guarantees that $\Vol(\Cyl(\cS_t, \VV_t))$ is well-defined, and $\Vol(\Cyl(\cS_t, \VV_t))>0$. Therefore, \eqref{equation:centroid_definition} is well-defined.

At iteration $t$, given $(\cS_t, \VV_t)$ and two actions $a_t^1, a_t^2 \in \cA_t$, recall that we defined $w_t$ as
\begin{equation*}
w_t \coloneqq \argmax\defEns{\diam \left(\cS_t, \frac{a_t^1-a_t^2}{\|a_t^1-a_t^2\|}\right) \,: \, \frac{a_t^1-a_t^2}{\|a_t^1-a_t^2\|} \text{ such that }a_t^1\ne a_t^2 \in \cA_t}\eqsp.
\end{equation*}
Then, the principal offers the incentive function $\tsfr(t,a)= 5 \cdot \indi{a_t^1}(a) +(5+\langle \hat{\s},a_t^1-a_t^2\rangle) \cdot  \indi{a_t^2}(a)$. Recall that $\tsfr(t,a)$ is always bounded by 5 and that $a_t^1, a_t^2$ are chosen such that $\langle \hat{\s}_t, a_t^1 - a_t^2\rangle \geq 0$, ensuring that we either have $A_t = a_t^1$ or $A_t = a_t^2$.

If $A_t = a_t^1$, it means that $\langle \s^\star, w_t \rangle \geq 0$ and we update $\cS_{t+1} = \cS_t \cap \{\s | \langle \s, w_t \rangle\geq x_t\}$. Otherwise, if $A_t = a_t^2$, it means that $\langle \s^\star, w_t \rangle \leq 0$ and we update $\cS_{t+1} = \cS_t \cap \{\s | \langle \s, w_t \rangle\leq x_t\}$. This defines the subset $\cS_{t+1}$ such that $\s^\star \in \cS_{t+1}$:
\begin{equation}
\label{equation:definition_S_t}
    \cS_{t+1} = \cS_t \cap (\1_{a_t^1}(A_t) \{\s | \langle \s, w_t \rangle\geq x_t\} + \1_{a_t^2}(A_t)\{\s | \langle \s, w_t \rangle\leq x_t\})
\end{equation}

Regarding the subspace $\VV_{t+1}$, we consider $\VV_{t+1} = \VV_{t+1}^{J_{t+1}}$ where $(\VV_{t+1}^i)_{i \in \N}$ is defined as
\begin{align}
    \nonumber
    \VV_{t+1}^{(0}) &\coloneqq \VV_t \\
    \label{equation:definition_V_t}
    V_{t+1}^{(i+1)} &\coloneqq \begin{cases} V_{t+1}^{(i)} \cup \{v\} \text{  if  } \exists \, v \in (\VV_{t+1}^{(i)})^\perp \; \colon \; \diam(\cS_{t+1}, v) \leq \pvacst \\ \VV_{t+1}^{(i)} \text{  otherwise} \end{cases} \\
    \label{equation:definition_J_t}
    J_{t+1} &\coloneqq \min \{ i \colon \text{ it does not exist } v \in (\VV_{t+1}^{(i)})^\perp \text{  such that  } \diam(\cS_{t+1}, v) \leq \pvacst \} \eqsp,
\end{align}
which exists since if it does not exist $i$ such that it does not exist $v \in (\VV_{t+1}^{(i)})^\perp \text{  such that  } \diam(\cS_{t+1}, v) \geq \pvacst$, we would have $\dim(\sppp(\VV_t^{(i+1)})) = \dim (\sppp (\VV_t^{(i)})) +1$ for any $i \in \N$, which would imply a contradiction.

In what follows, we provide technical results ensuring that $\cS_t$ is a convex body for any $t \in [T]$ and $\cS_t \subseteq \Cyl(\cS_t, \VV_t)$, which implies that $\Cyl(\cS_t, \VV_t)$ has non-empty interior.

\begin{lemma}
\label{lemma:convex_body_non_empty}
    Let $\cS$ be a convex body in $\R^d$ and $\s$ be a point in the interior of $\cS$: $\s \in \mathring{\cS}$. Let $\GG$ be the half-space defined by $\GG \coloneqq \{ x \in \R^d \colon \langle h^\star, x\rangle \geq 0 \}$,  for some $h^\star \in \R^d$. Suppose that $\s \in \msg$. Then $\cS \cap \GG$ is a convex body.
\end{lemma}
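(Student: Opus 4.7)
A convex body in $\R^d$ is a compact convex set with non-empty interior, so I need to establish three properties of $\cS \cap \GG$: convexity, compactness, and non-empty interior. The first two are immediate: $\cS \cap \GG$ is the intersection of two convex sets, hence convex; and $\cS$ is compact (as a convex body) while $\GG$ is closed (as a closed half-space), so $\cS \cap \GG$ is closed and bounded, hence compact.

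The only nontrivial point is verifying that $\cS \cap \GG$ has non-empty interior. Here I would exploit the hypothesis $\s \in \msg$, interpreted as $\s$ lying in the (relative) interior of $\GG$, i.e., $\langle h^\star, \s\rangle > 0$. Combined with $\s \in \mathring{\cS}$, this lets me exhibit an open ball around $\s$ lying in $\cS \cap \GG$. Concretely, there exists $r_1 > 0$ with $\ball{\s}{r_1} \subseteq \cS$, and setting $r_2 \coloneqq \langle h^\star, \s\rangle / \|h^\star\|$, one checks by Cauchy--Schwarz that any $x \in \ball{\s}{r_2}$ satisfies
\begin{equation*}
\langle h^\star, x\rangle = \langle h^\star, \s\rangle + \langle h^\star, x - \s\rangle \geq \langle h^\star, \s\rangle - \|h^\star\|\cdot\|x - \s\| > 0,
\end{equation*}
so $\ball{\s}{r_2} \subseteq \GG$. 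Taking $r \coloneqq \min(r_1, r_2) > 0$ yields $\ball{\s}{r} \subseteq \cS \cap \GG$, which proves that $\s$ lies in the interior of $\cS \cap \GG$.

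There is no real obstacle here; the lemma is a standard fact about convex geometry. The only subtlety worth flagging is notational: interpreting $\msg$ consistently with the hypothesis that $\s$ is a strictly feasible point for the half-space (rather than merely on its boundary), since otherwise the conclusion can fail — if $\langle h^\star, \s\rangle = 0$ and $\cS$ lies entirely on the boundary hyperplane of $\GG$, the intersection could have empty interior. Assuming the intended reading $\langle h^\star, \s\rangle > 0$, the argument is complete.
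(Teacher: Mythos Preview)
Your reading of the hypothesis is off, and this causes you to skip the only nontrivial case. In the paper's notation $\msg$ denotes the closed half-space $\GG$ itself (just as, in the proof, $\msh$ denotes the boundary hyperplane $\HH=\{x:\langle h^\star,x\rangle=0\}$); the assumption is merely $\langle h^\star,\s\rangle\geq 0$. Your argument treats the strict case $\langle h^\star,\s\rangle>0$, which the paper disposes of in one line (``the intersection of two open sets is also open''). The substance of the lemma is precisely the boundary case $\langle h^\star,\s\rangle=0$, which you declined to address.

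Moreover, your stated reason for declining is incorrect. You write that if $\langle h^\star,\s\rangle=0$ and ``$\cS$ lies entirely on the boundary hyperplane of $\GG$'' then the intersection could have empty interior. But $\cS$ is a convex \emph{body}, hence has non-empty $d$-dimensional interior, so it cannot lie in any hyperplane; this scenario is impossible. The boundary case does hold, and the paper proves it by a small shift: since $\s\in\mathring{\cS}$ there is a ball $\oball(\s,r)\subseteq\cS$; set $y_0=\s+\tfrac{r}{2}\,h^\star/\|h^\star\|$, so $\langle h^\star,y_0\rangle=r\|h^\star\|/2>0$, and check via the triangle inequality and Cauchy--Schwarz that $\oball(y_0,r/2)\subseteq\oball(\s,r)\cap\GG\subseteq\cS\cap\GG$. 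That gives the non-empty interior. Your compactness and convexity paragraphs are fine; what is missing is exactly this boundary argument.
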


\begin{proof}
Note that we only need to show that $\cS\cap \msg$ has non-empty interior since the intersection of two compact convex sets is compact and convex.

The only case that we consider is $\s \in\msh$ where $\HH$ is the hyperplane defined by $\HH \coloneqq \{ x \in \R^d \colon \langle h^\star, x\rangle =0 \}$. The other case simply follows from the fact that the intersection of two open sets is also open.

Since $\cS$ is a convex body and $\s \in \mathring{\cS}$, there exists a ball $\oball(\s,r)$ centered in $\s$ with $r>0$, such that $\oball(\s,r) \subseteq \mathring{\cS}$.
For any $x \in \oball(\s,r) \cap \GG$ is equivalent to $\langle h^\star, x-\s\rangle \geq 0$ since $\langle h^\star, \s \rangle =0$ and $\|x-\s\|\leq r$. 

Now we define $y_0 = \s + r h^\star /(2 \norm{h^\star})$ and consider the ball $\oball(y_0, r/2)$. For any $y \in \oball(y_0, r/2)$, we can write $y = y_0 + \tilde{y}$ with $\|\tilde{y}\| \leq r/2$. Using $\ps{h^\star}{\s} = 0$, we have $\norm{y - \s} \leq \norm{y_0-\s}+ \norm{\tilde{y}} \leq r $
and $    \langle h^\star, y \rangle = \langle h^\star, y_0 \rangle + \ps{h^\star}{\tilde{y}}$
with $\langle h^\star, y_0 \rangle = r/2$ and $\langle h^\star, \tilde{y} \rangle \geq - \|h^\star\| \cdot \|\tilde{y}\| \geq -1 \cdot r/2 = -r/2$. Therefore $\langle h^\star, y \rangle \geq 0$ and we obtain $y \in \oball(\s,r) \cap \GG$, which gives $\oball(y_0, r/2) \subseteq \oball(\s,r) \cap \GG$.

\end{proof}

\begin{lemma}\label{lem:cyl_properties}
    Given a convex body $\cS \subseteq \R^d$ and a set of orthonormal vectors $\VV = \{v_1, \ldots, v_n\} \subseteq \R^d$, let $\Pi_{\VV^\perp}(\cS)$ be the projection of $\cS$ on the subspace $\VV^\perp = \{ x\in \R^d \mid \langle x, v_i \rangle = 0\} $.
    Define the cylindrification of $S$ onto $\VV$ as
    \begin{align*}
        \Cyl(S, \VV) &\coloneqq \Pi_{\VV^\perp}(\cS) + \Pi_{\text{span}(v_1)}(\cS) + \ldots + \Pi_{\text{span}(v_n)}(\cS) \\
    &=\left\{ x + \sum_{i=1}^n y_i v_i | x \in \Pi_L(\cS), \min_{s \in \cS}\langle s, v_i \rangle \leq y_i \leq \max_{s \in \cS}\langle s, v_i \rangle \right\} \eqsp.
    \end{align*}
    Then it holds that $\cS \subseteq \Cyl(\cS, \VV)$.
\end{lemma}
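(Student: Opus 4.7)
\textbf{Proof plan for \Cref{lem:cyl_properties}.} The plan is to take an arbitrary point $s \in \cS$ and exhibit it explicitly as an element of the set $\Cyl(\cS, \VV)$ via its orthogonal decomposition with respect to $\VV$ and $\VV^\perp$. Since $\{v_1, \ldots, v_n\}$ is orthonormal, every $s \in \R^d$ admits the unique decomposition
\begin{equation*}
s = \Pi_{\VV^\perp}(s) + \sum_{i=1}^n \langle s, v_i\rangle v_i,
\end{equation*}
where $\Pi_{\VV^\perp}(s) = s - \sum_{i=1}^n \langle s, v_i\rangle v_i$ lies in $\VV^\perp$.

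The idea is to match this decomposition with the second (explicit) formulation of $\Cyl(\cS, \VV)$ given in the lemma statement. Specifically, given $s \in \cS$, I set $x \coloneqq \Pi_{\VV^\perp}(s)$ and $y_i \coloneqq \langle s, v_i \rangle$ for each $i \in [n]$. Two verifications remain: first, that $x \in \Pi_{\VV^\perp}(\cS)$, which is immediate since $x$ is by definition the projection of an element of $\cS$; and second, that $\min_{s' \in \cS}\langle s', v_i\rangle \leq y_i \leq \max_{s' \in \cS}\langle s', v_i\rangle$ for each $i$, which again is immediate because $s \in \cS$ is itself a competitor in both the min and the max. Thus $s = x + \sum_{i=1}^n y_i v_i$ satisfies the constraints defining $\Cyl(\cS, \VV)$, so $s \in \Cyl(\cS, \VV)$.

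There is no real obstacle here. The only point that requires a brief sanity check is the equality between the additive definition $\Pi_{\VV^\perp}(\cS) + \sum_i \Pi_{\sppp(v_i)}(\cS)$ and the explicit coordinate description in terms of intervals $[\min_{s' \in \cS}\langle s', v_i\rangle, \max_{s' \in \cS}\langle s', v_i\rangle]$; this holds because $\Pi_{\sppp(v_i)}(\cS) = \{\lambda v_i : \lambda \in \langle \cS, v_i\rangle\}$ and $\langle \cS, v_i\rangle$ is an interval (the continuous image of the connected set $\cS$), namely $[\min_{s' \in \cS}\langle s', v_i\rangle, \max_{s' \in \cS}\langle s', v_i\rangle]$ by compactness of $\cS$. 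With this observation in place, the inclusion $\cS \subseteq \Cyl(\cS, \VV)$ follows directly from the decomposition above, concluding the proof.
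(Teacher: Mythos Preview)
Your proposal is correct and follows essentially the same approach as the paper: decompose an arbitrary $s \in \cS$ as $s = \Pi_{\VV^\perp}(s) + \sum_{i=1}^n \langle s, v_i\rangle v_i$ and observe that each piece lies in the corresponding summand defining $\Cyl(\cS,\VV)$. Your write-up is in fact slightly more detailed than the paper's, which simply states the decomposition and says it ``allows us to conclude.''
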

\begin{proof}
    Define $\Pi_{\VV^\perp}$ as the orthogonal projector on $\VV^\perp$. Then we have for any $s \in \cS$
    \begin{equation*}
        s = \Pi_{\VV^\perp}s + (I - \Pi_{\VV^\perp})s \eqsp,
    \end{equation*}
    where $(I - \Pi_{\VV^\perp})$ is an orthogonal projector on the space $\text{span}(\{v_1,\ldots,v_n\})$, thus $(I - \Pi_{\VV^\perp})s = \sum_{i=1}^n y_i v_i$ for $y_i = \langle s, v_i \rangle$. This decomposition allows us to conclude.
\end{proof}

\begin{algorithm}[!ht]
\caption{$\pvaU$}\label{algorithm:projected_volume}
\begin{algorithmic}[1]
    \STATE {\bfseries Input:} $T, \pvacst, \cS_t \text{ such that } \diam(\cS_t) \geq \pvacst, \VV_t$, $a_t^1, a_t^2$
    \STATE Compute $\Cyl(\cS_t, \VV_t$) and its centroid $\hat{\s}_t$, $ w_t = \fracD{(a_t^1-a_t^2)}{\|a_t^1-a_t^2\|}$, $x_t = \langle \hat{\s}_t, w_t\rangle$, $\pvacst \in \left(0, \fracD{1}{16d(d+1)^2 T^2}\right)$
    \STATE Propose an incentive function $\tsfr(t, a)= 5 \cdot \indi{a_t^1}(a) +(5+\langle \hat{\s}_t,a_t^1-a_t^2\rangle) \cdot  \indi{a_t^2}(a)$
    \IF{$A_t = a_t^1$} 
        \STATE $S_{t+1} = \cS_t\cap \{s|\langle s,w_t\rangle \geq x_t$\}
    \ELSE{} 
        \STATE $S_{t+1} = \cS_t\cap \{s|\langle s,w_t\rangle \leq x_t$\}
    \ENDIF 
    \STATE Let $\VV_{t+1} = \VV_t$
    \IF{$\exists v \perp \VV_{t+1}$ such that diam$(\cS_{t+1}, v) \leq \pvacst$} 
        \STATE add $v$ to $\VV_t$
        \STATE Repeat this step as many times as necessary
    \ENDIF
    \STATE {\bfseries Output:} $S_{t+1}, \VV_{t+1}$.
\end{algorithmic}
\end{algorithm}
\section{Lower Bound}
\label{appendix:lower_bound}

\begin{proof}[Proof of \Cref{proposition:lower_bound}]
Suppose that the principal was to know $(s_a)_{a \in [K]}$. For any incentive $\icv(t)$ offered on action $a_t$ at round $t$, the agent selects his action following \eqref{eq:def_A_t}: $A_t = \argmax_{a \in [K]} \s_a + \1_{a_t}(a)\icv(t)$. The principal's expected reward is
    \begin{equation*}
        \theta_{A_t} - \1_{a_t}(A_t)\icv(t) \leq \theta_{A_t} - \icvstar_{A_t} = \mu_a \eqsp,
    \end{equation*}
by definition of $\icvstar_a \coloneqq \max_{a' \in [K]} \s_{a'} - \s_a$ as the infimal amount of incentive to be offered on action $a$ to make the agent choose it. Consequently, we have
    \begin{align*}
    \regret(T) & = T\, \mu^\star  - \sum_{t=1}^T \E\parentheseDeux{\theta_{A_t} -\indi{a_t}(A_t) \icv(t) } \\
    &\geq \E\parentheseDeux{\sum_{t=1}^T \mu^\star - (\theta_{A_t} - \icvstar_{A_t})} \\
    & \geq \E\parentheseDeux{\sum_{t=1}^T \mu^\star - \mu_{A_t} }\eqsp.
\end{align*}
Assuming the principal knows $(\s_a)_{a \in [K]}$, observing $X_a(t)$ is equivalent to observing $X_a(t) - \icvstar_a$. Using the result of \citet{burnetas1996optimal} \citep[see, e.g.,][Theorem 16.2.]{lattimore2020bandit}, it then comes
\begin{align*}
\liminf_{T \to \infty} \frac{\regret(T)}{\log T} &\geq \sum_{a,\mu_a<\mu^{\star}} \frac{\mu^{\star}-\mu_a}{\mathrm{KL}_{\inf}(\nu_a - \icvstar_a,\mu^{\star},\cD)} \eqsp.
\end{align*}
\end{proof}

\end{document}